\documentclass[11pt]{article}

\usepackage[margin=1in]{geometry}
\usepackage[numbers,compress]{natbib}
\usepackage[utf8]{inputenc} 
\usepackage[T1]{fontenc}    
\usepackage{hyperref}       
\usepackage{url}            
\usepackage{booktabs}       
\usepackage{amsfonts}       
\usepackage{nicefrac}       
\usepackage{microtype}      
\usepackage[table]{xcolor}  
\usepackage{amsmath, amssymb, amsthm}
\usepackage{mathtools}
\usepackage{algorithm}
\usepackage{algorithmic}
\usepackage{enumitem}
\usepackage{graphicx}
\usepackage{subcaption}
\usepackage{tabularx}
\usepackage{array}
\usepackage{longtable}
\usepackage{multirow}
\usepackage{placeins}
\usepackage{titletoc}
\usepackage{etoc}

\theoremstyle{plain}
\newtheorem{theorem}{Theorem}[section]
\newtheorem{lemma}[theorem]{Lemma}
\newtheorem{proposition}[theorem]{Proposition}
\newtheorem{corollary}[theorem]{Corollary}

\theoremstyle{definition}
\newtheorem{definition}[theorem]{Definition}

\theoremstyle{remark}
\newtheorem{remark}[theorem]{Remark}

\newcommand{\calS}{\mathcal{S}}

\newcommand{\Pmeas}{\mathbb{P}}
\newcommand{\Qmeas}{\mathbb{Q}}
\newcommand{\E}{\mathbb{E}}
\newcommand{\R}{\mathbb{R}}
\newcommand{\pdata}{p_{\text{data}}}
\newcommand{\pref}{p_{\text{prior}}}

\newcommand{\Rthetamat}{R^{\theta}}
\newcommand{\Rhatmat}{\hat{R}}
\newcommand{\lambdafwd}{\lambda^{\text{fwd}}}
\newcommand{\lambdatheta}{\lambda^{\theta}}
\newcommand{\lambdahat}{\hat{\lambda}}
\newcommand{\dd}{\mathrm{d}}
\newcommand{\KLdiv}{\mathrm{KL}}
\newcommand{\Cat}{\text{Cat}}
\newcommand{\Poi}{\text{Poi}}
\newcommand{\indicator}{\mathbf{1}}
\newcommand{\given}{\mid}
\definecolor{oldred}{HTML}{F7C6C6}
\definecolor{newgreen}{HTML}{BFF0CA}
\definecolor{headergray}{HTML}{F1F1F1}
\newcommand{\oldword}[1]{\begingroup\setlength{\fboxsep}{1pt}\colorbox{oldred}{\strut #1}\endgroup}
\newcommand{\newword}[1]{\begingroup\setlength{\fboxsep}{1pt}\colorbox{newgreen}{\strut #1}\endgroup}
\newcolumntype{Y}{>{\raggedright\arraybackslash}X}
\let\standardsum\sum
\renewcommand{\sum}{\standardsum\limits}

\title{Neural Continuous-Time Markov Chain: Discrete Diffusion via Decoupled Jump Timing and Direction}

\author{%
  \textbf{Jingyuan Li}$^{2,3,4}$\thanks{Equal contribution.}
  \quad
  \textbf{Xiaoyi Jiang}$^{1,4}$\footnotemark[1]
  \quad
  \textbf{Fukang Wen}$^{1,4}$
  \quad
  \textbf{Wei Liu}$^{3}$ \\
  \textbf{Renqian Luo}
  \quad
  \textbf{Yi Zhu}$^{1,2,4}$
  \quad
  \textbf{Zuoqiang Shi}$^{1,2,4}$
  \quad
  \textbf{Pipi Hu}$^{2,4}$\thanks{Corresponding author.} \\[0.5ex]
  $^{1}$Tsinghua University
  \quad
  $^{2}$Beijing Institute of Mathematical Sciences and Applications \\
  $^{3}$Wuhan University
  \quad
  $^{4}$MathonAI
}

\date{}

\begin{document}

\maketitle

\begin{abstract}
Discrete diffusion models based on continuous-time Markov chains (CTMCs) have shown strong performance on language and discrete data generation, yet existing approaches typically parameterize the reverse rate matrix monolithically---through proxies such as concrete scores (SEDD) or clean-data predictions (MDLM, GIDD)---rather than aligning the parameterization with the intrinsic CTMC decomposition into jump timing and jump direction. We propose \textbf{Neural CTMC}, which exploits the underlying Poisson structure of CTMC dynamics by separately parameterizing the reverse process through an \emph{exit rate} (when to jump) and a \emph{jump distribution} (where to jump) via two dedicated network heads. We show that the evidence lower bound (ELBO) reduces to a path-space KL divergence between the true and learned reverse processes that factorizes into a Poisson KL for timing and a categorical KL for direction, and admits a tractable, gradient-equivalent and consistent loss. Experimentally, scored by Gemma2-9B, our pure-uniform Neural CTMC achieves $16.36$ generative perplexity on TinyStories (vs.\ GIDD $37.60$ and MDLM $42.66$). On OpenWebText, it attains the best perplexity at the same training-token budget across 16--128 sampling steps among the methods we compare (e.g., at 128 steps: Neural CTMC $183.6$ vs.\ MDLM $210.5$ and GIDD $249.8$). To facilitate reproducibility, we release our pretrained weights at \url{https://huggingface.co/Jiangxy1117/Neural-CTMC}.
\end{abstract}

\section{Introduction}
\label{sec:intro}

Diffusion models have achieved remarkable success in continuous domains such as images~\citep{sohl2015deep, ho2020denoising, song2021scorebased} and audio, establishing score-based generative modeling as a dominant paradigm. Extending this success to discrete data---language~\citep{li2022diffusionlm, gulrajani2024likelihood}, proteins~\citep{alamdari2023protein, avdeyev2023dirichlet}, and molecular graphs~\citep{vignac2023digress}---is an active and challenging research direction, since the continuous score function and Langevin dynamics that underpin continuous diffusion have no direct discrete counterpart.

Discrete diffusion models formulate the forward corruption and reverse generation as continuous-time Markov chains (CTMCs) on a finite state space $\calS = \{1,\ldots,S\}$. D3PM~\citep{austin2021structured} and Multinomial Diffusion~\citep{hoogeboom2021argmax} initiated this line by introducing discrete transition matrices in discrete time. \citet{campbell2022continuous} extended the framework to continuous time, deriving a variational bound from CTMC path measures. Subsequent work has explored different parameterizations of the reverse rate matrix $\Rthetamat_t(i,j)$: SEDD~\citep{lou2024discrete} learns a concrete score ratio $q_t(j)/q_t(i)$ analogous to continuous score matching~\citep{hyvarinen2005estimation, meng2022concrete}; MDLM~\citep{sahoo2024simple} simplifies training via clean-data prediction ($x_0$-parameterization) under a masked (absorbing) forward process; GIDD~\citep{ruette2025gidd} extends the $x_0$-parameterization to unify absorbing, uniform, and interpolating forward processes under a general framework; and concurrent efforts have explored discrete flow matching~\citep{campbell2024generative, gat2024discrete} and simplified masked diffusion~\citep{shi2024simplified, ou2025absorbdiscrete}. These methods have achieved increasingly strong results on language modeling benchmarks.

Despite this progress, a fundamental structural property of CTMCs has not been directly reflected in the reverse-process parameterization. Any rate matrix admits a unique factorization
\begin{equation*}
R_t(i,j) \;=\; \lambda_t(i) \cdot r_t(j \mid i), \qquad \lambda_t(i) = \textstyle\sum_{j \neq i} R_t(i,j),\quad r_t(j \mid i) = R_t(i,j)/\lambda_t(i),
\end{equation*}
where $\lambda_t(i)$ is the \emph{exit rate} governing \emph{when} to jump and $r_t(\cdot \mid i)$ is the \emph{jump distribution} governing \emph{where} to jump. This exit-rate/jump-distribution decomposition is precisely the basis of Gillespie's algorithm~\citep{gillespie1977exact, anderson2007modified} for exact CTMC simulation: first draw a holding time from the exit rate, then choose the next state from the jump distribution. Despite being classical, this structure has not been reflected in the reverse-process parameterization of discrete diffusion models---existing methods still learn the reverse process through proxy quantities such as score ratios, clean-data predictions, or denoising distributions, which implicitly determine $\Rthetamat_t$ as a monolithic object. Inspired by the algorithmic decomposition in Gillespie's algorithm, our starting point is that the training objective itself also decomposes along this timing/direction structure, suggesting a parameterization and training rule that make this separation explicit.

Moreover, the two mainstream forward processes in discrete diffusion are masked (absorbing) and uniform. The masked forward process is dominant in current state-of-the-art methods~\citep{sahoo2024simple, ruette2025gidd}, but its absorbing nature means a token, once unmasked during generation, is frozen for the rest of sampling, limiting diversity and precluding self-correction~\citep{ruette2025gidd}. The uniform forward process avoids this limitation by allowing transitions among all $S$ states, yet it has received comparatively little attention and generally yields weaker results~\citep[e.g.,][]{lou2024discrete, austin2021structured, ruette2025gidd}, among others. GIDD~\citep{ruette2025gidd} partially alleviates these gaps by interpolating between masked and uniform forward processes, but its modification lies on the \emph{forward} (noising) side; the reverse process is still parameterized monolithically through a clean-data predictor. In contrast, we keep the forward process simple and instead modify the \emph{reverse} side, decomposing the reverse rate into its intrinsic timing and direction components.

Building on this perspective, we propose \textbf{Neural CTMC}, which addresses both gaps from the reverse-process side. First, we separately parameterize the exit rate $\lambdatheta_t$ and the jump distribution $r^\theta_t$ for the reverse process via two network heads. We show that the $\theta$-dependent part of the ELBO is equivalent to a reverse-process KL divergence up to a $\theta$-independent constant, and that this KL naturally decomposes as
\begin{equation*}
\KLdiv^{\Poi}(\lambdahat \| \lambdatheta) \;+\; \lambdahat \cdot \KLdiv^{\Cat}(\hat{r} \| r^\theta),
\end{equation*}
a Poisson KL for jump timing and a categorical KL for jump direction. This decomposition is not introduced ad hoc at the architecture level; rather, it is induced by the CTMC path-space objective itself, which suggests learning timing and direction with separate heads. Notably, when specialized to the masked (absorbing) forward process, our objective recovers the MDLM loss exactly, and offers a new perspective on MDLM: its $x_\theta$ predictor is precisely our jump-distribution head $r^\theta$, while the exit rate $\lambdatheta$ collapses to a schedule-determined constant---showing that the decomposed formulation strictly generalizes existing approaches. \textbf{Second, equipped with this decomposed parameterization, Neural CTMC with a pure uniform forward process surpasses several state-of-the-art masked-based methods on the benchmarks we study.}

\paragraph{Contributions.}

\begin{enumerate}[leftmargin=*]
    \item \textbf{A reverse-process parameterization aligned with CTMC structure.} We parameterize the reverse CTMC through separate exit-rate and jump-distribution heads, motivated by the intrinsic timing/direction factorization of CTMC dynamics (Sections~\ref{sec:background} and~\ref{sec:theory}).

    \item \textbf{A decomposition of the reverse-process objective.} We derive a clean form of the path-space ELBO for CTMC-based discrete diffusion, showing that it reduces to a reverse-process KL that splits into a Poisson KL for exit rates and a categorical KL for jump directions. Similarly, its tractable conditional surrogate preserves gradients and minimizers under standard regularity assumptions (Section~\ref{sec:theory}).

    \item \textbf{Strong empirical performance with a pure uniform forward process.} On TinyStories, Neural CTMC achieves generative perplexity $16.36$, compared with $37.60$ for GIDD and $42.66$ for MDLM under our evaluation protocol. On OpenWebText, it achieves the best equal-budget generative perplexity among the methods we evaluate across 16 to 128 sampling steps; it also remains competitive with SEDD despite using $2.6\times$ fewer training tokens (Section~\ref{sec:experiments}). We release open-source checkpoints for our pure uniform-noise discrete diffusion model.
\end{enumerate}

\section{Background}
\label{sec:background}

\subsection{Continuous-Time Markov Chains}
\label{sec:bg_ctmc}

Let $\calS = \{1, \ldots, S\}$ be a finite state space. A CTMC on $\calS$ over $[0, T]$~\citep{norris1997markov} is characterized by a rate matrix $R_t$ with $R_t(i,j) \geq 0$ for $i \neq j$ and $\sum_j R_t(i,j) = 0$.
\begin{definition}
\label{def:forward_transition}
For some start time $s$ and end time $t = s + \Delta$ $(t > s)$ as $\Delta \to 0$, we have
\begin{equation}
\label{eq:forward_transition}
q_{t|s}(j \given i) = \delta_{i,j} + R_t(i,j)\,\Delta + o(\Delta),
\end{equation}
where $R_t$ is called the forward transition rate.
\end{definition}

The Kolmogorov forward equation $\frac{\dd}{\dd t} q_t = q_t R_t$ yields transition probabilities $P_{t|s}$~\citep{campbell2022continuous}. For the uniform-rate forward process with $R_t(i,j) = \beta_t$ for $i \neq j$, the cumulative transition probabilities take a closed form.
\begin{definition}
\label{def:cumulative_transition}
The cumulative transition probabilities of the CTMC are given by
\begin{equation}
\label{eq:cumulative_transition}
q_{t|0}(j \given i) = \mathrm{Cat}(j;\, P_t e_i), \qquad P_t = \alpha_t I + \beta_t \pi \mathbf{1}^\top,
\end{equation}
where $\alpha_t + \beta_t = 1$ with $\alpha_0 = 1, \alpha_T = 0$, $\pi$ is a fixed distribution (e.g., the uniform distribution $\pi = \frac{1}{S}\mathbf{1}$, or a mask distribution $\pi = e_{\texttt{[M]}}$ where $e_{\texttt{[M]}}$ is the one-hot vector for the mask token; $\pi$ may also depend on $t$), and $\mathbf{1}$ is the all-ones vector. As $t \to T$, $q_{t|0}(\cdot \given i) \to \pi$ for all $i$, so the prior distribution is $\pref = \pi$.
\end{definition}

\paragraph{Path measure.} C\`adl\`ag (right-continuous with left limits) paths provide a general framework for stochastic processes with jumps, encompassing L\'evy processes, semimartingales, and others. A CTMC corresponds to piecewise constant c\`adl\`ag paths and is characterized by a path measure $\Qmeas_{x_0}$. A trajectory $W$ drawn from the CTMC can be fully described through its jump times $T_1, \ldots, T_n$ and its state values between jumps $W_0, W_1, \ldots, W_n$, where at jump time $T_k$ the CTMC jumps from state $W_{k-1}$ to $W_k$.

\begin{definition}
\label{def:path_space_measure}
A path space measure $\Qmeas_{x_0}$ assigns probabilities to a drawn trajectory $W$ from time $0$ to $T$ in the sense of
\begin{equation*}
\Qmeas_{x_0}(W \in \dd\omega) := \Pmeas\!\left(W_0 \in \dd\omega_0,\, (T_1, W_{T_1}) \in \dd(t_1, \omega_1),\, \ldots,\, (T_n, W_{T_n}) \in \dd(t_n, \omega_n),\, T_{n+1} \geq T\right),
\end{equation*}
where $\dd\omega_k$ and $\dd t_k$ denote infinitesimal neighborhoods around the points $\omega_k \in \calS$ and $t_k \in [0, T]$.
\end{definition}

Writing $x_0, x_1, \ldots, x_n$ for the state values and $t_1, \ldots, t_n$ for the jump times, with $X_t = x_k$ for $t \in [t_k, t_{k+1})$ (where $t_0 := 0$ and $t_{n+1} := T$), the path measure density takes the following form.

\begin{lemma}
\label{lem:path_measure}
Let $\Qmeas_{x_0}$ denote the path measure induced by a CTMC with rate matrix $R_t$ starting from initial state $x_0$. For a path $\omega$ with jumps at times $0 \leq t_1 < \cdots < t_n < T$ and states $x_0 \to x_1 \to \cdots \to x_n$:
\begin{equation}
\label{eq:path_density}
\dd\Qmeas_{x_0}(\omega) = \underbrace{\prod_{k=1}^n R_{t_k}(x_{k-1}, x_k)}_{\text{jump rates}} \cdot \underbrace{\exp\!\left(-\int_0^T \lambda_t(X_t) \, \dd t\right)}_{\text{survival probability}},
\end{equation}
where $\lambda_t(i) := \sum_{j \neq i} R_t(i,j)$ is the total transition rate out of state $i$.
\end{lemma}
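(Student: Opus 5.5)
We will derive the density by building the path from its holding-time/jump-chain (Gillespie) description. This is the same timing/direction factorization used throughout the paper, and it is cleaner than a discretization limit. The density is understood with respect to the reference measure on paths that is counting measure on states times Lebesgue measure $\dd t_1\cdots\dd t_n$ on ordered jump times; this is the meaning of the infinitesimal neighbourhoods in Definition~\ref{def:path_space_measure}. The construction proceeds as follows. Given $X_{t_{k-1}} = x_{k-1}$, let the next jump time $T_k$ have hazard $\lambda_t(x_{k-1})$ for $t > t_{k-1}$. The new state is then drawn from $r_{T_k}(\cdot\mid x_{k-1}) = R_{T_k}(x_{k-1},\cdot)/\lambda_{T_k}(x_{k-1})$. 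By the strong Markov property this stage-wise description reproduces the CTMC defined by the short-time transitions of Definition~\ref{def:forward_transition}.

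\textbf{Step 1 (holding time).} We show that the survival function of the holding time in state $i$ started at $s$ is $\exp\bigl(-\int_s^t \lambda_u(i)\,\dd u\bigr)$. Let $G(t) = \Pmeas(\text{no jump in }(s,t]\mid X_s=i)$. Equation~\eqref{eq:forward_transition} gives $G(t+\Delta) = G(t)\bigl(1 - \lambda_t(i)\Delta + o(\Delta)\bigr)$, so $G' = -\lambda_t(i) G$ with $G(s)=1$. This is the step needing the most care: we need uniformity of the $o(\Delta)$ term, which holds because $S$ is finite and $R_t$ is assumed bounded and piecewise continuous in $t$. The density of the jump time at $t_k$ is therefore $\lambda_{t_k}(x_{k-1})\exp\bigl(-\int_{t_{k-1}}^{t_k}\lambda_u(x_{k-1})\,\dd u\bigr)$.

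\textbf{Step 2 (direction).} Conditional on a jump at $t$ from $i$, the probability of landing in $j$ equals $\lim_{\Delta\to 0}\frac{R_t(i,j)\Delta}{\lambda_t(i)\Delta}$. This limit is $r_t(j\mid i)$, again by \eqref{eq:forward_transition}.

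\textbf{Step 3 (assemble).} We chain the conditional densities via the Markov property. For $k=1,\dots,n$ each stage contributes a factor
\[
\lambda_{t_k}(x_{k-1})\,e^{-\int_{t_{k-1}}^{t_k}\lambda_u(x_{k-1})\dd u}\,r_{t_k}(x_k\mid x_{k-1}) = R_{t_k}(x_{k-1},x_k)\,e^{-\int_{t_{k-1}}^{t_k}\lambda_u(X_u)\dd u}.
\]
The final event $T_{n+1}\ge T$ contributes $e^{-\int_{t_n}^{T}\lambda_u(x_n)\dd u}$ by Step~1. Multiplying these factors, the exponentials telescope into $\exp\bigl(-\int_0^T\lambda_t(X_t)\,\dd t\bigr)$, since $X_u = x_k$ on $[t_k,t_{k+1})$. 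The rate factors give $\prod_k R_{t_k}(x_{k-1},x_k)$, which is exactly \eqref{eq:path_density}.
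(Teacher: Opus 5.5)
Your proposal is correct and follows essentially the same route as the paper's proof: a holding-time density $\lambda_{t_k}(x_{k-1})\exp\bigl(-\int_{t_{k-1}}^{t_k}\lambda_u(x_{k-1})\,\dd u\bigr)$ times the jump probability $R_{t_k}(x_{k-1},x_k)/\lambda_{t_k}(x_{k-1})$ at each stage, a final survival factor on $[t_n,T]$, and telescoping of the exponentials into $\exp\bigl(-\int_0^T\lambda_t(X_t)\,\dd t\bigr)$. The only difference is that you derive the exponential survival law and the jump distribution from the short-time transition definition and make the reference measure explicit, whereas the paper simply asserts these.
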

The proof is given in Appendix~\ref{app:proof_path_measure}. Equation~\eqref{eq:path_density} will serve as the starting point of our approach: we treat $\lambda_t$ and $r_t$ as the two natural targets of approximation, and learn them with separate network heads $\lambdatheta_t$ and $r^\theta_t$ via the path-measure objective derived in Section~\ref{sec:theory}.

\subsection{Discrete Diffusion and Existing Parameterizations}
\label{sec:bg_discrete_diffusion}

A CTMC-based discrete diffusion model consists of a \emph{forward} process and a \emph{reverse} (generative) process. The forward process corrupts data $X_0 \sim \pdata$ via a CTMC with rate matrix $R_t$, yielding conditionals $q_{t|0}(x \given x_0) := \Pmeas(X_t = x \given X_0= x_0)$ and marginals $q_t(x) := \E_{x_0 \sim \pdata}[q_{t|0}(x \given x_0)]$, with $q_t \to \pref$ as $t \to T$. A key structural property of the rate matrix is the following decomposition.

\paragraph{Exit rate and jump distribution.} Any rate matrix admits the unique decomposition $R_t(i,j) = \lambda_t(i) \cdot r_t(j \given i)$, where
\begin{equation}
\label{eq:exit_jump}
\lambda_t(i) := \sum_{j \neq i} R_t(i,j) \;\; \text{(exit rate)}, \qquad r_t(j \given i) := \frac{R_t(i,j)}{\lambda_t(i)} \;\; \text{(jump distribution)}.
\end{equation}
$\lambda_t(i)$ governs the holding time and $r_t(\cdot \given i)$ is a categorical distribution over jump destinations. This factorization separates \emph{when} to jump from \emph{where} to jump---a structural property of any CTMC.

\paragraph{Reverse process.} The time-reversal of the forward process is a CTMC with rate matrix~\citep{campbell2022continuous}:
\begin{equation}
\label{eq:reverse_rate}
\Rhatmat_t(i,j) = R_t(j,i) \frac{q_t(j)}{q_t(i)}, \quad i \neq j,
\end{equation}
A neural network parameterizes a reverse CTMC with rate matrix $\Rthetamat_t(i,j)$ starting from $X_T \sim \pref$, with the goal of minimizing an upper bound on the negative log-likelihood $-\E_{x_0 \sim \pdata}[\log p_\theta(x_0)]$.
\label{sec:bg_param}

\paragraph{Existing parameterizations.} Prior work parameterizes $\Rthetamat_t$ through different proxy quantities, each implicitly determining the full reverse rate matrix:
\begin{itemize}[leftmargin=*, itemsep=2pt]
    \item \textbf{Concrete score}~\citep{lou2024discrete}: parameterize $s_\theta(x_t, t)_y \approx \frac{q_t(y)}{q_t(x_t)}$, recover $\Rthetamat_t(i,j) = R_t(j,i) \cdot s_\theta(i,t)_j$. Trained via score entropy.
    \item \textbf{Denoising distribution}~\citep{campbell2022continuous}: parameterize $p_\theta(x_0 \given x_t)$, construct $\Rthetamat_t(i,j) = \sum_{x_0} p_\theta(x_0 \given i) \, R_t(j,i) \, \frac{q_{t|0}(j \given x_0)}{q_{t|0}(i \given x_0)}$.
    \item \textbf{Clean data prediction} ($x_0$-param.)~\citep{austin2021structured, sahoo2024simple, ruette2025gidd}: the model distribution $p_\theta(z_s \given z_t) = q_{t|s}(z_t \given z_s)\, \frac{q_s(z_s \given x_\theta)}{q_t(z_t \given x_\theta)}$, where $x_\theta(z_t, t)$ is a neural network that predicts the clean data.
\end{itemize}
All three strategies determine $\Rthetamat_t(i,j)$ without decomposing it into exit rate and jump distribution---which, as we show, correspond to independent learning objectives.

\section{Related Work}
\label{sec:related}

D3PM~\citep{austin2021structured} and Multinomial Diffusion~\citep{hoogeboom2021argmax} introduced discrete diffusion using transition matrices in discrete time. \citet{campbell2022continuous} extended this to continuous time via CTMCs, deriving the ELBO from path measure ratios. Subsequent work explored different parameterizations of the reverse rate matrix: SEDD~\citep{lou2024discrete} uses a concrete score analogous to continuous score matching~\citep{song2021scorebased, meng2022concrete}, MDLM~\citep{sahoo2024simple} simplifies masked diffusion with $x_0$-prediction, GIDD~\citep{ruette2025gidd} unifies these under a general interpolating framework, and discrete flow matching~\citep{campbell2024generative, gat2024discrete} casts CTMC-based discrete generation in a flow-matching framework with multimodal extensions. Recent concurrent work has also explored simplified masked objectives~\citep{shi2024simplified, ou2025absorbdiscrete}, guided generation in discrete spaces~\citep{nisonoff2024unlocking}, and improved sampling via informed correctors~\citep{zhao2024informed}. On the variational side, the ELBO for continuous diffusion was established via SDE path measures and Girsanov's theorem~\citep{song2021maximum, kingma2021variational}; the discrete analog replaces SDEs with CTMCs and Girsanov's theorem with the Campbell--Mecke formula~\citep{jacobsen2006point}, and \citet{sun2023score} derived a score-based continuous-time objective for discrete diffusion along these lines.

\section{Methodology}
\label{sec:method}

\subsection{Theory}
\label{sec:theory}

We retain the setup from Section~\ref{sec:background}: $\calS = \{1, \ldots, S\}$, forward CTMC with rate matrix $R_t$ corrupting $X_0 \sim \pdata$ into $X_T \sim \pref$, with conditionals $q_{t|0}$ and marginals $q_t$. The true reverse CTMC has rate matrix $\Rhatmat_t(i,j) = R_t(j,i)\, \frac{q_t(j)}{q_t(i)}$. By the decomposition~\eqref{eq:exit_jump}, its dynamics factor as:
\begin{equation}
\lambdahat_t(i) = \sum_{j \neq i} \Rhatmat_t(i,j) \;\; \text{(exit rate)}, \qquad \hat{r}_t(j \given i) = \frac{\Rhatmat_t(i,j)}{\lambdahat_t(i)} \;\; \text{(jump distribution)}.
\end{equation}
Rather than parameterizing a proxy for $\Rthetamat_t(i,j)$ (Section~\ref{sec:bg_param}), we directly parameterize both components via a neural network $\Phi_\theta: \calS \times [0,T] \to \R_{>0} \times \Delta^{S-1}$\footnote{$\Delta^{S-1} = \{p \in \R^S_{\geq 0} : \sum_{j} p_j = 1\}$ denotes the $(S{-}1)$-dimensional probability simplex.}:
\begin{equation}
\label{eq:neural_param}
\Phi_\theta(x_t, t) = \bigl(\lambdatheta_t(x_t), \; r^\theta_t(\cdot \given x_t)\bigr), \qquad \Rthetamat_t(i,j) = \lambdatheta_t(i) \cdot r^\theta_t(j \given i) \;\; (j \neq i).
\end{equation}
Crucially, the variational objective inherits this factorization, decomposing into two subproblems: predicting \emph{when to jump} and \emph{where to jump}. We state the main results here and defer proofs to Appendix~\ref{app:proofs}.

\begin{lemma}
\label{lem:path_ratio}
Let $\Qmeas_{x_0}$ be the forward path measure starting from $x_0$, and $\Pmeas^{\theta}$ the neural reverse path measure. For a path $\omega$ with jumps at times $0 < t_1 < \cdots < t_n < T$ and states $x_0 \to x_1 \to \cdots \to x_n$:
\begin{equation}
\log \frac{\dd\Pmeas^{\theta}}{\dd\Qmeas_{x_0}}(\omega) = \underbrace{\log \pref(x_n)}_{\textnormal{prior}} + \underbrace{\sum_{k=1}^n \log \frac{\Rthetamat_{t_k}(x_k, x_{k-1})}{R_{t_k}(x_{k-1}, x_k)}}_{\textnormal{jump ratio}} + \underbrace{\int_0^T \!\left[\lambdafwd_t(X_t) - \lambdatheta_t(X_t)\right] \dd t}_{\textnormal{survival difference}}.
\end{equation}
\end{lemma}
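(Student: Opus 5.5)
The plan is to apply Lemma~\ref{lem:path_measure} to both measures and take the log of the ratio. The forward measure is straightforward. The reverse measure $\Pmeas^\theta$ runs backward in time from $X_T \sim \pref$, so I will reparameterize its paths by reversed time $s = T - t$ before using the lemma.

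\textbf{Step 1: the forward measure.} For $\Qmeas_{x_0}$, Lemma~\ref{lem:path_measure} gives directly
\begin{equation*}
\log \dd\Qmeas_{x_0}(\omega) = \sum_{k=1}^n \log R_{t_k}(x_{k-1},x_k) - \int_0^T \lambdafwd_t(X_t)\,\dd t ,
\end{equation*}
where $\lambdafwd_t(i)=\sum_{j\neq i}R_t(i,j)$.

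\textbf{Step 2: the reverse measure.} Under $\Pmeas^\theta$ the same path $\omega$ starts at $X_T = x_n$ with probability $\pref(x_n)$. In reversed time it visits $x_n \to x_{n-1} \to \cdots \to x_0$, jumping at times $T-t_n < \cdots < T-t_1$. The jump at forward time $t_k$ is a reverse transition $x_k \to x_{k-1}$ with rate $\Rthetamat_{t_k}(x_k,x_{k-1})$. Applying Lemma~\ref{lem:path_measure} to the reversed-time CTMC with rate $\Rthetamat_{T-s}$, and weighting by the initial distribution, gives
\begin{equation*}
\log \dd\Pmeas^\theta(\omega) = \log\pref(x_n) + \sum_{k=1}^n \log \Rthetamat_{t_k}(x_k,x_{k-1}) - \int_0^T \lambdatheta_t(X_t)\,\dd t .
\end{equation*}
The substitution $s=T-t$ turns the reversed survival integral back into an integral over forward time. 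In that step I will check that the state occupied at reversed time $s$ is $X_{T-s}$, up to endpoint conventions, which do not affect the Lebesgue integral.

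\textbf{Step 3: take the difference.} Subtracting Step~1 from Step~2 gives the claimed prior, jump-ratio and survival-difference terms. The main obstacle is rigor in defining the Radon--Nikodym derivative, not the algebra. The issue is that $\Qmeas_{x_0}$ is concentrated on paths with $X_0 = x_0$, while $\Pmeas^\theta$ is a measure on all paths. So $\dd\Pmeas^\theta/\dd\Qmeas_{x_0}$ should be read either as the derivative of $\Pmeas^\theta$ restricted to $\{X_0=x_0\}$, or as the joint density in which $\Qmeas_{x_0}$'s initial point mass contributes the factor $1$.

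To handle this, I would fix a common reference measure on piecewise-constant c\`adl\`ag paths: counting measure on the states times Lebesgue measure on the jump times, as in Definition~\ref{def:path_space_measure}. Both densities above are taken with respect to this reference. The ratio is then well defined wherever $\Qmeas_{x_0}$'s density is positive, which requires $R_{t_k}(x_{k-1},x_k)>0$. It also requires $\Rthetamat>0$ on the observed jumps for the log to be finite, and I will state this as a standing assumption.

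Finally, the time-reversed jump times form the same unordered set, so both densities are defined on the same reference space and their ratio is the required Radon--Nikodym derivative.
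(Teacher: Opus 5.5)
Your proposal is correct and follows the paper's proof: write down both path densities from Lemma~\ref{lem:path_measure}, with the index reversal $x_k \to x_{k-1}$ and the prior factor $\pref(x_n)$ for $\Pmeas^\theta$, then take the log of their ratio. Your extra care makes explicit what the paper leaves implicit: the reparameterization $s = T-t$ (a unit-Jacobian reflection of the jump times), the common counting-times-Lebesgue reference measure, the restriction to $\{X_0 = x_0\}$, and the positivity conditions needed for the logarithm.
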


The proof follows from taking the ratio of the forward and reverse path densities; see Appendix~\ref{app:proofs}. Combining this ratio with importance sampling under $\Qmeas_{x_0}$ and Jensen's inequality yields the following ELBO.

\begin{theorem}
\label{thm:single_elbo}
For any $x_0 \in \calS$:
\begin{equation}
-\log p_\theta(x_0) \leq -\E_{q_{T|0}}[\log \pref(X_T)] + \int_0^T \sum_{i \in \calS} q_{t|0}(i \given x_0) \, \ell_t(i) \, \dd t,
\end{equation}
where $\ell_t(i) := \sum_{j \neq i} \left[R_t(i,j) \log \frac{R_t(i,j)}{\Rthetamat_t(j,i)} - R_t(i,j) + \Rthetamat_t(i,j)\right]$ is the Bregman divergence generated by $\phi(x) = x \log x$.
\end{theorem}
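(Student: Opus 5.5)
The plan is to begin with the importance-sampling identity $p_\theta(x_0) = \E_{\Qmeas_{x_0}}\bigl[\tfrac{\dd\Pmeas^\theta}{\dd\Qmeas_{x_0}}(\omega)\bigr]$. Here $\Pmeas^\theta$ is the reverse model path measure, run backward from $X_T\sim\pref$. Its time-reversed trajectory ends at $x_0$, and we compare it with the forward path started at $x_0$. Applying Jensen's inequality to $-\log$ gives
\begin{equation*}
-\log p_\theta(x_0) \le -\E_{\Qmeas_{x_0}}\Bigl[\log \tfrac{\dd\Pmeas^\theta}{\dd\Qmeas_{x_0}}(\omega)\Bigr].
\end{equation*}
We then substitute the expression from Lemma~\ref{lem:path_ratio} and take the $\Qmeas_{x_0}$-expectation of each of its three terms separately.

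The prior term is immediate. Under $\Qmeas_{x_0}$ the terminal state $x_n = X_T$ has law $q_{T|0}(\cdot\given x_0)$, which yields $-\E_{q_{T|0}}[\log\pref(X_T)]$.

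The survival term is handled by Fubini. It becomes $\int_0^T\sum_i q_{t|0}(i\given x_0)\,[\lambdatheta_t(i)-\lambdafwd_t(i)]\,\dd t$. Writing $\lambdafwd_t(i)=\sum_{j\ne i}R_t(i,j)$ and $\lambdatheta_t(i)=\sum_{j\ne i}\Rthetamat_t(i,j)$, this produces exactly the $-R_t(i,j)+\Rthetamat_t(i,j)$ part of $\ell_t(i)$.

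The jump-ratio term is the main step. The sum $\sum_k f(t_k,x_{k-1},x_k)$, with $f(t,i,j)=\log\tfrac{R_t(i,j)}{\Rthetamat_t(j,i)}$, runs over the jumps of the forward chain. I would compute its expectation with the Campbell--Mecke / Lévy-system (compensator) formula for CTMCs:
\begin{equation*}
\E_{\Qmeas_{x_0}}\Bigl[\sum_k f(t_k,X_{t_k-},X_{t_k})\Bigr] = \E\int_0^T\sum_{j\ne X_t} R_t(X_t,j)\,f(t,X_t,j)\,\dd t.
\end{equation*}
This holds because $\sum_k f(\cdot) - \int_0^T\sum_j R_t f\,\dd t$ is a martingale. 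To justify it, one can verify it directly from the path density of Lemma~\ref{lem:path_measure} by conditioning on the number of jumps, or cite the standard Dynkin martingale. Applying Fubini again turns this into $\int_0^T\sum_i q_{t|0}(i\given x_0)\sum_{j\ne i}R_t(i,j)\log\tfrac{R_t(i,j)}{\Rthetamat_t(j,i)}\,\dd t$.

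Adding the three pieces gives the claimed bound with $\ell_t(i)$. The Bregman interpretation follows from $\phi(a)-\phi(b)-\phi'(b)(a-b) = a\log(a/b)-a+b$. The main obstacle I expect is the technical justification of the compensator identity together with integrability, and in particular the requirement that $\Rthetamat_t(j,i)>0$ wherever $R_t(i,j)>0$, so that absolute continuity holds and the logarithms are finite. I would state these as standing regularity assumptions: bounded rates and a model with strictly positive heads, which the parameterization with $\lambdatheta>0$ and $r^\theta$ in the simplex interior supplies.
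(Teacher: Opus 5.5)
Your proposal is correct and follows the paper's proof essentially step for step: importance sampling under $\Qmeas_{x_0}$, Jensen's inequality, substitution of Lemma~\ref{lem:path_ratio}, the Campbell--Mecke formula for the jump term with the same $f(t,i,j)=\log\frac{R_t(i,j)}{\Rthetamat_t(j,i)}$, and Fubini for the survival term. One caveat on your closing remark: the identity $\phi(a)-\phi(b)-\phi'(b)(a-b)=a\log(a/b)-a+b$ does not literally make $\ell_t(i)$ a Bregman divergence, because the logarithm in $\ell_t(i)$ involves $\Rthetamat_t(j,i)$ while its linear term involves $\Rthetamat_t(i,j)$ (so $\ell_t(i)$ can even be negative)---a looseness in the statement's wording that the paper's proof does not address either.
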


Theorem~\ref{thm:single_elbo} is a conditional bound in which the loss $\ell_t$ couples the forward rate $R_t$ with the reverse parameterization $\Rthetamat_t$. Averaging over $x_0 \sim \pdata$ marginalizes the conditional $q_{t|0}(\cdot\given x_0)$ into $q_t$, after which the $\theta$-dependent part rewrites cleanly as a reverse-process path-space KL up to a $\theta$-independent constant.

\begin{theorem}
\label{thm:data_elbo}
\begin{equation}
\mathcal{E}_{\mathrm{data}}(\theta) := \E_{x_0 \sim \pdata}\!\left[-\E_{q_{T|0}}[\log \pref(X_T)] + \int_0^T \sum_{i \in \calS} q_{t|0}(i \given x_0)\,\ell_t(i)\,\dd t\right].
\end{equation}
\begin{equation}
\E_{x_0 \sim \pdata}[-\log p_\theta(x_0)] \leq \mathcal{E}_{\mathrm{data}}(\theta),
\end{equation}
and
\begin{equation}
\mathcal{E}_{\mathrm{data}}(\theta) = \KLdiv(\hat{\Qmeas} \| \Pmeas^{\theta}) + C,
\end{equation}
where $C$ is independent of $\theta$, $\hat{\Qmeas}$ denotes the path measure of the true reverse CTMC with rate matrix $\Rhatmat_t$, $\Pmeas^{\theta}$ denotes the path measure of the parameterized reverse CTMC with rate matrix $\Rthetamat_t$, and
\begin{equation}
\label{eq:reverse_kl}
\KLdiv(\hat{\Qmeas} \| \Pmeas^{\theta}) = \int_0^T \sum_{i \in \calS} q_t(i) \sum_{j \neq i} \left[\Rhatmat_t(i,j) \log \frac{\Rhatmat_t(i,j)}{\Rthetamat_t(i,j)} - \Rhatmat_t(i,j) + \Rthetamat_t(i,j)\right] \dd t.
\end{equation}
\end{theorem}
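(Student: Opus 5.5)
The inequality follows directly from Theorem~\ref{thm:single_elbo}: apply it for each $x_0$ and take the expectation over $x_0 \sim \pdata$. The substantive part is the identity $\mathcal{E}_{\mathrm{data}}(\theta) = \KLdiv(\hat{\Qmeas}\|\Pmeas^\theta) + C$. I will expand both sides, discard every term that does not depend on $\theta$, and match the $\theta$-dependent terms using the reverse-rate formula~\eqref{eq:reverse_rate}.

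\textbf{Left-hand side.} The prior term $-\E_{x_0}\E_{q_{T|0}}[\log\pref(X_T)]$ does not depend on $\theta$, so it goes into $C$. Next, Fubini together with $\E_{x_0\sim\pdata}[q_{t|0}(i\given x_0)] = q_t(i)$ gives
\[
\E_{x_0}\int_0^T \sum_i q_{t|0}(i\given x_0)\,\ell_t(i)\,\dd t = \int_0^T \sum_i q_t(i)\,\ell_t(i)\,\dd t,
\]
because $\ell_t(i)$ depends on $i$ but not on $x_0$. I split $\ell_t(i)$ into three parts. The part $R_t(i,j)\log R_t(i,j) - R_t(i,j)$ is $\theta$-free. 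The part $\Rthetamat_t(i,j)$ contributes $\int_0^T \sum_i q_t(i)\sum_{j\neq i}\Rthetamat_t(i,j)\,\dd t$. The remaining part, $-\sum_i q_t(i)\sum_{j\ne i} R_t(i,j)\log \Rthetamat_t(j,i)$, is the only delicate one. Relabeling $i\leftrightarrow j$ turns it into
\[
-\sum_i\sum_{j\ne i} q_t(j)\,R_t(j,i)\log \Rthetamat_t(i,j).
\]
By~\eqref{eq:reverse_rate}, $q_t(j)R_t(j,i) = q_t(i)\Rhatmat_t(i,j)$, so this term equals $-\sum_i q_t(i)\sum_{j\ne i}\Rhatmat_t(i,j)\log\Rthetamat_t(i,j)$.

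\textbf{Right-hand side.} Expanding~\eqref{eq:reverse_kl}, its $\theta$-dependent part is exactly
\[
\int_0^T \sum_i q_t(i)\sum_{j\ne i}\Bigl[-\Rhatmat_t(i,j)\log\Rthetamat_t(i,j) + \Rthetamat_t(i,j)\Bigr]\dd t,
\]
which matches the left-hand side term by term. The leftover terms, $\int_0^T\sum_i q_t(i)\sum_{j\neq i}\bigl[R\log R - R\bigr]$ minus $\int_0^T\sum_i q_t(i)\sum_{j\neq i}\bigl[\Rhatmat\log\Rhatmat - \Rhatmat\bigr]$ plus the prior term, together define $C$, and none of them involves $\theta$.

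\textbf{Justifying~\eqref{eq:reverse_kl} as the path-space KL.} I also need to confirm that~\eqref{eq:reverse_kl} really is $\KLdiv(\hat\Qmeas\|\Pmeas^\theta)$. Take the path density of Lemma~\ref{lem:path_measure} for both reverse chains, which share the initial law $q_T$. The log-ratio is
\[
\sum_k \log\frac{\Rhatmat_{t_k}}{\Rthetamat_{t_k}} + \int_0^T\bigl(\lambdatheta_t - \lambdahat_t\bigr)(X_t)\,\dd t.
\]
Taking the expectation under $\hat\Qmeas$, the Campbell--Mecke (compensator) formula converts the sum over jumps into $\int_0^T\sum_i q_t(i)\sum_{j\neq i}\Rhatmat_t(i,j)\log(\Rhatmat_t(i,j)/\Rthetamat_t(i,j))\,\dd t$, since the marginals of $\hat\Qmeas$ are $q_t$. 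The survival term becomes $\sum_{j\ne i}(\Rthetamat_t(i,j)-\Rhatmat_t(i,j))$, which yields~\eqref{eq:reverse_kl}.

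\textbf{Main obstacle.} I expect the difficulty to be technical rather than algebraic. Relabeling plus the detailed-balance-type identity above carries the core argument. What needs care is integrability near $t=0$ and $t=T$, where $\Rhatmat$ may blow up or $q_t(i)=0$. I will use the convention $0\log 0=0$, restrict the sums to states with $q_t(i)>0$, and assume the constant $C$ is finite, which is the standard regularity assumption.
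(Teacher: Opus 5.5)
Your proposal is correct and follows essentially the same route as the paper: you average the single-sample bound over $\pdata$, use Fubini to replace $q_{t|0}$ by $q_t$, and relabel $i\leftrightarrow j$ via $q_t(j)R_t(j,i)=q_t(i)\Rhatmat_t(i,j)$, which turns the $\log\Rthetamat_t(j,i)$ term into $\log\Rthetamat_t(i,j)$ weighted by $\Rhatmat_t(i,j)$. The differences are minor. The paper computes the constant explicitly as the prior term plus $\int_0^T\sum_i\sum_{j\neq i}q_t(i)\Rhatmat_t(i,j)\log\frac{q_t(i)}{q_t(j)}\,\dd t$, using $\sum_i q_t(i)\lambdafwd_t(i)=\sum_i q_t(i)\lambdahat_t(i)$ for the survival term, whereas you absorb all $\theta$-free pieces into $C$; and your Campbell--Mecke check that the displayed formula is the path-space KL is a step the paper only asserts.
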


Since both rates in~\eqref{eq:reverse_kl} now refer to the same reverse direction, the objective admits a timing-direction decomposition.

\begin{corollary}
\label{cor:decomp}
The reverse process KL decomposes as:
\begin{equation}
\label{eq:decomp}
\KLdiv(\hat{\Qmeas} \| \Pmeas^{\theta}) = \int_0^T \sum_{i \in \calS} q_t(i) \left[\underbrace{\KLdiv^{\Poi}\!\bigl(\lambdahat_t(i) \| \lambdatheta_t(i)\bigr)}_{\textnormal{when to jump}} + \lambdahat_t(i) \cdot \underbrace{\KLdiv^{\Cat}\!\bigl(\hat{r}_t(\cdot \given i) \| r^\theta_t(\cdot \given i)\bigr)}_{\textnormal{where to jump}}\right] \dd t,
\end{equation}
where $\KLdiv^{\Poi}(\lambda \| \lambda^\theta) := \KLdiv\!\bigl(\Poi(\lambda)\,\|\,\Poi(\lambda^\theta)\bigr) = \lambda \log \frac{\lambda}{\lambda^\theta} - \lambda + \lambda^\theta$ is the Poisson KL and $\KLdiv^{\Cat}(\hat{r} \| r^\theta) := \sum_{j \neq i} \hat{r}(j \given i) \log \frac{\hat{r}(j \given i)}{r^\theta(j \given i)}$ is the categorical KL.
\end{corollary}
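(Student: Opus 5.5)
The plan is to work pointwise: fix $t$ and $i$, and show that the inner sum over $j \neq i$ in~\eqref{eq:reverse_kl} equals the bracketed expression in~\eqref{eq:decomp}. Integrating against $q_t(i)$ and $\dd t$ then gives the corollary directly from Theorem~\ref{thm:data_elbo}. Throughout I substitute the factorizations $\Rhatmat_t(i,j) = \lambdahat_t(i)\,\hat{r}_t(j \given i)$ and $\Rthetamat_t(i,j) = \lambdatheta_t(i)\, r^\theta_t(j \given i)$, the latter from~\eqref{eq:neural_param}.

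First, I split the logarithm of the product:
\begin{equation*}
\log \frac{\Rhatmat_t(i,j)}{\Rthetamat_t(i,j)} = \log \frac{\lambdahat_t(i)}{\lambdatheta_t(i)} + \log \frac{\hat{r}_t(j \given i)}{r^\theta_t(j \given i)}.
\end{equation*}
Multiplying by $\Rhatmat_t(i,j) = \lambdahat_t(i)\hat{r}_t(j\given i)$ and summing over $j \neq i$, the first term uses $\sum_{j\neq i}\hat{r}_t(j\given i) = 1$ and becomes $\lambdahat_t(i)\log(\lambdahat_t(i)/\lambdatheta_t(i))$. The second term becomes $\lambdahat_t(i)\,\KLdiv^{\Cat}(\hat{r}_t(\cdot\given i)\|r^\theta_t(\cdot\given i))$.

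Next, the linear terms give $\sum_{j\neq i}\Rhatmat_t(i,j) = \lambdahat_t(i)$ by the definition of the exit rate. They also give $\sum_{j\neq i}\Rthetamat_t(i,j) = \lambdatheta_t(i)$, because $r^\theta_t(\cdot\given i)$ sums to one over $j \neq i$. Combining these with the first logarithmic term produces exactly $\KLdiv^{\Poi}(\lambdahat_t(i)\|\lambdatheta_t(i))$.

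The only real subtlety is the support of $r^\theta_t(\cdot \given i)$. The network outputs a point of $\Delta^{S-1}$, which could place mass on $j = i$. For the decomposition to hold as stated, the jump head must be normalized over $j \neq i$, for instance by masking the diagonal logit; otherwise the rate term picks up a factor $(1 - r^\theta_t(i\given i))$. I would state this convention explicitly.

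I would also handle states with $\lambdahat_t(i) = 0$ using the convention $0\log 0 = 0$. In that case $\hat{r}_t$ is undefined, but it is multiplied by zero, and both sides reduce to $\lambdatheta_t(i)$. Finally, I would note that the Poisson KL formula is the standard closed form of $\KLdiv(\Poi(\lambda)\|\Poi(\lambda^\theta))$, so the name is justified.
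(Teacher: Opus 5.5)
Your proposal is correct and matches the paper's proof: the paper likewise substitutes $\Rhatmat_t(i,j)=\lambdahat_t(i)\,\hat{r}_t(j\given i)$ and $\Rthetamat_t(i,j)=\lambdatheta_t(i)\,r^\theta_t(j\given i)$ into the explicit KL~\eqref{eq:reverse_kl}, splits the logarithm, and uses $\sum_{j\neq i}\hat{r}_t(j\given i)=\sum_{j\neq i}r^\theta_t(j\given i)=1$ to collect the Poisson and categorical terms pointwise in $(t,i)$. Your caveat about normalizing over $j\neq i$ is the convention the paper adopts in its appendix (the simplex ``excluding self-transitions'', with $r^\theta_t := \Rthetamat_t/\lambdatheta_t$), and your treatment of the case $\lambdahat_t(i)=0$ is a small addition that the paper leaves implicit.
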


The exit-rate head is trained by a Poisson KL and the jump-distribution head by a categorical KL weighted by $\lambdahat_t(i)$, separating \emph{whether to move} from \emph{where to move next}.

\begin{proposition}
\label{prop:pointwise_opt}
For fixed $t$ and $i$, define
\begin{equation}
g_{t,i}(\lambda, r) := \KLdiv^{\Poi}\!\bigl(\lambdahat_t(i) \| \lambda\bigr) + \lambdahat_t(i)\,\KLdiv^{\Cat}\!\bigl(\hat{r}_t(\cdot \given i) \| r\bigr).
\end{equation}
If $\lambdahat_t(i) > 0$, then $g_{t,i}(\lambda, r) \ge 0$ with equality if and only if $\lambda = \lambdahat_t(i)$ and $r = \hat{r}_t(\cdot \given i)$. If $\lambdahat_t(i)=0$, the unique minimizer in $\lambda$ is $\lambda=0$ and the choice of $r$ is irrelevant. The training objective~\eqref{eq:decomp} can be estimated via Monte Carlo sampling. Using the cross-entropy identity $\KLdiv^{\Cat} = \mathrm{CE} - H$:
\begin{equation}
\label{eq:mc_loss}
\mathcal{L}(\theta) = \E_{t \sim \mathcal{U}(0,T),\, x_t \sim q_t}\!\left[\KLdiv^{\Poi}\!\bigl(\lambdahat_t(x_t) \| \lambdatheta_t(x_t)\bigr) + \lambdahat_t(x_t) \cdot \mathrm{CE}\!\bigl(\hat{r}_t(\cdot \given x_t),\, r^\theta_t(\cdot \given x_t)\bigr)\right] + \mathrm{const},
\end{equation}
where $\mathrm{CE}(\hat{r}, r^\theta) := -\sum_{j \neq i} \hat{r}(j \given i) \log r^\theta(j \given i)$ is the cross-entropy, and the constant absorbs the entropy of $\hat{r}_t$ (independent of $\theta$).
\end{proposition}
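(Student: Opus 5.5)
The plan treats the two summands of $g_{t,i}$ separately, since $\lambda$ enters only the Poisson term and $r$ only the categorical term. Then $g_{t,i}$ is minimized jointly exactly when each term is minimized on its own.

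For the Poisson term, write $\hat\lambda := \lambdahat_t(i)$ and set $h(\lambda) := \hat\lambda\log(\hat\lambda/\lambda) - \hat\lambda + \lambda$ for $\lambda>0$.

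\textbf{Case $\hat\lambda>0$.} Put $u = \lambda/\hat\lambda$, so that $h(\lambda)=\hat\lambda\,(u - 1 - \log u)$. The elementary inequality $\log u \le u-1$, with equality iff $u=1$, gives $h\ge 0$ with equality iff $\lambda=\hat\lambda$. Equivalently, $h$ is the Bregman divergence of the strictly convex $\phi(x)=x\log x$, which is the same object already appearing in Theorem~\ref{thm:single_elbo}.

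For the categorical term, Gibbs' inequality gives $\KLdiv^{\Cat}(\hat r_t(\cdot\given i)\|r)\ge 0$, with equality iff $r=\hat r_t(\cdot\given i)$ on $\{j\ne i\}$. Multiplying by $\hat\lambda>0$ preserves both the sign and the equality condition. Summing the two nonnegative terms, $g_{t,i}=0$ forces both to vanish, which yields the stated characterization.

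\textbf{Case $\hat\lambda=0$.} Here $\hat r_t(\cdot\given i)$ is undefined, but it is multiplied by zero, so the second term vanishes for every $r$. With the convention $0\log 0=0$, the first term reduces to $h(\lambda)=\lambda$. On $\lambda\ge0$ this is uniquely minimized at $\lambda=0$. I would note that attaining this minimizer requires allowing the boundary value $\lambda=0$; on $\R_{>0}$ it is only an infimum.

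For the Monte Carlo form~\eqref{eq:mc_loss}, I start from Corollary~\ref{cor:decomp}. The integral $\int_0^T\sum_i q_t(i)[\cdot]\,\dd t$ equals $T\,\E_{t\sim\mathcal U(0,T),\,x_t\sim q_t}[\cdot]$. I would absorb the factor $T$ or state it explicitly, since a positive rescaling changes neither gradients nor minimizers. Next I expand $\KLdiv^{\Cat}(\hat r\|r^\theta)=\mathrm{CE}(\hat r,r^\theta)-H(\hat r)$. The leftover term $-\E[\lambdahat_t(x_t)H(\hat r_t(\cdot\given x_t))]$ depends only on the forward process and $\pdata$, not on $\theta$, so it goes into the constant. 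Finiteness of this constant needs integrability of $\lambdahat_t H$ over $[0,T]$; I would take this as one of the standing regularity assumptions.

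The only delicate points are the degenerate case $\lambdahat_t(i)=0$, including the boundary-attainment remark, and the integrability needed for ``const'' to be finite. The rest is elementary convexity.
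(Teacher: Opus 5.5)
Your proof is correct and follows essentially the paper's route: Gibbs' inequality for the categorical term, elementary convexity of the Poisson KL, the reduction of the Poisson term to $\lambda$ when $\lambdahat_t(i)=0$, and the Monte Carlo rewriting with $-\E[\lambdahat_t H(\hat{r}_t)]$ absorbed into the constant. For the Poisson term you use $\log u \le u-1$ where the paper uses the first- and second-derivative test, and the two are equivalent.

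Your extra remarks are sound refinements. The paper's appendix proof writes the factor $T$ explicitly, but the displayed statement drops it. You also note that $\lambda=0$ is not attained on $\R_{>0}$. Your integrability caveat for $\lambdahat_t H(\hat{r}_t)$ is not vacuous: for $\alpha_t=1-t$ one has $\lambdahat_t(i)\approx \frac{S-1}{S}\cdot\frac{1}{1-t}$ and $H(\hat{r}_t)\to\log(S-1)$ as $t\to T=1$. So the constant diverges on $[0,T]$ and is finite only after truncating away from $t=T$, as Algorithm~\ref{alg:training} does.
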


Since $q_t(x) = \E_{x_0}[q_{t|0}(x \given x_0)]$ is intractable, we replace marginal quantities by conditional counterparts via the expectation transformation (Lemma~\ref{lem:expectation_transform} in Appendix~\ref{app:proofs}), yielding the conditional formulation.

\begin{proposition}
\label{prop:conditional_loss}
For a fixed data point $x_0 \in \calS$, define the conditional reverse rates:
\begin{equation}
\hat{\lambda}_{t|0}(i \given x_0) := \sum_{j \neq i} R_t(j, i) \frac{q_{t|0}(j \given x_0)}{q_{t|0}(i \given x_0)}, \qquad
\hat{r}_{t|0}(j \given i, x_0) := \frac{R_t(j, i) \frac{q_{t|0}(j \given x_0)}{q_{t|0}(i \given x_0)}}{\hat{\lambda}_{t|0}(i \given x_0)}.
\end{equation}
Then the training loss~\eqref{eq:mc_loss} can be equivalently written as:
\begin{equation}
\label{eq:conditional_loss}
\begin{aligned}
\mathcal{L}(\theta) = \E_{\substack{t \sim \mathcal{U}(0,T),\, x_0 \sim \pdata \\ x_t \sim q_{t|0}(\cdot \given x_0)}}\!\Bigl[
&- \sum_{j \neq x_t} R_t(j, x_t) \frac{q_{t|0}(j \given x_0)}{q_{t|0}(x_t \given x_0)} \log \bigl(\lambdatheta_t(x_t)\, r^\theta_t(j \given x_t)\bigr) \\
&- \hat{\lambda}_{t|0}(x_t \given x_0) + \lambdatheta_t(x_t)\Bigr] + \mathrm{const}.
\end{aligned}
\end{equation}
\end{proposition}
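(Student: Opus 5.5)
The plan is to start from the marginal loss~\eqref{eq:mc_loss} and expand each $\theta$-dependent term using $\hat R_t(i,j)=\lambdahat_t(i)\hat r_t(j\given i)$. This lets the marginal quantities appear only through $q_t(i)\hat R_t(i,j)$. Then I would convert these into conditional expectations with the expectation transformation (Lemma~\ref{lem:expectation_transform}).

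\textbf{Step 1: expand the integrand.} For fixed $t$ and $i$, write
\begin{equation*}
\KLdiv^{\Poi}(\lambdahat\|\lambdatheta)+\lambdahat\,\mathrm{CE}(\hat r,r^\theta) = \lambdahat\log\lambdahat-\lambdahat+\lambdatheta-\lambdahat\log\lambdatheta-\sum_{j\neq i}\lambdahat\,\hat r(j\given i)\log r^\theta(j\given i).
\end{equation*}
Since $\sum_{j\neq i}\hat r(j\given i)=1$, the two logarithmic terms combine into $-\sum_{j\neq i}\Rhatmat_t(i,j)\log\bigl(\lambdatheta_t(i)r^\theta_t(j\given i)\bigr)$. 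The marginal integrand is therefore
\begin{equation*}
q_t(i)\Bigl[-\sum_{j\neq i}\Rhatmat_t(i,j)\log\Rthetamat_t(i,j)+\lambdatheta_t(i)\Bigr]
\end{equation*}
plus $\theta$-independent terms involving only $\lambdahat$.

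\textbf{Step 2: apply the expectation transformation.} The key identity is
\begin{equation*}
q_t(i)\Rhatmat_t(i,j)=R_t(j,i)q_t(j)=\E_{x_0\sim\pdata}\bigl[R_t(j,i)q_{t|0}(j\given x_0)\bigr]=\E_{x_0}\Bigl[q_{t|0}(i\given x_0)\,R_t(j,i)\tfrac{q_{t|0}(j\given x_0)}{q_{t|0}(i\given x_0)}\Bigr].
\end{equation*}
This holds because the forward rate does not depend on $x_0$. Multiplying by $\log\Rthetamat_t(i,j)$, which does not depend on $x_0$, and summing over $i$ and $j\neq i$ converts the jump term into
\begin{equation*}
\E_{x_0,\,x_t\sim q_{t|0}}\Bigl[\sum_{j\neq x_t}R_t(j,x_t)\tfrac{q_{t|0}(j\given x_0)}{q_{t|0}(x_t\given x_0)}\log\Rthetamat_t(x_t,j)\Bigr].
\end{equation*}
For the term $q_t(i)\lambdatheta_t(i)$, use $q_t=\E_{x_0}q_{t|0}$ directly. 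The remaining $-\hat\lambda_{t|0}(x_t\given x_0)$ is $\theta$-independent, so I would add and subtract it into the constant. Equivalently, I would note that $\E[\hat\lambda_{t|0}]=\E_{q_t}[\lambdahat_t]$ by the same identity, which matches the $-\lambdahat$ term of the Poisson KL. Absorbing every $\theta$-free remainder, including the $\lambdahat\log\lambdahat$ and entropy terms, into the constant gives~\eqref{eq:conditional_loss}.

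\textbf{Main obstacle.} The delicate part is justifying the exchanges and the finiteness of the absorbed constants. I need $q_{t|0}(i\given x_0)>0$ wherever the ratio is used; this holds for $t>0$ under uniform noise. In the masked case, terms with $q_{t|0}(i\given x_0)=0$ contribute zero on both sides. I also need $\int_0^T\E[\lambdahat\log\lambdahat]\,\dd t$ to be finite, or else to interpret the equivalence as equality of $\theta$-gradients and minimizers. The bulk of the argument is Fubini over the finite sums and the $t$-integral.
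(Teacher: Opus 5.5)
Your proposal is correct and follows essentially the paper's route: expand the Poisson KL and cross-entropy, push each $\theta$-dependent piece (and the $-\lambdahat$ term) through Lemma~\ref{lem:expectation_transform}, and handle the $\lambdatheta$ term directly via $q_t=\E_{x_0}q_{t|0}$. The only difference is cosmetic: you merge $\log\lambdatheta_t$ and $\log r^\theta_t$ into $\log\Rthetamat_t$ before transforming, whereas the paper transforms the two logarithmic terms separately.

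Your caveats about positivity of $q_{t|0}$ and finiteness of the absorbed constants go beyond what the paper checks, and the finiteness one is genuinely warranted. For uniform noise with $\alpha_T=0$, the integral $\int_0^T\E_{q_t}[\lambdahat_t]\,\dd t=\int_0^T\E_{q_t}[\lambdafwd_t]\,\dd t$ already diverges. The identity is therefore best read on $[\epsilon,T-\epsilon]$, as in Algorithm~\ref{alg:training}, or at the level of gradients.
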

The proof is given in Appendix~\ref{app:proofs}.

\begin{theorem}
\label{thm:conditional_kl}
Define the \emph{conditional reverse rate} given clean data $x_0$:
\begin{equation}
\label{eq:conditional_reverse_rate}
\Rhatmat_t(i, j \given x_0) := R_t(j, i)\, \frac{q_{t|0}(j \given x_0)}{q_{t|0}(i \given x_0)}, \quad i \neq j,
\end{equation}
and the Poisson KL density $f(r, c) := r \log \frac{r}{c} - r + c$. The \emph{conditional KL training loss}
\begin{equation}
\label{eq:L_kl}
\mathcal{L}_{\mathrm{KL}}(\theta) := \int_0^T \E_{x_0 \sim \pdata,\; x_t \sim q_{t|0}(\cdot \given x_0)} \!\left[\sum_{j \neq x_t} f\!\left(\Rhatmat_t(x_t, j \given x_0),\; \Rthetamat_t(x_t, j)\right)\right] \dd t
\end{equation}
is a tractable upper bound on $\KLdiv(\hat{\Qmeas} \| \Pmeas^{\theta})$:
\begin{equation}
\label{eq:kl_upper_bound}
\E_{\pdata}[-\log p_\theta(x_0)] \leq \mathcal{L}_{\mathrm{KL}}(\theta) + C_{\mathrm{total}},
\end{equation}
where $C_{\mathrm{total}}$ is independent of $\theta$. Moreover, $\mathcal{L}_{\mathrm{KL}}(\theta)$ equals the loss in~\eqref{eq:conditional_loss} up to a $\theta$-independent constant.
\end{theorem}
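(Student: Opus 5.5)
The plan is to write $\mathcal{L}_{\mathrm{KL}}$ as the path-space KL of Theorem~\ref{thm:data_elbo} plus a nonnegative, $\theta$-independent gap, then invoke the data-level ELBO.

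\textbf{Step 1: expand the loss.} Expand $f$ inside~\eqref{eq:L_kl}:
\[
\sum_{j\neq x_t} f\bigl(\Rhatmat_t(x_t,j\given x_0),\Rthetamat_t(x_t,j)\bigr)
= \sum_{j\neq x_t}\Bigl[\Rhatmat_t(x_t,j\given x_0)\log\Rhatmat_t(x_t,j\given x_0) - \Rhatmat_t(x_t,j\given x_0)\Bigr] + \sum_{j\neq x_t}\Bigl[-\Rhatmat_t(x_t,j\given x_0)\log\Rthetamat_t(x_t,j) + \Rthetamat_t(x_t,j)\Bigr].
\]
The first bracket does not depend on $\theta$. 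Using $\Rthetamat_t(x_t,j)=\lambdatheta_t(x_t)\,r^\theta_t(j\given x_t)$, $\sum_{j\neq x_t}\Rthetamat_t(x_t,j)=\lambdatheta_t(x_t)$, and $\sum_{j\neq x_t}\Rhatmat_t(x_t,j\given x_0)=\hat\lambda_{t|0}(x_t\given x_0)$, the second bracket is exactly the integrand of~\eqref{eq:conditional_loss}. The one exception is the $-\hat\lambda_{t|0}$ term, which is itself $\theta$-independent. This settles the ``moreover'' clause.

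\textbf{Step 2: compare the $\theta$-dependent parts with the marginal KL.} By Lemma~\ref{lem:expectation_transform} (the expectation transformation), for any function $h$ we have
\[
\E_{x_0}\E_{x_t\sim q_{t|0}}\bigl[\Rhatmat_t(x_t,j\given x_0)\,h(x_t,j)\bigr] = \sum_i q_t(i)\,\Rhatmat_t(i,j)\,h(i,j).
\]
This holds because
\[
\E_{x_0}\bigl[q_{t|0}(j\given x_0)\bigr]R_t(j,i) = q_t(j)R_t(j,i) = q_t(i)\Rhatmat_t(i,j).
\]
Take $h=\log\Rthetamat_t$ to handle the cross term. The $+\Rthetamat_t$ term averages trivially to $\sum_i q_t(i)\Rthetamat_t(i,j)$. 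Hence the $\theta$-dependent parts of $\mathcal{L}_{\mathrm{KL}}$ and of~\eqref{eq:reverse_kl} coincide. It follows that
\[
\mathcal{L}_{\mathrm{KL}} = \KLdiv(\hat\Qmeas\|\Pmeas^\theta) + D,
\]
where
\[
D = \int_0^T\E\sum_{j}\bigl[\phi(\Rhatmat_t(x_t,j\given x_0))\bigr]\,\dd t - \int_0^T\sum_i q_t(i)\sum_j\phi(\Rhatmat_t(i,j))\,\dd t, \qquad \phi(x)=x\log x,
\]
after the linear $-\Rhatmat$ terms cancel. The same identity with $h\equiv1$ gives that cancellation.

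\textbf{Step 3: show $D\ge 0$.} This is the main obstacle. The plan is to rewrite each entropy-type term as a weighted average. Write $w_{x_0}=\pdata(x_0)q_{t|0}(i\given x_0)/q_t(i)$, which is a probability over $x_0$, so that $\Rhatmat_t(i,j)=\sum_{x_0}w_{x_0}\Rhatmat_t(i,j\given x_0)$. Convexity of $\phi$ and Jensen's inequality then give
\[
\sum_{x_0}w_{x_0}\,\phi\bigl(\Rhatmat_t(i,j\given x_0)\bigr) \ge \phi\bigl(\Rhatmat_t(i,j)\bigr).
\]
Multiplying by $q_t(i)$ and summing yields $D\ge0$, with $D$ independent of $\theta$. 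Integrability should be assumed, as elsewhere in the paper.

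\textbf{Step 4: conclude.} Theorem~\ref{thm:data_elbo} gives
\[
\E_{\pdata}[-\log p_\theta(x_0)] \le \KLdiv(\hat\Qmeas\|\Pmeas^\theta) + C = \mathcal{L}_{\mathrm{KL}} - D + C \le \mathcal{L}_{\mathrm{KL}} + C_{\mathrm{total}},
\]
where $C_{\mathrm{total}}:=C$. One could even take $C_{\mathrm{total}}=C-D$, which is still $\theta$-independent. In either case $\mathcal{L}_{\mathrm{KL}}$ upper-bounds the KL up to this constant, which is the claim.
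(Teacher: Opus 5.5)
Your proof is correct and is essentially the paper's argument: the bound comes from Theorem~\ref{thm:data_elbo} combined with the Jensen-gap computation in the proof of Proposition~\ref{prop:exact_gradient}. Your weights $w_{x_0}$ are exactly the posterior $p(x_0\given x_t=i)$ used there, giving $\mathcal{L}_{\mathrm{KL}}=\KLdiv(\hat{\Qmeas}\|\Pmeas^\theta)+C_{\mathrm{gap}}$ with $C_{\mathrm{gap}}\ge 0$, and the ``moreover'' clause follows from the same termwise expansion of $f$ used for Corollary~\ref{cor:L_cond}. One small point: $\mathcal{L}_{\mathrm{KL}}$ integrates $\dd t$ over $[0,T]$, while~\eqref{eq:conditional_loss} averages over $t\sim\mathcal{U}(0,T)$, so the match holds up to the positive factor $T$ (exact for $T=1$, as in the paper's experiments), a normalization the paper also suppresses.
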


Theorem~\ref{thm:conditional_kl} establishes a tractable objective value upper bound. The next proposition strengthens this by showing that, under standard regularity assumptions, the conditional surrogate differs from the marginal reverse-process KL only by a $\theta$-independent gap.

\begin{proposition}
\label{prop:exact_gradient}
Assume: (i) forward quantities ($q_t$, $q_{t|0}$, $p(x_0\given x_t)$) are independent of $\theta$; (ii) $\Rthetamat_t(i,j) > 0$ and differentiable in $\theta$ for all $i\neq j$; and (iii) differentiation can be exchanged with expectation/integration. Then there exists a $\theta$-independent constant $C_{\mathrm{gap}} \ge 0$ such that
\begin{equation*}
\mathcal{L}_{\mathrm{KL}}(\theta) = \KLdiv(\hat{\Qmeas}\|\Pmeas^\theta) + C_{\mathrm{gap}},
\end{equation*}
hence
\begin{equation*}
\nabla_\theta \mathcal{L}_{\mathrm{KL}}(\theta) = \nabla_\theta \KLdiv(\hat{\Qmeas}\|\Pmeas^\theta).
\end{equation*}
In particular,
\begin{equation*}
\arg\min_\theta \mathcal{L}_{\mathrm{KL}}(\theta) = \arg\min_\theta \KLdiv(\hat{\Qmeas}\|\Pmeas^\theta),
\end{equation*}
and every stationary point of one objective is a stationary point of the other.
\end{proposition}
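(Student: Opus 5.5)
Let $\hat R_{t|0}(i,j) := \Rhatmat_t(i,j\given x_0)$ be the conditional reverse rate of~\eqref{eq:conditional_reverse_rate}. The plan is to expand the Poisson KL density $f(r,c)=r\log r - r\log c - r + c$ inside $\mathcal{L}_{\mathrm{KL}}$ and inside~\eqref{eq:reverse_kl}. We then compare the $\theta$-dependent terms one by one. The terms $r\log r - r$ do not involve $\theta$ by assumption (i), so only two kinds of terms matter: $-r\log \Rthetamat_t(i,j)$ and the linear term $\Rthetamat_t(i,j)$. Both are well defined thanks to assumption (ii).

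\textbf{Step 1: the linear terms.} The linear term in $\mathcal{L}_{\mathrm{KL}}$ is $\E_{x_0}\E_{x_t\sim q_{t|0}}\sum_{j\ne x_t}\Rthetamat_t(x_t,j)$. Since $\Rthetamat_t(x_t,j)$ does not depend on $x_0$, averaging over $x_0$ turns this into $\sum_i q_t(i)\sum_{j\ne i}\Rthetamat_t(i,j)$, which matches the KL term exactly.

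\textbf{Step 2: the logarithmic terms.} This is the key computation, essentially the expectation transformation (Lemma~\ref{lem:expectation_transform}). We use
\begin{equation*}
\E_{x_0\sim\pdata}\bigl[q_{t|0}(i\given x_0)\,\hat R_{t|0}(i,j)\bigr]=R_t(j,i)\,\E_{x_0}[q_{t|0}(j\given x_0)]=R_t(j,i)\,q_t(j)=q_t(i)\,\Rhatmat_t(i,j).
\end{equation*}
Multiplying by $-\log\Rthetamat_t(i,j)$, which is independent of $x_0$, and summing over $i$ and $j\ne i$ gives identical cross terms in the two objectives. Integrating over $t$ then yields $\mathcal{L}_{\mathrm{KL}}(\theta)-\KLdiv(\hat{\Qmeas}\|\Pmeas^\theta)=C_{\mathrm{gap}}$, with
\begin{equation*}
C_{\mathrm{gap}}=\int_0^T\sum_i\sum_{j\ne i}\Bigl(\E_{x_0}\bigl[q_{t|0}(i\given x_0)\,\phi\bigl(\hat R_{t|0}(i,j)\bigr)\bigr]-q_t(i)\,\phi\bigl(\Rhatmat_t(i,j)\bigr)\Bigr)\dd t,\qquad \phi(x)=x\log x,
\end{equation*}
where the $-r$ terms cancel by the same identity. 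This quantity is manifestly $\theta$-free.

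\textbf{Step 3: nonnegativity of the gap.} For fixed $t,i$ with $q_t(i)>0$, write $w(x_0)=\pdata(x_0)q_{t|0}(i\given x_0)/q_t(i)$, which is the posterior $p(x_0\given x_t=i)$. Step 2 shows $\Rhatmat_t(i,j)=\E_w[\hat R_{t|0}(i,j)]$. Jensen's inequality for the convex function $\phi$ then gives $\E_w[\phi(\hat R_{t|0})]\ge\phi(\E_w\hat R_{t|0})$, so each integrand is nonnegative and $C_{\mathrm{gap}}\ge0$.

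\textbf{Step 4: consequences.} Gradient equality follows by differentiating the identity, using assumption (iii) to justify interchange and finiteness of the integrals. Equality of argmin sets and of stationary points is then immediate from the objectives differing by a constant.

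The main obstacle is bookkeeping rather than substance. One must handle states with $q_{t|0}(i\given x_0)=0$ or $q_t(i)=0$ via the convention $0\log0=0$. One must also ensure $C_{\mathrm{gap}}$ is finite, possibly infinite near $t\to0$, so that the subtraction is legitimate. I would state finiteness as part of assumption (iii); failing that, I would argue at the level of integrands and gradients only.
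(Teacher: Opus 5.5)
Your proof is correct and follows essentially the same route as the paper. Both arguments use that the marginal reverse rate $\Rhatmat_t(i,j)$ is the posterior mean of the conditional rate $\Rhatmat_t(i,j\given x_0)$ under $p(x_0\given x_t=i)$; since the $\theta$-dependent part $-r\log c + c$ of $f(r,c)$ is linear in $r$, it averages exactly, leaving only the $\theta$-free Jensen gap $\E[\phi(R)]-\phi(\E R)\ge 0$ of $\phi(x)=x\log x$. Your caveat that $C_{\mathrm{gap}}$ must be finite for the subtraction to be legitimate is a worthwhile point that the paper leaves implicit.
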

Further discussion of the upper-bound interpretation and the information-theoretic form of $C_{\mathrm{gap}}$ is deferred to Appendix~\ref{app:proof_exact_gradient}.

\begin{corollary}
\label{cor:L_cond}
Defining $K(a) := a(\log a - 1)$, the losses~\eqref{eq:conditional_loss} and~\eqref{eq:L_kl} are both equivalent to:
\begin{equation}
\label{eq:L_cond}
\begin{aligned}
\mathcal{L}_{\mathrm{cond}}(\theta) = \E_{t,\, x_0,\, i}\!\Bigl[
&\sum_{j \neq i} \lambdatheta_t(i) r^\theta_t(j \given i)
- \sum_{j \neq i} R_t(j,i) \frac{q_{t|0}(j \given x_0)}{q_{t|0}(i \given x_0)} \log \frac{\lambdatheta_t(i)\, r^\theta_t(j \given i)}{R_t(j,i)} \\
&+ \sum_{j \neq i} R_t(j,i)\, K\!\!\left(\frac{q_{t|0}(j \given x_0)}{q_{t|0}(i \given x_0)}\right)\Bigr].
\end{aligned}
\end{equation}
The first two terms form the per-pair Poisson KL density $f\!\left(\Rhatmat_t(i,j \given x_0),\, \Rthetamat_t(i,j)\right) \geq 0$; the last term is $\theta$-independent and can be precomputed. This avoids large numerical cancellations between the $-\hat{\lambda}_{t|0}\log\lambdatheta_t$ and $+\lambdatheta_t$ terms in~\eqref{eq:conditional_loss} when $\hat{\lambda}_{t|0} \gg 1$ near $t \approx T$.
\end{corollary}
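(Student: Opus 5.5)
The plan is a direct algebraic expansion of the per-pair Poisson KL density in \eqref{eq:L_kl}. I take the integrand of Theorem~\ref{thm:conditional_kl} as the reference form and show that both \eqref{eq:conditional_loss} and \eqref{eq:L_cond} agree with it up to $\theta$-independent constants. The expectation $\E_{t,x_0,i}$ in \eqref{eq:L_cond} is read as $t\sim\mathcal{U}(0,T)$, $x_0\sim\pdata$, $i=x_t\sim q_{t|0}(\cdot\given x_0)$. Under this reading, \eqref{eq:L_kl} equals $T$ times that expectation, and a positive multiplicative constant is harmless for equivalence; I will note this normalization explicitly.

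\textbf{Key identity.} Fix $t$, $x_0$, and $i=x_t$. For $j\neq i$, abbreviate $\rho_j := q_{t|0}(j\given x_0)/q_{t|0}(i\given x_0)$, so that $\Rhatmat_t(i,j\given x_0)=R_t(j,i)\rho_j$, and write $\Rthetamat_t(i,j)=\lambdatheta_t(i)r^\theta_t(j\given i)$ by \eqref{eq:neural_param}. I expand $f(a,c)=a\log a - a\log c - a + c$ and split $\log a=\log R_t(j,i)+\log\rho_j$. Collecting terms gives
\begin{equation*}
f\bigl(R_t(j,i)\rho_j,\,\Rthetamat_t(i,j)\bigr) = \Rthetamat_t(i,j) - R_t(j,i)\rho_j\log\frac{\Rthetamat_t(i,j)}{R_t(j,i)} + R_t(j,i)\,\rho_j(\log\rho_j - 1).
\end{equation*}
The last term is exactly $R_t(j,i)K(\rho_j)$. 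Summing over $j\neq i$ and taking expectations reproduces \eqref{eq:L_cond} term by term, which proves $\mathcal{L}_{\mathrm{KL}}\equiv\mathcal{L}_{\mathrm{cond}}$. In fact this is an exact equality up to the factor $T$, not merely an equality up to a constant. Nonnegativity of the first two terms together follows because they are $f$ minus a $\theta$-free term; more precisely, $f\ge0$ as a Bregman divergence, which justifies the remark in the statement.

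\textbf{Equivalence with \eqref{eq:conditional_loss}.} I use $\sum_{j\neq i}\Rthetamat_t(i,j)=\lambdatheta_t(i)\sum_{j\ne i}r^\theta_t(j\given i)=\lambdatheta_t(i)$, since $r^\theta$ is supported on $j\neq i$, together with $\sum_{j\ne i}R_t(j,i)\rho_j=\hat\lambda_{t|0}(i\given x_0)$. The expression \eqref{eq:L_cond} then becomes
\begin{equation*}
\lambdatheta_t(i)-\sum_{j\ne i}R_t(j,i)\rho_j\log\bigl(\lambdatheta_t(i)r^\theta_t(j\given i)\bigr)+\sum_{j\neq i}R_t(j,i)\rho_j\log R_t(j,i)+\sum_{j\ne i}R_t(j,i)K(\rho_j).
\end{equation*}
The last two sums depend only on forward quantities. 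Matching this against the bracket in \eqref{eq:conditional_loss}, the two differ only by $-\hat\lambda_{t|0}$ and these forward-only sums, all of which are $\theta$-independent and are absorbed into $\mathrm{const}$. This step also recovers the last clause of Theorem~\ref{thm:conditional_kl}, which I may alternatively cite directly.

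\textbf{Main obstacle.} The main obstacle is bookkeeping rather than any deep step. First, I must ensure that the constants are finite so that subtracting them is legitimate. This requires $\int_0^T\E[\sum_j R_t(j,i)(|K(\rho_j)|+\rho_j|\log R_t(j,i)|)]\,\dd t<\infty$, which I will assume, consistent with the regularity assumptions of Proposition~\ref{prop:exact_gradient}. Second, I must handle the convention $0\log0=0$ when $\rho_j=0$ or $R_t(j,i)=0$. In those cases the affected terms vanish in both forms. Finally, the stability remark follows because \eqref{eq:L_cond} never forms the difference $-\hat\lambda_{t|0}\log\lambdatheta_t+\lambdatheta_t$ of two large quantities separately: each summand of the first two terms is a nonnegative $f$-value up to a bounded shift.
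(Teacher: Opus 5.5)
Your proposal is correct and is essentially the paper's proof: you expand $f$ using $\log\Rhatmat_t(i,j\given x_0)=\log R_t(j,i)+\log\rho_j$, identify the leftover $R_t(j,i)K(\rho_j)$, and sum over $j\neq i$. Two things you add that the paper skips are the direct comparison with \eqref{eq:conditional_loss} and the note on the factor $T$ between $\int_0^T\cdots\,\dd t$ and $\E_{t\sim\mathcal{U}(0,T)}$.

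One fix is needed. The first two terms alone are not nonnegative: minimizing them over $\Rthetamat_t(i,j)$ gives $-R_t(j,i)K(\rho_j)$. That value is negative whenever $\rho_j>e$, which happens for $j=x_0\neq i$ at small $t$ under the uniform process. So ``$f$ minus a $\theta$-free term'' does not establish nonnegativity. The nonnegative quantity is the full per-pair sum including the $K$ term, i.e.\ $f$ itself, and that is the only tenable reading of the statement's remark.
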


Through the above derivations, we obtain two equivalent training losses: $\mathcal{L}_{\mathrm{KL}}(\theta)$~\eqref{eq:L_kl} and $\mathcal{L}_{\mathrm{cond}}(\theta)$~\eqref{eq:L_cond}. Under the assumptions stated above, they differ only by a $\theta$-independent constant and therefore induce the same first-order optimization problem.

\begin{corollary}[Equivalence to MDLM]
\label{cor:mdlm}
If the forward process is the masked (absorbing) process on $\calS = \mathcal{V} \cup \{m\}$ with rate $R_t(i,j) = -\frac{\alpha_t'}{\alpha_t}(\boldsymbol{e}_j^\top \boldsymbol{e}_m)\indicator_{\{i \neq m\}}$, then for any valid noise schedule $\alpha_t$, the loss~\eqref{eq:L_cond} reduces exactly to the MDLM cross-entropy loss:
\begin{equation}
\label{eq:mdlm_equiv}
\mathcal{L}_{\mathrm{cond}}(\theta) = \E_{t,\, x_0,\, x_t}\!\left[\frac{\alpha_t'}{1-\alpha_t}\,\boldsymbol{x}_0^\top \log x_\theta \cdot \indicator_{\{x_t = m\}}\right],
\end{equation}
\end{corollary}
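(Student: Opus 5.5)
Substitute the masked forward rate into \eqref{eq:L_cond} and simplify term by term, conditioning on whether $x_t=m$. I take the standard masked marginals $q_{t|0}(x_0\given x_0)=\alpha_t$ and $q_{t|0}(m\given x_0)=1-\alpha_t$, with zero mass elsewhere. The reverse model is parameterized as in MDLM: from $i=m$ the network jumps to a clean token $j\in\mathcal{V}$ with $r^\theta_t(j\given m)=x_\theta(j)$. Since $R_t(j,i)$ is nonzero only when $i=m$ and $j\neq m$, every $\theta$-dependent log term, and every $K$ term, with $i\neq m$ vanishes. For $i\neq m$ in the support (that is, $i=x_0$), the true reverse exit rate is therefore zero.

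For $i=x_0$, the first term of \eqref{eq:L_cond} becomes $\lambdatheta_t(x_0)$. By Proposition~\ref{prop:pointwise_opt} its minimizer is $\lambdatheta=0$. I would fix the reverse exit rate from unmasked states to $0$, which matches MDLM's carry-over-unmasking, so this case contributes nothing.

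For $i=m$, the ratio $q_{t|0}(j\given x_0)/q_{t|0}(m\given x_0)$ equals $\alpha_t/(1-\alpha_t)$ if $j=x_0$ and $0$ otherwise. The conditional reverse rate is thus $\hat R_t(m,x_0\given x_0)=-\frac{\alpha_t'}{\alpha_t}\cdot\frac{\alpha_t}{1-\alpha_t}=-\frac{\alpha_t'}{1-\alpha_t}$, and $\hat\lambda_{t|0}(m\given x_0)$ equals the same value.

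The $\theta$-dependent part of \eqref{eq:L_cond} at $x_t=m$ is then
\[
\lambdatheta_t(m)+\frac{\alpha_t'}{1-\alpha_t}\Bigl[\log\lambdatheta_t(m)+\log x_\theta(x_0)-\log R_t(x_0,m)\Bigr],
\]
using $\sum_j r^\theta=1$. The $K$ term and $\log R_t$ are $\theta$-free. The $\lambdatheta_t(m)$ part is a Poisson KL toward the schedule-determined value $-\alpha_t'/(1-\alpha_t)$, which depends on neither $x_0$ nor $x_t$ beyond $x_t=m$. Plugging in this value, as the paper claims MDLM implicitly does, makes the timing KL vanish identically. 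What remains is $\frac{\alpha_t'}{1-\alpha_t}\,\boldsymbol{x}_0^\top\log x_\theta\cdot\indicator_{\{x_t=m\}}$.

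The main obstacle is how to read ``exactly.'' The exit-rate head is not part of MDLM, so equality holds only once $\lambdatheta$ is set to its optimum, or equivalently modulo the $\theta$-independent constant $\E[K(\cdot)]$ and the $\log R_t$ terms. I would state this explicitly, noting that the Poisson-KL term is zero at the constant optimum. I would also verify, via the $\alpha_t/(1-\alpha_t)$ bookkeeping and the sign conventions, that the leftover constant matches the absorbed constants in \eqref{eq:L_cond} rather than changing the weight $\alpha_t'/(1-\alpha_t)$.
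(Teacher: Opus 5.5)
Your route matches the paper's. You substitute the masked rates, split on $x_t\in\{x_0,m\}$, and restrict the reverse model to MDLM's form: $\lambda^\theta_t(x_0)=0$, $\lambda^\theta_t(m)=\gamma_t:=-\alpha_t'/(1-\alpha_t)$, and $r^\theta_t(\cdot\given m)=x_\theta$. The paper builds exactly this into $\Rthetamat_t(i,j)=\gamma_t\,\boldsymbol{e}_j^\top(x_\theta-\boldsymbol{e}_m)\indicator_{\{i=m\}}$, so your treatment of the exit-rate head is consistent with it.

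The gap is the last step. You conclude that equality holds only modulo the $K$ and $\log R_t$ terms, and you defer their bookkeeping. But ``exactly'' in the statement is precisely the claim that no additive constant survives, and the second half of the paper's proof is devoted to checking this. Your sentence ``What remains is \dots'' also silently drops a nonzero quantity. Setting $\lambda^\theta_t(m)=\gamma_t$ makes the Poisson KL vanish, but it does not make the raw terms of \eqref{eq:L_cond} vanish: $\lambda^\theta_t(m)-\gamma_t\log\lambda^\theta_t(m)$ evaluates to $\gamma_t-\gamma_t\log\gamma_t\neq 0$.

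The missing computation is short. Write $\rho_t:=-\alpha_t'/\alpha_t$, so that $\gamma_t=\rho_t\alpha_t/(1-\alpha_t)$.
\begin{itemize}
\item The $\log R_t(x_0,m)$ piece contributes $+\gamma_t\log\rho_t$.
\item In the $K$ piece only $j=x_0$ survives, since $K(0)=0$. It contributes $\rho_t K\bigl(\tfrac{\alpha_t}{1-\alpha_t}\bigr)=\gamma_t\bigl(\log\gamma_t-\log\rho_t-1\bigr)$.
\item Together these give $\gamma_t\log\gamma_t-\gamma_t$, which cancels $\gamma_t-\gamma_t\log\gamma_t$ exactly.
\item The $x_t=x_0$ case contributes $0$ once $\lambda^\theta_t(x_0)=0$.
\end{itemize}

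Equivalently, use $\mathcal{L}_{\mathrm{cond}}=\mathcal{L}_{\mathrm{KL}}$ (Corollary~\ref{cor:L_cond}). At $x_t=m$ the integrand is $f(\gamma_t,\gamma_t x_\theta(x_0))+\sum_{j\neq m,x_0}f(0,\gamma_t x_\theta(j))$. This equals $-\gamma_t\log x_\theta(x_0)-\gamma_t+\gamma_t\sum_{j\neq m}x_\theta(j)=-\gamma_t\log x_\theta(x_0)$, using that $x_\theta$ is normalized on $\mathcal{V}$.

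Either way you obtain \eqref{eq:mdlm_equiv} with no additive constant. Without this step, your argument proves only equality up to a $\theta$-independent constant, which is weaker than the statement.
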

The proof is given in Appendix~\ref{app:proof_mdlm}. This shows that MDLM is a special case of Neural CTMC's objective restricted to the masked forward process; the uniform forward process used in our experiments is a strict generalization. \textbf{In the masked case, $r^\theta_t$ reduces to the $x_\theta$ predictor in MDLM, while $\lambdatheta_t(m) = -\frac{\alpha_t'}{1-\alpha_t}$ becomes a constant determined by the noise schedule.}

\subsection{Algorithm}
\label{sec:algorithm}

\textbf{Training algorithm.} Following the derivation in Section~\ref{sec:theory}, we form the conditional reverse targets $\hat{R}_t(x_t,\,j\given x_0)$ and $\hat{\lambda}_{t|0}(x_t\given x_0)$ from the forward conditionals~\eqref{eq:cumulative_transition} and minimize either $\mathcal{L}_{\mathrm{KL}}$~\eqref{eq:L_kl} or $\mathcal{L}_{\mathrm{cond}}$~\eqref{eq:L_cond}; the full procedure is summarized in Algorithm~\ref{alg:training}.

\begin{algorithm}[H]
\caption{Neural CTMC Training}
\label{alg:training}
\begin{algorithmic}[1]
\WHILE{not converged}
\STATE Sample $x_0 \sim \pdata$, $t \sim \mathcal{U}(\epsilon, T-\epsilon)$; form $q_{t|0}(\cdot \given x_0)$ via~\eqref{eq:cumulative_transition}
\STATE Sample $x_t \sim q_{t|0}(\cdot \given x_0)$ and form the conditional targets $\hat{R}_t(x_t, j \given x_0)$ and $\hat{\lambda}_{t|0}(x_t \given x_0)$
\STATE Compute $\bigl(\lambdatheta_t(x_t),\, r^\theta_t(\cdot \given x_t)\bigr) = \Phi_\theta(x_t, t)$, evaluate $\mathcal{L}_{\mathrm{KL}}(\theta)$~\eqref{eq:L_kl} or $\mathcal{L}_{\mathrm{cond}}(\theta)$~\eqref{eq:L_cond}, and update $\theta \gets \theta - \eta\,\nabla_\theta \mathcal{L}$
\ENDWHILE
\end{algorithmic}
\end{algorithm}

\textbf{Sampling algorithms.} Given a trained model $\Phi_\theta$, we generate samples by discretizing the reverse CTMC over $[0, T]$ with step size $\tau = T/N$. We consider two sampling schemes. \textbf{$\tau$-Leaping}~\citep{gillespie1977exact} (Algorithm~\ref{alg:tau_leaping}) applies the jump-chain/holding-time view in parallel across sequence positions: each position draws a holding time from its learned exit rate and, if the holding time falls within the current step, jumps according to its learned jump distribution. This naturally exploits the exit-rate/jump-direction decomposition and allows multiple positions to change in a single simulation step. \textbf{Euler} (Algorithm~\ref{alg:euler}) constructs a one-step categorical transition from the off-diagonal probabilities $p_j = \lambda^\theta_t(x_t)\cdot r^\theta_t(j\given x_t)\cdot\tau$ for $j \neq x_t$, with the remaining mass assigned to staying at $x_t$. Pseudocode for both samplers is given in Appendix~\ref{app:exp_details}.

\section{Experiments}
\label{sec:experiments}

We evaluate Neural CTMC on discrete image and language generation tasks. All models use the uniform forward process (Section~\ref{sec:algorithm}) with $\alpha_t=1-t$ and $\beta_t=t$. For a fair comparison, Neural CTMC adopts the same DiT~\citep{peebles2023scalable} backbone and parameter count as the mainstream baselines; full hyperparameters are given in Appendix~\ref{app:exp_details}.

\textbf{Image Generation:}
\label{sec:exp_image}
We train Neural CTMC on the MNIST dataset, flattened to sequences of length $28\times 28$ over $\calS = \{0,1,\ldots,255\}$. Using the same DiT backbone, Figure~\ref{fig:mnist} shows 100 conditional samples (10 per digit class) generated with 128 sampling steps at epoch 80. Both Euler and $\tau$-leaping produce clearly recognizable digits, confirming that the decoupled parameterization learns meaningful structure even on image data treated as a discrete sequence.

\textbf{Language Modeling:}
\label{sec:exp_language}
We compare Neural CTMC against SEDD~\citep{lou2024discrete}, MDLM~\citep{sahoo2024simple}, and GIDD~\citep{ruette2025gidd} on TinyStories~\citep{eldan2023tinystories} and OpenWebText~\citep{radford2019language} (OWT). 
To ensure a fair comparison, Neural CTMC and all baselines (SEDD, MDLM, GIDD) use an identical backbone architecture (12-layer DiT-style Transformer, 768 hidden dim, 12 heads, max\_len=512, $\sim$163M parameters); the only difference across methods is the parameterization of the reverse process and its associated loss.
For GIDD we report results with $p_{\text{unif}} \in \{0.0, 0.1, 0.2\}$, where $p_{\text{unif}}=0.0$ corresponds to a pure mask process. For evaluation, we draw 1024 unconditional samples from each model and score them with a pretrained \textbf{Gemma2-9B} model to obtain \emph{generative perplexity} (PPL); each method is run with multiple sampling seeds and we report the best PPL across seeds.

\begingroup
\setlength{\intextsep}{2pt}
\begin{figure}[H]
\centering
\begin{subfigure}[t]{0.48\textwidth}
    \centering
    \includegraphics[width=\linewidth]{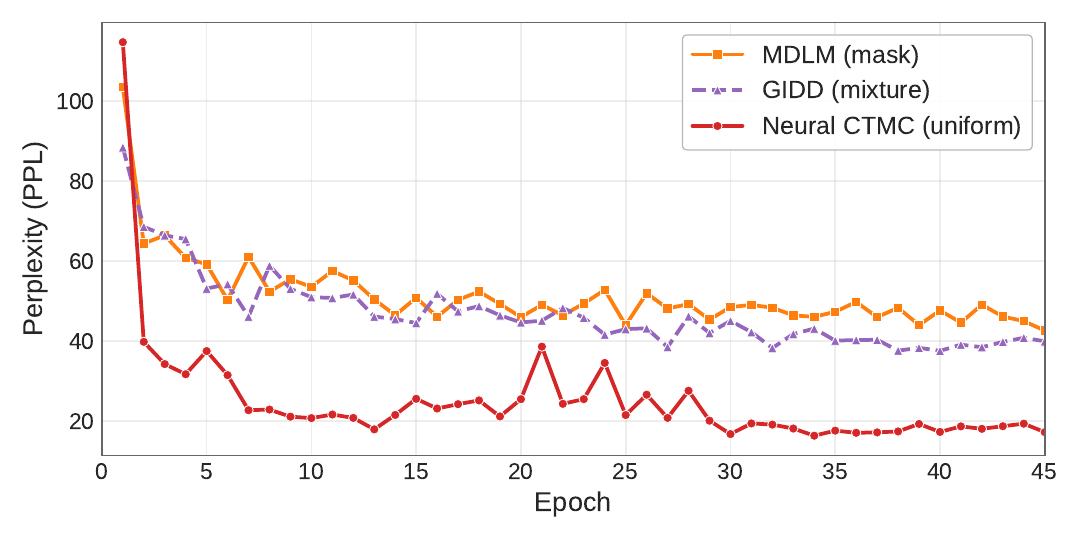}
    \caption{TinyStories: PPL over training epochs (50 sampling steps).}
    \label{fig:tinystory_ppl}
\end{subfigure}
\hfill
\begin{subfigure}[t]{0.48\textwidth}
    \centering
    \includegraphics[width=\linewidth]{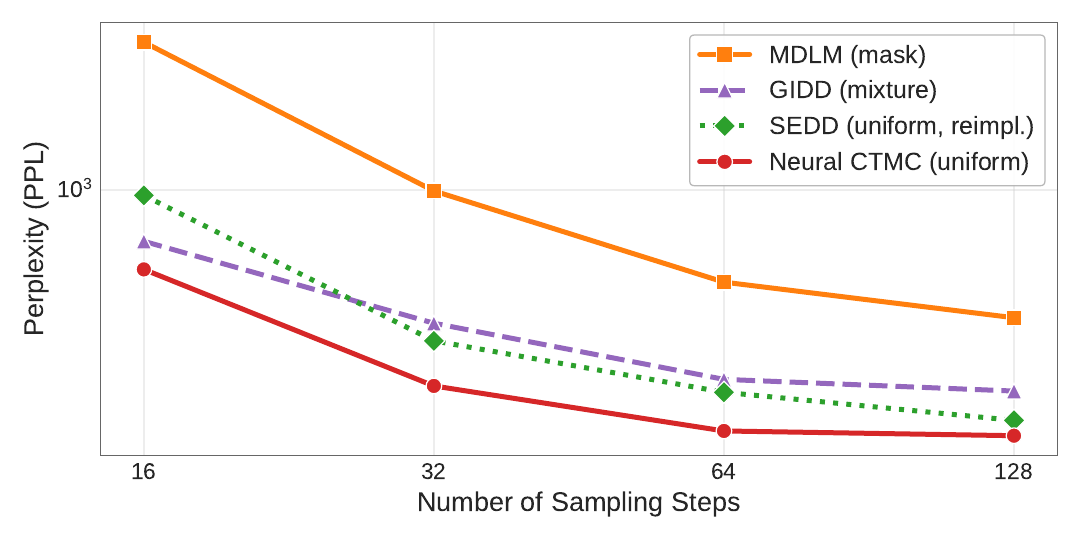}
    \caption{OpenWebText: PPL vs.\ number of sampling steps (log scale).}
    \label{fig:owt_ppl}
\end{subfigure}
\caption{\textbf{Sample quality comparison.} (a)~On TinyStories, Neural CTMC converges to substantially lower perplexity than MDLM and GIDD under identical training conditions. (b)~On OpenWebText, Neural CTMC performs best among the equal-budget baselines across the step counts we evaluate.}
\label{fig:ppl_curves}
\end{figure}
\endgroup

\textbf{TinyStories.} We train all methods from their official source code under the same architecture and training budget. Figure~\ref{fig:ppl_curves}(a) plots generative PPL against training epoch with 50 sampling steps. Neural CTMC converges faster and reaches substantially lower perplexity than the baselines. Across the 45-epoch training window, Neural CTMC attains a best PPL of $16.36$, compared with $37.60$ for GIDD ($p_{\text{unif}}=0.0$) and $42.66$ for MDLM.

\vspace{-0.5em}
\begin{table}[H]
\centering
\caption{Generative perplexity ($\downarrow$) on OpenWebText for varying numbers of sampling steps. All methods use the same backbone architecture and parameter scale. \textbf{Bold}: best overall per column; \underline{underline}: best among equal-budget (262B) methods.}
\label{tab:owt_ppl}
\small
\begin{tabular}{clccrrrr}
\toprule
Type & Method & Train Toks & Max Len & 16 & 32 & 64 & 128 \\
\midrule
\multirow{3}{*}{mask}    & SEDD~\citep{lou2024discrete} & 682B & 1024 & $825.5$ & $337.9$ & $\textbf{186.5}$ & $\textbf{127.2}$ \\
                         & MDLM~\citep{sahoo2024simple} & 262B & 1024 & $1432.8$ & $553.7$ & $301.6$ & $210.5$ \\
                         & GIDD ($p_{\text{unif}}\!=\!0.0$)~\citep{ruette2025gidd} & 262B & 512 & $2773.1$ & $993.7$ & $529.6$ & $414.3$ \\
\midrule
\multirow{2}{*}{mixture} & GIDD ($p_{\text{unif}}\!=\!0.1$)~\citep{ruette2025gidd} & 262B & 512 & $702.0$ & $398.9$ & $270.8$ & $249.8$ \\
                         & GIDD ($p_{\text{unif}}\!=\!0.2$)~\citep{ruette2025gidd} & 262B & 512 & $770.4$ & $430.1$ & $344.3$ & $293.0$ \\
\midrule
\multirow{2}{*}{uniform} & SEDD (reimpl.)$^{\dagger}$~\citep{lou2024discrete} & 262B & 512 & $963.9$ & $353.2$ & $247.7$ & $204.1$ \\
                         & Neural CTMC (ours) & 262B & 512 & $\underline{\textbf{578.3}}$ & $\underline{\textbf{258.8}}$ & $\underline{189.7}$ & $\underline{183.6}$ \\
\bottomrule
\end{tabular}
\par\vspace{2pt}{\footnotesize Note: (1) The losses of SEDD (mask) and GIDD ($p_{\text{unif}}\!=\!0.0$) are equivalent to the MDLM loss. (2) For a fair comparison across max-len settings, models with max-len=1024 generate samples of length 1024, and PPL is computed on the first 512 tokens. ($\dagger$) SEDD (reimpl.) is retrained from scratch using the official open-source SEDD codebase under the same hyperparameters, max-len, and training token budget as Neural CTMC.\par}
\end{table}
\vspace{-0.5em}

\textbf{OpenWebText.} To allow the baselines to exhibit their best performance, for OWT we use the official open-source released checkpoints and configurations of SEDD (mask), MDLM, and GIDD. These checkpoints share the same backbone architecture, parameter scale, tokenization (GPT-2), and evaluation pipeline (Gemma2-9B scoring) as Neural CTMC, representing our best effort at a unified comparison. MDLM and GIDD are trained on 262B tokens, exactly matching our training budget, while SEDD uses 682B ($2.6\times$ more) and is marked separately. Table~\ref{tab:owt_ppl} and Figure~\ref{fig:ppl_curves}(b) report PPL as a function of sampling steps: among the equal-budget methods (262B tokens), Neural CTMC outperforms MDLM and GIDD at every step count from 16 to 128, with the largest gain at 32 steps where Neural CTMC ($258.8$) improves over MDLM ($553.7$) by $2.1\times$ and GIDD ($398.9$) by $1.5\times$; under matched training conditions, Neural CTMC outperforms SEDD (uniform) at every step count from 16 to 128; compared with SEDD (mask), Neural CTMC is stronger at low step counts (16 and 32), essentially ties at 64 steps ($189.7$ vs.\ $186.5$), and is weaker at 128 steps ($183.6$ vs.\ $127.2$), likely reflecting SEDD (mask)'s larger training budget and longer max-len.

\textbf{Scaling law.} To assess whether the gains of Neural CTMC persist as compute grows, we conduct a scaling study on OpenWebText along three axes---model size $\{$tiny (80.1M), small (169.7M), base (424.5M)$\}$, token budget $\{$100M, 300M, 1B$\}$, and total training FLOPs---with three seeds per configuration. As shown in Figure~\ref{fig:scaling_law}, validation loss decreases monotonically with both parameter count and training tokens, and a clean compute-optimal frontier emerges, with the best run (\texttt{base} at 1B tokens) reaching $4.150$. The fitted power-law exponent on the frontier ($\alpha\!\approx\!0.16$, $R^2\!\approx\!0.93$) is in line with mature autoregressive and diffusion-based language models, suggesting that the proposed parameterization scales gracefully and that further gains can be expected at larger compute budgets; see Appendix~\ref{app:exp_details} for details.

\textbf{Self-correction.} A key advantage of the uniform forward process is that, unlike masked diffusion, generated tokens remain editable during sampling, enabling lightweight post-hoc self-correction. We compare a self-correction (Algorithm~\ref{alg:self_correction}) against the GIDD self-correction baselines on identical raw OpenWebText samples, with outputs scored by GPT-5.2 on five quality axes. As shown in Table~\ref{tab:self_correction_quality}, Neural CTMC self-correction at temperature $\tau_{\mathrm{sc}}=0.1$ achieves the best Grammaticality ($+12.4\%$), Factuality ($+6.2\%$), and Creativity ($+8.0\%$) among all variants, while GIDD ($p_u=0.2$) leads on Clarity and Writing style; see Appendix~\ref{app:exp_details} for the full protocol.

\section{Conclusion}
\label{sec:conclusion}

We present Neural CTMC, a discrete diffusion framework that parameterizes the reverse CTMC through separate exit-rate and jump-distribution heads, aligning the model with the intrinsic timing/direction structure of CTMC dynamics. The resulting ELBO factorizes into a Poisson KL for timing and a categorical KL for direction, and admits a tractable conditional loss that is gradient-equivalent under standard regularity assumptions. Empirically, this decomposition makes the pure uniform forward process competitive on the benchmarks we study: on TinyStories, Neural CTMC reaches generative perplexity $16.36$, outperforming the baselines under the same protocol; on OpenWebText, it attains the best equal-budget perplexity across 16 to 128 sampling steps. A scaling study further shows that validation loss decreases monotonically with both model size and tokens, with a clean power-law frontier ($\alpha\!\approx\!0.16$, $R^2\!\approx\!0.93$). Finally, the editable nature of the uniform process enables a lightweight self-correction step that achieves the best Grammaticality, Factuality, and Creativity scores among all variants we evaluate. The exit-rate/jump-distribution view, together with our open-source uniform-noise checkpoints, offers the community a principled and reusable foundation for future work on scaling, post-training, and controllable generation in discrete diffusion.

\section{Limitations}
\label{sec:limitations}

Our results have several limitations, most stemming from compute constraints rather than the framework itself. \emph{(i) Scale.} The largest model we train is 424.5M parameters with a 512-token context; whether the gains persist at billion-parameter scale and longer contexts is left to future work, though the trend in Appendix~\ref{app:exp_details} (Figure~\ref{fig:scaling_law}) is encouraging. \emph{(ii) Post-training and RL.} We only consider pre-training and do not study instruction tuning, preference optimization, or RL-style fine-tuning on top of the learned exit-rate / jump-direction model. \emph{(iii) Forward processes.} The framework supports general mixture forward processes, but we only train and evaluate the pure uniform process.

\bibliographystyle{plainnat}
\bibliography{references}

@article{eldan2023tinystories,
  title={TinyStories: How Small Can Language Models Be and Still Speak Coherent English?},
  author={Eldan, Ronen and Li, Yuanzhi},
  journal={arXiv preprint arXiv:2305.07759},
  year={2023}
}

@article{campbell2022continuous,
  title={A Continuous Time Framework for Discrete Denoising Models},
  author={Campbell, Andrew and Benton, Joe and De Bortoli, Valentin and Rainforth, Tom and Deligiannidis, George and Doucet, Arnaud},
  journal={NeurIPS},
  year={2022}
}

@article{hyvarinen2005estimation,
  title={Estimation of Non-Normalized Statistical Models by Score Matching},
  author={Hyv{\"a}rinen, Aapo},
  journal={Journal of Machine Learning Research},
  volume={6},
  pages={695--709},
  year={2005}
}

@inproceedings{song2021maximum,
  title={Maximum Likelihood Training of Score-Based Diffusion Models},
  author={Song, Yang and Durkan, Conor and Murray, Iain and Ermon, Stefano},
  booktitle={Advances in Neural Information Processing Systems},
  year={2021}
}

@inproceedings{peebles2023scalable,
  title={Scalable Diffusion Models with Transformers},
  author={Peebles, William and Xie, Saining},
  booktitle={International Conference on Computer Vision},
  year={2023}
}

@inproceedings{song2021scorebased,
  title={Score-Based Generative Modeling through Stochastic Differential Equations},
  author={Song, Yang and Sohl-Dickstein, Jascha and Kingma, Diederik P and Kumar, Abhishek and Ermon, Stefano and Poole, Ben},
  booktitle={International Conference on Learning Representations},
  year={2021}
}

@book{jacobsen2006point,
  title={Point Process Theory and Applications: Marked Point and Piecewise Deterministic Processes},
  author={Jacobsen, Martin},
  publisher={Birkh{\"a}user},
  year={2006}
}

@article{kingma2021variational,
  title={Variational Diffusion Models},
  author={Kingma, Diederik P and Salimans, Tim and Poole, Ben and Ho, Jonathan},
  journal={Advances in Neural Information Processing Systems},
  volume={34},
  pages={21696--21707},
  year={2021}
}

@inproceedings{sun2023score,
  title={Score-based Continuous-time Discrete Diffusion Models},
  author={Sun, Haoran and Yu, Lijun and Dai, Bo and Schuurmans, Dale and Dai, Hanjun},
  booktitle={International Conference on Learning Representations},
  year={2023}
}

@inproceedings{lou2024discrete,
  title={Discrete Diffusion Modeling by Estimating the Ratios of the Data Distribution},
  author={Lou, Aaron and Meng, Chenlin and Ermon, Stefano},
  booktitle={International Conference on Machine Learning},
  year={2024}
}

@inproceedings{sahoo2024simple,
  title={Simple and Effective Masked Diffusion Language Models},
  author={Sahoo, Subham Sekhar and Arriola, Marianne and Schiff, Yair and Gokaslan, Aaron and Marroquin, Edgar and Chiu, Justin T. and Rush, Alexander M. and Kuleshov, Volodymyr},
  booktitle={Advances in Neural Information Processing Systems},
  volume={37},
  year={2024}
}

@inproceedings{ruette2025gidd,
  title={Generalized Interpolating Discrete Diffusion},
  author={von R{\"u}tte, Dimitri and Fluri, Janis and Ding, Yuhui and Orvieto, Antonio and Sch{\"o}lkopf, Bernhard and Hofmann, Thomas},
  booktitle={International Conference on Machine Learning},
  year={2025}
}

@inproceedings{li2022diffusionlm,
  title={Diffusion-{LM} Improves Controllable Text Generation},
  author={Li, Xiang Lisa and Thickstun, John and Gulrajani, Ishaan and Liang, Percy and Hashimoto, Tatsunori B.},
  booktitle={Advances in Neural Information Processing Systems},
  volume={35},
  year={2022}
}

@article{alamdari2023protein,
  title={Protein Generation with Evolutionary Diffusion: Sequence is All You Need},
  author={Alamdari, Sarah and Thakkar, Nitya and van den Berg, Rianne and Lu, Alex X. and Fusi, Nicolo and Amini, Ava P. and Yang, Kevin K.},
  journal={bioRxiv},
  year={2023}
}

@inproceedings{avdeyev2023dirichlet,
  title={Dirichlet Diffusion Score Model for Biological Sequence Generation},
  author={Avdeyev, Pavel and Shi, Chenlai and Tan, Yuhao and Dudnyk, Kseniia and Zhou, Jian},
  booktitle={International Conference on Machine Learning},
  year={2023}
}

@inproceedings{vignac2023digress,
  title={Di{G}ress: Discrete Denoising Diffusion for Graph Generation},
  author={Vignac, Clement and Krawczuk, Igor and Siraudin, Antoine and Wang, Bohan and Cevher, Volkan and Frossard, Pascal},
  booktitle={International Conference on Learning Representations},
  year={2023}
}

@inproceedings{sohl2015deep,
  title={Deep Unsupervised Learning using Nonequilibrium Thermodynamics},
  author={Sohl-Dickstein, Jascha and Weiss, Eric and Maheswaranathan, Niru and Ganguli, Surya},
  booktitle={International Conference on Machine Learning},
  pages={2256--2265},
  year={2015}
}

@article{austin2021structured,
  title={Structured denoising diffusion models in discrete state-spaces},
  author={Austin, Jacob and Johnson, Daniel D and Ho, Jonathan and Tarlow, Daniel and Van Den Berg, Rianne},
  journal={Advances in Neural Information Processing Systems},
  volume={34},
  pages={17981--17993},
  year={2021}
}

@article{ho2020denoising,
  title={Denoising diffusion probabilistic models},
  author={Ho, Jonathan and Jain, Ajay and Abbeel, Pieter},
  journal={Advances in neural information processing systems},
  volume={33},
  pages={6840--6851},
  year={2020}
}

@inproceedings{shi2024simplified,
  title={Simplified and Generalized Masked Diffusion for Discrete Data},
  author={Shi, Jiaxin and Han, Kehang and Wang, Zhe and Doucet, Arnaud and Titsias, Michalis K.},
  booktitle={Advances in Neural Information Processing Systems},
  volume={37},
  year={2024}
}

@article{gat2024discrete,
  title={Discrete Flow Matching},
  author={Gat, Itai and Remez, Tal and Shaul, Neta and Kreuk, Felix and Chen, Ricky T. Q. and Synnaeve, Gabriel and Adi, Yossi and Lipman, Yaron},
  journal={arXiv preprint arXiv:2407.15595},
  year={2024}
}

@inproceedings{campbell2024generative,
  title={Generative Flows on Discrete State-Spaces: Enabling Multimodal Flows with Applications to Protein Co-Design},
  author={Campbell, Andrew and Yim, Jason and Barzilay, Regina and Rainforth, Tom and Jaakkola, Tommi},
  booktitle={International Conference on Machine Learning},
  year={2024}
}

@inproceedings{zhao2024informed,
  title={Informed Correctors for Discrete Diffusion Models},
  author={Zhao, Yixiu and Shi, Jiaxin and Chen, Feng and Druckmann, Shaul and Mackey, Lester and Linderman, Scott},
  booktitle={Advances in Neural Information Processing Systems},
  year={2025}
}

@inproceedings{gulrajani2024likelihood,
  title={Likelihood-Based Diffusion Language Models},
  author={Gulrajani, Ishaan and Hashimoto, Tatsunori B.},
  booktitle={Advances in Neural Information Processing Systems},
  volume={36},
  year={2023}
}

@article{hoogeboom2021argmax,
  title={Argmax Flows and Multinomial Diffusion: Learning Categorical Distributions},
  author={Hoogeboom, Emiel and Nielsen, Didrik and Jaini, Priyank and Forr{\'e}, Patrick and Welling, Max},
  journal={Advances in Neural Information Processing Systems},
  volume={34},
  year={2021}
}

@inproceedings{meng2022concrete,
  title={Concrete Score Matching: Generalized Score Matching for Discrete Data},
  author={Meng, Chenlin and Choi, Kristy and Song, Jiaming and Ermon, Stefano},
  booktitle={Advances in Neural Information Processing Systems},
  volume={35},
  year={2022}
}

@inproceedings{ou2025absorbdiscrete,
  title={Your Absorbing Discrete Diffusion Secretly Models the Conditional Distributions of Clean Data},
  author={Ou, Jingyang and Nie, Shen and Xue, Kaiwen and Zhu, Fengqi and Sun, Jiacheng and Li, Zhenguo and Li, Chongxuan},
  booktitle={International Conference on Learning Representations},
  year={2025}
}

@inproceedings{nisonoff2024unlocking,
  title={Unlocking Guidance for Discrete State-Space Diffusion and Flow Models},
  author={Nisonoff, Hunter and Xiong, Junhao and Allenspach, Stephan and Listgarten, Jennifer},
  booktitle={International Conference on Learning Representations},
  year={2025}
}

@article{radford2019language,
  title={Language Models are Unsupervised Multitask Learners},
  author={Radford, Alec and Wu, Jeffrey and Child, Rewon and Luan, David and Amodei, Dario and Sutskever, Ilya},
  journal={OpenAI Blog},
  year={2019}
}

@article{gillespie1977exact,
  title={Exact Stochastic Simulation of Coupled Chemical Reactions},
  author={Gillespie, Daniel T.},
  journal={The Journal of Physical Chemistry},
  volume={81},
  number={25},
  pages={2340--2361},
  year={1977}
}

@article{anderson2007modified,
  title={A Modified Next Reaction Method for Simulating Chemical Systems with Time Dependent Propensities and Delays},
  author={Anderson, David F.},
  journal={The Journal of Chemical Physics},
  volume={127},
  number={21},
  year={2007}
}

@book{norris1997markov,
  title={Markov Chains},
  author={Norris, James R.},
  publisher={Cambridge University Press},
  year={1997}
}


\newpage
\appendix

{\huge \bfseries Appendix}


\section*{Contents}
\startcontents[appendix]
\printcontents[appendix]{}{1}{\setcounter{tocdepth}{2}}

\vspace{1.5em}

\section{Mathematical Preliminaries}
\label{app:preliminaries}

We collect here the full definitions of continuous-time Markov chains and point processes used throughout the paper.

\subsection{Continuous-Time Markov Chains}

\begin{definition}
Let $\calS = \{1, \ldots, S\}$ be a finite state space and $T > 0$ a time horizon. A \emph{continuous-time Markov chain} (CTMC) on $\calS$ over $[0, T]$ is a stochastic process $\{X_t\}_{t \in [0,T]}$ with the Markov property:
\begin{equation}
\Pmeas(X_{t+s} = j \given X_t = i, \{X_u\}_{u < t}) = \Pmeas(X_{t+s} = j \given X_t = i).
\end{equation}
\end{definition}

\begin{definition}
A CTMC is characterized by a \emph{rate matrix} $R_t: \calS \times \calS \to \R$ satisfying:
(i) $R_t(i,j) \geq 0$ for all $i \neq j$, and
(ii) $\sum_{j \in \calS} R_t(i,j) = 0$ for all $i \in \calS$.
The off-diagonal entry $R_t(i,j)$ for $i \neq j$ represents the \emph{transition rate} from state $i$ to state $j$ at time $t$.
\end{definition}

\begin{definition}
For state $i \in \calS$ at time $t$:
\begin{equation}
\lambda_t(i) := \sum_{j \neq i} R_t(i,j) \quad \text{(exit rate)}, \qquad r_t(j \given i) := \frac{R_t(i,j)}{\lambda_t(i)} \quad \text{for } j \neq i \quad \text{(jump distribution)}.
\end{equation}
The exit rate $\lambda_t(i)$ controls the holding time in state $i$ (exponentially distributed), and $r_t(\cdot \given i)$ is a probability distribution over the $S - 1$ possible jump destinations.
\end{definition}

\begin{theorem}
\label{thm:kolmogorov}
Let $q_t(i) := \Pmeas(X_t = i)$ be the marginal distribution at time $t$. Then $q_t$ satisfies the master equation:
\begin{equation}
\frac{\dd q_t(i)}{\dd t} = \sum_{j \in \calS} q_t(j) R_t(j,i) = \sum_{j \neq i} q_t(j) R_t(j,i) - q_t(i) \lambda_t(i).
\end{equation}
\end{theorem}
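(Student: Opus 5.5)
We prove the master equation from the infinitesimal description of the transition kernel in Definition~\ref{def:forward_transition}, via the Chapman--Kolmogorov decomposition over one small time step. Fix $t$ and a small $\Delta>0$. By the law of total probability and the Markov property,
\begin{equation*}
q_{t+\Delta}(i) = \sum_{j\in\calS} q_t(j)\, q_{t+\Delta|t}(i \given j).
\end{equation*}
Substituting the expansion $q_{t+\Delta|t}(i\given j) = \delta_{j,i} + R_{t+\Delta}(j,i)\Delta + o(\Delta)$ gives
\begin{equation*}
q_{t+\Delta}(i) - q_t(i) = \Delta\sum_{j} q_t(j) R_{t+\Delta}(j,i) + o(\Delta).
\end{equation*}
Here we use that $\calS$ is finite, so the $S$ remainder terms sum to a single $o(\Delta)$, weighted by $q_t(j)\le 1$.

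Dividing by $\Delta$ and letting $\Delta\downarrow 0$ yields the right derivative $\sum_j q_t(j)R_t(j,i)$. This step requires $t\mapsto R_t$ to be continuous, which is the standing regularity implicit in Definition~\ref{def:forward_transition}. For the left derivative, we repeat the argument on $[t-\Delta,t]$:
\begin{equation*}
q_t(i) = \sum_j q_{t-\Delta}(j)\, q_{t|t-\Delta}(i\given j).
\end{equation*}
This gives
\begin{equation*}
\frac{q_t(i)-q_{t-\Delta}(i)}{\Delta} = \sum_j q_{t-\Delta}(j) R_t(j,i) + o(1).
\end{equation*}
Continuity of $q$ in $t$ follows from the same expansion, since $q_{t+\Delta}-q_t = O(\Delta)$. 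Hence the left derivative also converges to $\sum_j q_t(j)R_t(j,i)$, so $q_t$ is differentiable and the first equality holds.

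The second equality is purely algebraic. Split off the diagonal term $j=i$ and use the row-sum condition $\sum_j R_t(i,j)=0$, which gives $R_t(i,i) = -\sum_{j\ne i}R_t(i,j) = -\lambda_t(i)$. Then
\begin{equation*}
q_t(i)R_t(i,i) = -q_t(i)\lambda_t(i),
\end{equation*}
which is exactly the stated form.

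The main obstacle is making the $o(\Delta)$ uniform. The remainder must be uniform in the starting state, which is automatic for finite $\calS$, and, for the left derivative, uniform in the base time $t-\Delta$. I would handle the latter by assuming, as is standard for the time-inhomogeneous chains considered here, that $R_t$ is continuous in $t$, so that the expansion in Definition~\ref{def:forward_transition} holds locally uniformly in $s$. Alternatively, one can note that the paper already invokes the Kolmogorov forward equation $\frac{\dd}{\dd t}q_t=q_tR_t$ in Section~\ref{sec:bg_ctmc}, and read the first equality off that equation componentwise.
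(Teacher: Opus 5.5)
The paper gives no proof of this statement. It is recorded in Appendix~\ref{app:preliminaries} as a standard background fact, and Section~\ref{sec:bg_ctmc} simply invokes $\frac{\dd}{\dd t}q_t=q_tR_t$ with a citation, so there is no argument of the paper's to compare against. Your Chapman--Kolmogorov derivation is the standard one and is sound once its regularity input is stated correctly. The diagonal split via $R_t(i,i)=-\lambda_t(i)$ is right, and you correctly evaluate the rate at the end time ($R_{t+\Delta}$), as Definition~\ref{def:forward_transition} prescribes.

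The input that needs correcting is your claim that continuity of $t\mapsto R_t$ makes the remainder in Definition~\ref{def:forward_transition} locally uniform in the base time. It does not, and the pointwise expansion together with continuity of $R_t$ does not imply the master equation. Here is a counterexample on $\calS=\{1,2\}$:
\begin{itemize}
\item Let $q_{t|s}$ send both states to $2$ when $s<1\le t$ and be the identity otherwise. This satisfies Chapman--Kolmogorov and is realized, for $X_0=1$, by the chain that sits at $1$ on $[0,1)$ and jumps to $2$ at time $1$.
\item For each fixed $s$, $q_{s+\Delta|s}$ is exactly the identity once $\Delta$ is small. So the expansion holds with the continuous rate $R\equiv 0$.
\item Yet $q_t$ jumps at $t=1$.
\end{itemize}
What breaks is exactly your left-derivative input $q_{t|t-\Delta}(i\given j)=\delta_{ji}+R_t(j,i)\Delta+o(\Delta)$ at $t=1$, since $q_{1|1-\Delta}(2\given 1)=1$.

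So make that expansion an explicit hypothesis. One way is to read Definition~\ref{def:forward_transition}, whose rate sits at the end time, for fixed end time as well as for fixed start time. That is enough, and then no uniformity is needed, because the fixed-$t$ expansion yields both $q_{t-\Delta}\to q_t$ and the left derivative. Alternatively, obtain it from the construction of the chain out of $R_t$ by holding times and jumps, as used in the proof of Lemma~\ref{lem:path_measure}.

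Finally, drop the closing fallback. The equation invoked in Section~\ref{sec:bg_ctmc} is the statement itself in vector form, so reading it off is circular.

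If you want a route inside the paper's own toolkit that avoids all $o(\Delta)$ bookkeeping, apply Theorem~\ref{thm:campbell_mecke} with $f(u,a,b)=\indicator_{\{u\le t\}}\bigl(\indicator_{\{b=i\}}-\indicator_{\{a=i\}}\bigr)$. The jump sum telescopes to $\indicator_{\{X_t=i\}}-\indicator_{\{X_0=i\}}$. After averaging over the initial state this gives $q_t(i)-q_0(i)=\int_0^t\bigl[\sum_{j\neq i}q_u(j)R_u(j,i)-q_u(i)\lambda_u(i)\bigr]\dd u$, and the fundamental theorem of calculus finishes under continuity of $R_t$. That route works because Campbell--Mecke encodes that the jump intensity is given by $R_t$, which is exactly the structure the counterexample lacks.
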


\begin{definition}
Given a CTMC with rate matrix $R_t$ and marginal distribution $q_t(i)$ at time $t$, the \emph{time-reversed} CTMC has rate matrix $\Rhatmat_t$ given by:
\begin{equation}
\Rhatmat_t(i,j) = R_t(j,i) \frac{q_t(j)}{q_t(i)}, \quad i \neq j.
\end{equation}
\end{definition}

\subsection{Point Process}

\begin{definition}
A \emph{point process} on $[0,T]$ is a random collection of points (jump times) $\{t_1, t_2, \ldots\}$ in $[0,T]$. For a CTMC, the jump times form a point process whose intensity at time $t$ is $\lambda_t(X_t)$.
\end{definition}

\begin{theorem}
\label{thm:campbell_mecke}
Let $\Qmeas_{x_0}$ be the path measure of a CTMC starting from $x_0$ with rate matrix $R_t$. For any measurable function $f: [0,T] \times \calS \times \calS \to \R$, the expected sum over all jumps satisfies:
\begin{equation}
\E_{\Qmeas_{x_0}}\left[\sum_{k=1}^N f(t_k, X_{t_k^-}, X_{t_k})\right] = \int_0^T \sum_{i \in \calS} q_{t|0}(i \given x_0) \sum_{j \neq i} R_t(i,j) f(t,i,j) \, \dd t,
\end{equation}
where $N$ is the (random) number of jumps, $X_{t_k^-}$ is the state immediately before jump $k$, $X_{t_k}$ is the state after jump $k$, and $q_{t|0}(i \given x_0) := \Pmeas_{\Qmeas_{x_0}}(X_t = i)$ is the marginal distribution.
\end{theorem}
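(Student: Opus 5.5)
We prove Theorem~\ref{thm:campbell_mecke} (the Campbell--Mecke / compensator identity for the jump point process of the CTMC). The plan is to discretize time and reduce the identity to the short-time transition probabilities of Definition~\ref{def:forward_transition}. This is cleanest for bounded $f$ that are continuous in $t$. We then extend to general measurable $f$ by a monotone class argument, applied separately to the positive and negative parts, assuming integrability on the right-hand side.

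\textbf{Step 1: discretize.} Fix a partition $0=s_0<s_1<\cdots<s_M=T$ with mesh $\Delta\to 0$. Define the discrete sum
\begin{equation*}
A_M := \sum_{m=0}^{M-1} f(s_m, X_{s_m}, X_{s_{m+1}})\,\indicator_{\{X_{s_{m+1}}\neq X_{s_m}\}}.
\end{equation*}
For a c\`adl\`ag piecewise-constant path with finitely many jumps, once the mesh is finer than the minimal spacing between jumps, each interval contains at most one jump. Then $A_M\to\sum_k f(t_k,X_{t_k^-},X_{t_k})$ pathwise, using right-continuity of the path and continuity of $f$ in $t$.

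\textbf{Step 2: domination.} To exchange limit and expectation, we bound $|A_M|\le \|f\|_\infty N$, where $N$ is the number of jumps. Since rates are bounded on $[0,T]$, $N$ is stochastically dominated by a Poisson variable with mean $\int_0^T\max_i\lambda_t(i)\,\dd t$, so $\E[N]<\infty$. Dominated convergence then gives $\E[A_M]\to\E[\sum_k f]$.

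\textbf{Step 3: compute $\E[A_M]$.} By the Markov property,
\begin{equation*}
\E[A_M]=\sum_m\sum_i q_{s_m|0}(i\given x_0)\sum_{j\neq i} q_{s_{m+1}|s_m}(j\given i)\,f(s_m,i,j).
\end{equation*}
Substituting $q_{s_{m+1}|s_m}(j\given i)=R_{s_m}(i,j)\Delta+o(\Delta)$ from Definition~\ref{def:forward_transition} turns the sum into a Riemann sum for $\int_0^T\sum_i q_{t|0}(i\given x_0)\sum_{j\ne i}R_t(i,j)f(t,i,j)\,\dd t$, plus an error term.

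\textbf{Main obstacle: uniformity of the error.} We need the $o(\Delta)$ term to be uniform in $m$, so that $M\cdot o(\Delta)\to0$. This follows if $t\mapsto R_t$ is continuous on $[0,T]$: the Kolmogorov equation gives $P_{s+\Delta|s}=I+\int_s^{s+\Delta}R_u\,\dd u+O(\Delta^2)$ uniformly in $s$, and uniform continuity of $R$ controls the replacement of $\int R_u$ by $R_s\Delta$. The endpoint case $t\to T$, where rates such as $\beta'_t/\alpha_t$ may blow up, is handled by restricting to $[0,T-\epsilon]$ and letting $\epsilon\to0$ via monotone convergence for $f\ge0$.

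**Alternative route.** One could instead use that $\sum_{k:t_k\le t}\indicator_{\{X_{t_k^-}=i,X_{t_k}=j\}}-\int_0^t\indicator_{\{X_u=i\}}R_u(i,j)\,\dd u$ is a martingale (Dynkin's formula applied to the indicator counting process). The identity for predictable $f$ would then follow by integrating and taking expectations. We would mention this as the conceptual reason for the identity, but carry out the discretization proof above.
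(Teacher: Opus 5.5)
The paper gives no proof of this statement. It is quoted in Appendix~A as the standard Campbell--Mecke (compensator) formula for the jump measure of a CTMC, with point-process theory in the style of Jacobsen as the implicit source, and it is only invoked in Step~4 of the ELBO proof. Your discretization argument therefore supplies a proof rather than competing with one, and it is correct.

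Compared with the cited general result, your route buys self-containedness: it uses only the short-time expansion of Definition~\ref{def:forward_transition}, the Markov property, dominated convergence and a monotone class step. The price is a hypothesis the statement does not make, namely continuity of $t\mapsto R_t$. You need it both for the uniform $o(\Delta)$ and for the Riemann-sum limit, whereas the compensator formula holds for merely measurable, locally integrable rates and predictable integrands.

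Your endpoint truncation is essential for this paper's own setting, not a technicality. For the schedule $\alpha_t=1-t$ (so $T=1$) with uniform $\pi$ used in the experiments, the forward exit rate is $\frac{S-1}{S(1-t)}$ in every state. Hence $\E[N]=\infty$, and indeed $N=\infty$ almost surely on $[0,T)$. So the bound $\E[N]<\infty$ in your Step~2 fails on the full interval, and the identity holds only in the extended sense for $f\ge 0$ or under your integrability assumption. Running Steps~1--3 on $[0,T-\epsilon]$ and letting $\epsilon\to 0$, as you do, is exactly the right fix.

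A shorter route is available from the paper's own Lemma~\ref{lem:path_measure}. Integrating the path density over everything except the $k$-th jump shows that $(T_k, X_{T_k^-}, X_{T_k})$ has sub-probability density $R_t(i,j)\,\Pmeas(N_{t^-}=k-1,\ X_{t^-}=i)$, where $N_{t^-}$ counts jumps in $[0,t)$. Summing over $k$ gives $q_{t|0}(i\given x_0)\,R_t(i,j)$. Tonelli then yields the identity for every measurable $f\ge 0$, with no continuity of $f$ or $R$ and no monotone class step.
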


\section{Formal Problem Setup}
\label{app:problem_setup}

We establish here the formal mathematical framework for CTMC-based discrete diffusion models, providing explicit definitions for the objects introduced informally in the main text.

\subsection{Forward Process}

\begin{definition}
The \emph{forward} or \emph{noising} process is a time-dependent CTMC on $\calS$ with:
\begin{itemize}
    \item \textbf{Initial distribution}: $X_0 \sim \pdata$, the data distribution.
    \item \textbf{Rate matrix}: $R_t: \calS \times \calS \to \R$.
    \item \textbf{Transition rate}: $R_t(i,j) \geq 0$ for $i \neq j$.
    \item \textbf{Exit rate}: $\lambdafwd_t(i) := \sum_{j \neq i} R_t(i,j)$.
    \item \textbf{Jump distribution}: $r_t(j \given i) := \frac{R_t(i,j)}{\lambdafwd_t(i)}$ for $j \neq i$.
\end{itemize}
\end{definition}

\begin{definition}
For the forward process:
\begin{equation}
\begin{aligned}
q_{t|0}(x \given x_0) &:= \Pmeas(X_t = x \given X_0 = x_0), \quad \text{(conditional distribution)} \\
q_t(x) &:= \E_{x_0 \sim \pdata}[q_{t|0}(x \given x_0)], \quad \text{(marginal distribution)}.
\end{aligned}
\end{equation}
\end{definition}

\subsection{True Reverse Process}

\begin{definition}
The \emph{true reverse} process is the time-reversal of the forward process, characterized by the rate matrix:
\begin{equation}
\Rhatmat_t(i,j) = R_t(j,i) \frac{q_t(j)}{q_t(i)}, \quad i \neq j.
\end{equation}
This is a CTMC evolving from time $0$ to $T$ with:
\begin{equation}
\begin{aligned}
\lambdahat_t(i) &:= \sum_{j \neq i} \Rhatmat_t(i,j) \quad \text{(true reverse exit rate)}, \\
\hat{r}_t(j \given i) &:= \frac{\Rhatmat_t(i,j)}{\lambdahat_t(i)} \quad \text{(true reverse jump distribution)}.
\end{aligned}
\end{equation}
\end{definition}

\subsection{Neural Reverse Process}

\begin{definition}
The \emph{neural reverse process} or \emph{generative model} is a parameterized CTMC with:
\begin{itemize}
    \item \textbf{Initial distribution}: $X_T \sim \pref$ (reference distribution).
    \item \textbf{Rate matrix}: $\Rthetamat_t(i,j)$ parameterized by $\theta \in \Theta$.
    \item \textbf{Neural exit rate}: $\lambdatheta_t(i) := \sum_{j \neq i} \Rthetamat_t(i,j)$.
    \item \textbf{Neural jump distribution}: $r^\theta_t(j \given i) := \frac{\Rthetamat_t(i,j)}{\lambdatheta_t(i)}$.
\end{itemize}
\end{definition}

\begin{definition}
Typically, a neural network $\Phi_\theta: \calS \times [0,T] \to \R_{>0} \times \Delta^{S-1}$ outputs:
\begin{equation}
\Phi_\theta(x_t, t) = (\lambdatheta_t(x_t), r^\theta_t(\cdot \given x_t)),
\end{equation}
where $\Delta^{S-1}$ is the $(S-1)$-dimensional probability simplex (excluding self-transitions).
\end{definition}

\subsection{Training Objective}

\begin{definition}
The goal is to maximize the expected log-likelihood over the data distribution:
\begin{equation}
\max_{\theta \in \Theta} \E_{x_0 \sim \pdata}[\log p_\theta(x_0)].
\end{equation}
Equivalently, minimize the negative log-likelihood:
\begin{equation}
\min_{\theta \in \Theta} \E_{x_0 \sim \pdata}[-\log p_\theta(x_0)].
\end{equation}
\end{definition}

\begin{remark}[Intractability]
Direct evaluation of $p_\theta(x_0)$ requires marginalizing over all possible paths in continuous time, which is computationally intractable. This motivates the ELBO approach developed in Appendix~\ref{app:proofs}.
\end{remark}

\section{Proofs}
\label{app:proofs}

\label{app:proof_path_measure}

\begin{lemma}
Let $\Qmeas_{x_0}$ denote the path measure induced by a CTMC with rate matrix $R_t$ starting from initial state $x_0$. For a path $\omega$ with jumps at times $0 \leq t_1 < \cdots < t_n < T$ and states $x_0 \to x_1 \to \cdots \to x_n$:
\begin{equation}
\dd\Qmeas_{x_0}(\omega) = \left[\prod_{k=1}^n R_{t_k}(x_{k-1}, x_k)\right] \exp\left(-\int_0^T \lambda_t(X_t) \, \dd t\right).
\end{equation}
\end{lemma}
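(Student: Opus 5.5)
We derive the density from the Gillespie (holding-time / jump-chain) description of the CTMC, making precise the infinitesimal probability in Definition~\ref{def:path_space_measure}. Let $\lambda_t(i)=\sum_{j\neq i}R_t(i,j)$ and $r_t(j\given i)=R_t(i,j)/\lambda_t(i)$ as in~\eqref{eq:exit_jump}. The first ingredient is the conditional law of a single holding period. Given that the chain entered state $i$ at time $s$, the probability that it has not left by time $u>s$ is
\begin{equation*}
S_i(s,u)=\exp\!\left(-\int_s^u \lambda_t(i)\,\dd t\right).
\end{equation*}
To justify this we use Definition~\ref{def:forward_transition} together with the Markov property. Over $[t,t+\Delta]$ the probability of staying is $1-\lambda_t(i)\Delta+o(\Delta)$. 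Hence $\partial_u S_i(s,u)=-\lambda_u(i)S_i(s,u)$ with $S_i(s,s)=1$, and solving this ODE gives the exponential form. It follows that the next jump time $T$ has density $\lambda_u(i)S_i(s,u)$ on $(s,\infty)$. Likewise, conditional on a jump in $[u,u+\dd u)$, the destination is $j$ with probability $R_u(i,j)\dd u/(\lambda_u(i)\dd u)=r_u(j\given i)$. The joint density of (next jump time, next state) is therefore
\begin{equation*}
\lambda_u(i)\,S_i(s,u)\,r_u(j\given i)=R_u(i,j)\,S_i(s,u).
\end{equation*}

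Next we chain these together using the strong Markov property at each jump time. Conditional on $(T_k,W_k)=(t_k,x_k)$, the future evolves as a fresh CTMC started from $x_k$ at time $t_k$. Multiplying the conditional densities for $k=1,\dots,n$ gives
\begin{equation*}
\prod_{k=1}^n R_{t_k}(x_{k-1},x_k)\,S_{x_{k-1}}(t_{k-1},t_k),
\end{equation*}
with $t_0=0$. We then multiply by the probability $S_{x_n}(t_n,T)$ of no further jump before $T$, which accounts for the event $T_{n+1}\ge T$.

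The survival factors cover the consecutive intervals $[t_k,t_{k+1})$, on each of which $X_t=x_k$ with $t_{n+1}=T$. Their exponents therefore add to give
\begin{equation*}
\exp\!\left(-\int_0^T\lambda_t(X_t)\,\dd t\right).
\end{equation*}
This yields~\eqref{eq:path_density}. The density is with respect to the reference measure $\prod_k \dd t_k$ times counting measure on the states, and the initial state $x_0$ is fixed, so it contributes no factor.

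The main obstacle is rigor in the first step: deriving the survival ODE and the jump-destination law from the $o(\Delta)$ expansion with time-inhomogeneous rates. I would assume $t\mapsto R_t$ is continuous, or at least locally integrable and bounded, so the $o(\Delta)$ terms are uniform on $[0,T]$. Under that assumption, a product over a partition of $[s,u]$ converges to the exponential. The same assumption rules out explosion, since finitely many states and bounded rates give finitely many jumps almost surely, so the listed paths carry full mass.
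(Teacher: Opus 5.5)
Your proposal is correct and follows essentially the same route as the paper: the holding-time density multiplied by the jump-destination probability gives $R_{t_k}(x_{k-1},x_k)$ times a survival factor, you take the product over the $n$ jumps together with the final no-jump factor on $[t_n,T]$, and you merge the exponents into a single integral. Your derivation of the survival ODE from the $o(\Delta)$ expansion, together with your remarks on the reference measure, regularity and non-explosion, adds rigor beyond the paper, which simply asserts the exponential holding-time law.
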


\begin{proof}
We decompose the path probability into a product of conditional distributions following the CTMC dynamics. A trajectory starting from $x_0$ is fully described by jump times $0 \leq t_1 < \cdots < t_n < T$ and states $x_0 \to x_1 \to \cdots \to x_n$, where at jump time $t_k$ the chain transitions from $x_{k-1}$ to $x_k$, and $X_t = x_k$ for $t \in [t_k, t_{k+1})$ (with $t_0 := 0$ and $t_{n+1} := T$).

\textbf{Step 1: Holding time distribution.}
When the CTMC is in state $x_{k-1}$ starting from time $t_{k-1}$, the waiting time until the next jump is governed by a time-inhomogeneous exponential distribution. The cumulative distribution function for the next jump time $t_k$ is:
\begin{equation}
\Pmeas(T_k < t \given T_{k-1} = t_{k-1}) = 1 - \exp\left(-\int_{t_{k-1}}^{t} \lambda_s(x_{k-1}) \, \dd s\right).
\end{equation}
Differentiating, the density of the $k$-th jump time is:
\begin{equation}
p(t_k \given T_{k-1} = t_{k-1}) = \lambda_{t_k}(x_{k-1}) \exp\left(-\int_{t_{k-1}}^{t_k} \lambda_s(x_{k-1}) \, \dd s\right).
\end{equation}

\textbf{Step 2: Jump destination distribution.}
Conditional on a jump occurring at time $t_k$ from state $x_{k-1}$, the destination $x_k$ is chosen according to:
\begin{equation}
\Pmeas(X_{t_k} = x_k \given X_{t_k^-} = x_{k-1}) = r_{t_k}(x_k \given x_{k-1}) = \frac{R_{t_k}(x_{k-1}, x_k)}{\lambda_{t_k}(x_{k-1})}.
\end{equation}

\textbf{Step 3: Combining the contributions.}
The joint contribution of the $k$-th jump (both timing and destination) is:
\begin{equation}
\begin{aligned}
&p(t_k \given T_{k-1} = t_{k-1}) \cdot \Pmeas(X_{t_k} = x_k \given X_{t_k^-} = x_{k-1}) \\
&\quad = \lambda_{t_k}(x_{k-1}) \exp\left(-\int_{t_{k-1}}^{t_k} \lambda_s(x_{k-1}) \, \dd s\right) \cdot \frac{R_{t_k}(x_{k-1}, x_k)}{\lambda_{t_k}(x_{k-1})} \\
&\quad = R_{t_k}(x_{k-1}, x_k) \exp\left(-\int_{t_{k-1}}^{t_k} \lambda_s(x_{k-1}) \, \dd s\right).
\end{aligned}
\end{equation}
The survival probability after the last jump (no further jump in $[t_n, T]$) is:
\begin{equation}
\Pmeas(T_{n+1} \geq T \given T_n = t_n) = \exp\left(-\int_{t_n}^{T} \lambda_s(x_n) \, \dd s\right).
\end{equation}

\textbf{Step 4: Full path density.}
Taking the product over all $n$ jumps and the final survival term:
\begin{equation}
\dd\Qmeas_{x_0}(\omega) = \prod_{k=1}^n R_{t_k}(x_{k-1}, x_k) \cdot \prod_{k=1}^n \exp\left(-\int_{t_{k-1}}^{t_k} \lambda_s(x_{k-1}) \, \dd s\right) \cdot \exp\left(-\int_{t_n}^{T} \lambda_s(x_n) \, \dd s\right).
\end{equation}
Since $X_t = x_{k-1}$ for $t \in [t_{k-1}, t_k)$, the exponential terms combine into a single integral over the entire time horizon:
\begin{equation}
\prod_{k=1}^n \exp\left(-\int_{t_{k-1}}^{t_k} \lambda_s(x_{k-1}) \, \dd s\right) \cdot \exp\left(-\int_{t_n}^{T} \lambda_s(x_n) \, \dd s\right) = \exp\left(-\int_0^T \lambda_t(X_t) \, \dd t\right),
\end{equation}
which yields the desired result.
\end{proof}

\begin{lemma}
Let $\Qmeas_{x_0}$ be the forward path measure starting from $x_0$, and $\Pmeas^{\theta}$ be the neural reverse path measure. For a path $\omega$ with jumps at times $0 < t_1 < \cdots < t_n < T$ and visited states $x_0 \to x_1 \to \cdots \to x_n$, the log Radon-Nikodym derivative is:
\begin{equation}
\label{eq:log_ratio_app}
\begin{split}
\log \frac{\dd\Pmeas^{\theta}}{\dd\Qmeas_{x_0}}(\omega) &= \log \pref(x_n) + \sum_{k=1}^n \log \frac{\Rthetamat_{t_k}(x_k, x_{k-1})}{R_{t_k}(x_{k-1}, x_k)} \\
&\quad + \int_0^T \left[\lambdafwd_t(X_t(\omega)) - \lambdatheta_t(X_t(\omega))\right] \dd t.
\end{split}
\end{equation}
\end{lemma}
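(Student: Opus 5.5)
The approach is to write both path measures as densities on the common space of piecewise-constant càdlàg paths on $[0,T]$ and take their ratio term by term. For $\Qmeas_{x_0}$, Lemma~\ref{lem:path_measure} gives the density directly. The path starts at $x_0$, jumps $x_{k-1}\to x_k$ at $t_k$, and has density
\[
\prod_{k=1}^n R_{t_k}(x_{k-1},x_k)\,\exp\!\Bigl(-\int_0^T \lambdafwd_t(X_t)\,\dd t\Bigr)
\]
with respect to the reference measure (counting measure on states times Lebesgue measure on jump times).

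For $\Pmeas^\theta$, the same path is read backward in time. The neural reverse process starts at time $T$ in state $x_n$ with probability $\pref(x_n)$. It then traverses the jumps in reverse order: at time $t_k$ it jumps from $x_k$ to $x_{k-1}$. Applying Lemma~\ref{lem:path_measure} to the time-reversed variable $s=T-t$, with rate matrix $\Rthetamat_{T-s}$, gives the density
\[
\pref(x_n)\prod_{k=1}^n \Rthetamat_{t_k}(x_k,x_{k-1})\,\exp\!\Bigl(-\int_0^T \lambdatheta_t(X_t)\,\dd t\Bigr).
\]
The survival integral keeps the same form under the change of variables. This is because $X_t$ on each open interval $(t_k,t_{k+1})$ is the same state in both directions, and the reversal only changes the path at finitely many jump times, a Lebesgue-null set. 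As in the paper's convention, $\Rthetamat_t$ is indexed by forward time $t$, so no relabeling of the rate is needed.

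\textbf{Main obstacle.} The delicate step is checking that both densities are taken with respect to the \emph{same} reference measure. The set of jump times $\{t_1,\dots,t_n\}$ is identical under both descriptions, so the Lebesgue factors $\dd t_1\cdots\dd t_n$ match, and the ratio of densities is a genuine Radon–Nikodym derivative. A second, mostly notational, point concerns the endpoints. The forward measure is pinned at $x_0$, while the reverse measure carries the prior factor at the time-$T$ endpoint $x_n$. Since the forward measure assigns probability one to the start $x_0$, we may treat $\Qmeas_{x_0}$ as having initial mass $1$ there, and the only endpoint factor that survives in the ratio is $\pref(x_n)$. I would also note the conventions for the ratio to be well defined: it requires $R_{t_k}(x_{k-1},x_k)>0$, which holds $\Qmeas_{x_0}$-almost surely, and $\Rthetamat>0$.

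Dividing the two densities and taking logarithms then yields the stated identity. The prior contributes $\log\pref(x_n)$. The jump factors pair up into $\sum_k \log\bigl(\Rthetamat_{t_k}(x_k,x_{k-1})/R_{t_k}(x_{k-1},x_k)\bigr)$. The two exponentials combine into $\int_0^T[\lambdafwd_t(X_t)-\lambdatheta_t(X_t)]\,\dd t$.
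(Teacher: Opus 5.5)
Your proposal is correct and follows the same route as the paper. You take the forward density from Lemma~\ref{lem:path_measure}, write the neural reverse density for the same path with the index reversal $x_k \to x_{k-1}$ and the prior factor $\pref(x_n)$, then take the ratio and the logarithm; your remarks on the common reference measure, the time-reversal change of variables and the endpoint convention spell out points the paper leaves implicit without changing the argument.
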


\begin{proof}
From Lemma~\ref{lem:path_measure}, the forward path density (conditional on $X_0 = x_0$) is:
\begin{equation}
\dd\Qmeas_{x_0}(\omega) = \left[\prod_{k=1}^n R_{t_k}(x_{k-1}, x_k)\right] \exp\left(-\int_0^T \lambdafwd_t(X_t) \, \dd t\right).
\end{equation}

The neural reverse path density for the \emph{same path} (note the index reversal: jumps are $x_k \to x_{k-1}$ in reverse time) is:
\begin{equation}
\dd\Pmeas^{\theta}(\omega) = \pref(x_n) \left[\prod_{k=1}^n \Rthetamat_{t_k}(x_k, x_{k-1})\right] \exp\left(-\int_0^T \lambdatheta_t(X_t) \, \dd t\right).
\end{equation}

Taking the ratio:
\begin{equation}
\frac{\dd\Pmeas^{\theta}}{\dd\Qmeas_{x_0}}(\omega) = \pref(x_n) \cdot \frac{\prod_{k=1}^n \Rthetamat_{t_k}(x_k, x_{k-1})}{\prod_{k=1}^n R_{t_k}(x_{k-1}, x_k)} \cdot \exp\left(\int_0^T \left[\lambdafwd_t(X_t) - \lambdatheta_t(X_t)\right] \dd t\right).
\end{equation}

Taking logarithms yields~\eqref{eq:log_ratio_app}.
\end{proof}

\begin{theorem}
\label{thm:single_elbo_app}
For any data point $x_0 \in \calS$, the negative log-likelihood satisfies:
\begin{equation}
\label{eq:single_elbo_app}
-\log p_\theta(x_0) \leq -\E_{q_{T|0}}[\log \pref(X_T)] + \int_0^T \sum_{i \in \calS} q_{t|0}(i \given x_0) \, \ell_t(i) \, \dd t,
\end{equation}
where the integrand $\ell_t(i)$ is defined as:
\begin{equation}
\label{eq:ell_t_app}
\ell_t(i) := \sum_{j \neq i} \left[R_t(i,j) \log \frac{R_t(i,j)}{\Rthetamat_t(j,i)} - R_t(i,j) + \Rthetamat_t(i,j)\right].
\end{equation}
\end{theorem}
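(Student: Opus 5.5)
The plan is to start from the standard variational identity. Write $p_\theta(x_0)$ as the marginal at time $0$ of the neural reverse path measure $\Pmeas^\theta$, which starts at $X_T\sim\pref$ and runs backward. Importance sampling under the forward path measure $\Qmeas_{x_0}$ then gives
\[
p_\theta(x_0) = \E_{\Qmeas_{x_0}}\!\left[\frac{\dd\Pmeas^\theta}{\dd\Qmeas_{x_0}}(\omega)\right],
\]
where the Radon--Nikodym derivative is exactly the one computed in Lemma~\ref{lem:path_ratio}. The identity holds because $\Qmeas_{x_0}$ is supported on paths starting at $x_0$, and every such path has positive $\Pmeas^\theta$-density by assumption (ii)-type positivity of $\Rthetamat$. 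Applying Jensen's inequality to $-\log$ yields
\[
-\log p_\theta(x_0)\le -\E_{\Qmeas_{x_0}}\!\left[\log\frac{\dd\Pmeas^\theta}{\dd\Qmeas_{x_0}}\right].
\]
We then substitute the three terms of Lemma~\ref{lem:path_ratio}.

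\textbf{Prior term.} Since $x_n=X_T$, the first term gives $-\E_{q_{T|0}}[\log\pref(X_T)]$ directly.

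\textbf{Jump term.} This is a sum over jumps of $f(t_k,X_{t_k^-},X_{t_k})$ with
\[
f(t,i,j)=\log\frac{R_t(i,j)}{\Rthetamat_t(j,i)}
\]
after the sign flip. By the Campbell--Mecke formula (Theorem~\ref{thm:campbell_mecke}) its expectation equals
\[
\int_0^T\sum_i q_{t|0}(i\given x_0)\sum_{j\ne i}R_t(i,j)\log\frac{R_t(i,j)}{\Rthetamat_t(j,i)}\,\dd t .
\]

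\textbf{Survival term.} The contribution is $\E\int_0^T[\lambdatheta_t(X_t)-\lambdafwd_t(X_t)]\,\dd t$. By Fubini this becomes
\[
\int_0^T\sum_i q_{t|0}(i\given x_0)\bigl[\lambdatheta_t(i)-\lambdafwd_t(i)\bigr]\dd t .
\]
Expanding $\lambdafwd_t(i)=\sum_{j\ne i}R_t(i,j)$ and $\lambdatheta_t(i)=\sum_{j\ne i}\Rthetamat_t(i,j)$ supplies the terms $-R_t(i,j)+\Rthetamat_t(i,j)$ inside $\ell_t(i)$. Collecting the three pieces gives exactly~\eqref{eq:single_elbo_app}.

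For the Bregman remark, take $\phi(x)=x\log x$. Its Bregman divergence is $D_\phi(a,b)=a\log(a/b)-a+b$. Each summand of $\ell_t(i)$ has this form with $a=R_t(i,j)$, but the subtracted rate is $\Rthetamat_t(j,i)$ in the log and $\Rthetamat_t(i,j)$ in the linear term. So $\ell_t$ is a Bregman divergence only after reindexing $i\leftrightarrow j$ under the integral against $q_{t|0}$. I would note this reindexing rather than claim pointwise equality.

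\textbf{Main obstacle.} I expect the main difficulty to be measure-theoretic. First, justify that the two path measures live on a common reference, namely counting measure on states times Lebesgue measure on the jump times, so that the densities from Lemma~\ref{lem:path_measure} can be divided. Second, pair the time directions correctly: the reverse chain traverses the same path with jumps $x_k\to x_{k-1}$ at times $t_k$, and its time-reversed survival integral covers the same intervals. Third, check integrability so that Jensen and Campbell--Mecke apply. For this I would assume bounded rates on $[0,T]$ and $\Rthetamat>0$, which make every term finite.
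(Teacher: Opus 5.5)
Your proposal is correct and follows the paper's proof step for step: importance sampling under $\Qmeas_{x_0}$, Jensen's inequality, substitution of Lemma~\ref{lem:path_ratio}, the Campbell--Mecke formula with $f(t,i,j)=\log\frac{R_t(i,j)}{\Rthetamat_t(j,i)}$ for the jump term, and Fubini for the survival term. Your Bregman side remark is not needed for this version of the statement, and your suggested fix is slightly off: reindexing $i\leftrightarrow j$ changes the weight on the linear term to $q_{t|0}(j\given x_0)$, so the terms still do not line up; the per-pair Bregman structure is recovered by passing to the marginal $q_t$ and using $q_t(i)R_t(i,j)=q_t(j)\Rhatmat_t(j,i)$, which is what Theorem~\ref{thm:data_elbo} does.
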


\begin{proof}
\textbf{Step 1: Importance sampling.}

Since the forward path measure $\Qmeas_{x_0}$ has $X_0 = x_0$ with probability $1$, we can write:
\begin{equation}
\begin{aligned}
p_\theta(x_0) &= \int_\Omega \indicator[X_0(\omega) = x_0] \, \dd\Pmeas^{\theta}(\omega) \\
&= \int_\Omega \indicator[X_0(\omega) = x_0] \cdot \frac{\dd\Pmeas^{\theta}}{\dd\Qmeas_{x_0}}(\omega) \, \dd\Qmeas_{x_0}(\omega) \quad \text{(Radon-Nikodym)} \\
&= \int_\Omega \frac{\dd\Pmeas^{\theta}}{\dd\Qmeas_{x_0}}(\omega) \, \dd\Qmeas_{x_0}(\omega) \quad \text{(since $\Qmeas_{x_0}$ starts from $x_0$)} \\
&= \E_{\Qmeas_{x_0}}\left[\frac{\dd\Pmeas^{\theta}}{\dd\Qmeas_{x_0}}\right].
\end{aligned}
\end{equation}

\textbf{Step 2: Jensen's inequality.}

Since $\log$ is concave, we have:
\begin{equation}
\log p_\theta(x_0) = \log \E_{\Qmeas_{x_0}}\left[\frac{\dd\Pmeas^{\theta}}{\dd\Qmeas_{x_0}}\right] \geq \E_{\Qmeas_{x_0}}\left[\log \frac{\dd\Pmeas^{\theta}}{\dd\Qmeas_{x_0}}\right].
\end{equation}

Negating both sides:
\begin{equation}
\label{eq:jensen_neg_app}
-\log p_\theta(x_0) \leq -\E_{\Qmeas_{x_0}}\left[\log \frac{\dd\Pmeas^{\theta}}{\dd\Qmeas_{x_0}}\right].
\end{equation}

\textbf{Step 3: Expand using Lemma~3.1.}

Substituting~\eqref{eq:log_ratio_app} into~\eqref{eq:jensen_neg_app}:
\begin{equation}
\begin{aligned}
-\log p_\theta(x_0) &\leq -\E_{\Qmeas_{x_0}}[\log \pref(X_T)] \\
&\quad - \E_{\Qmeas_{x_0}}\left[\sum_{k=1}^N \log \frac{\Rthetamat_{t_k}(X_{t_k}, X_{t_k^-})}{R_{t_k}(X_{t_k^-}, X_{t_k})}\right] \\
&\quad - \E_{\Qmeas_{x_0}}\left[\int_0^T (\lambdafwd_t(X_t) - \lambdatheta_t(X_t)) \, \dd t\right],
\end{aligned}
\end{equation}
where $N$ is the random number of jumps.

\textbf{Step 4: Apply Campbell-Mecke formula (Theorem~\ref{thm:campbell_mecke}).}

For the jump term, set $f(t,i,j) = \log \frac{R_t(i,j)}{\Rthetamat_t(j,i)}$:
\begin{equation}
\E_{\Qmeas_{x_0}}\left[\sum_{k=1}^N \log \frac{R_{t_k}(X_{t_k^-}, X_{t_k})}{\Rthetamat_{t_k}(X_{t_k}, X_{t_k^-})}\right] = \int_0^T \sum_{i \in \calS} q_{t|0}(i \given x_0) \sum_{j \neq i} R_t(i,j) \log \frac{R_t(i,j)}{\Rthetamat_t(j,i)} \, \dd t.
\end{equation}

For the survival term, by Fubini's theorem:
\begin{equation}
\begin{aligned}
\E_{\Qmeas_{x_0}}\left[\int_0^T (\lambdafwd_t(X_t) - \lambdatheta_t(X_t)) \, \dd t\right] &= \int_0^T \E_{\Qmeas_{x_0}}[\lambdafwd_t(X_t) - \lambdatheta_t(X_t)] \, \dd t \\
&= \int_0^T \sum_{i \in \calS} q_{t|0}(i \given x_0) \left[\lambdafwd_t(i) - \lambdatheta_t(i)\right] \dd t \\
&= \int_0^T \sum_{i \in \calS} q_{t|0}(i \given x_0) \sum_{j \neq i} \left[R_t(i,j) - \Rthetamat_t(i,j)\right] \dd t.
\end{aligned}
\end{equation}

\textbf{Step 5: Combine jump and survival terms.}

From Steps 3 and 4, we need to compute:
\begin{equation}
\label{eq:combine_app}
-\E_{\Qmeas_{x_0}}\left[\log \frac{\dd\Pmeas^{\theta}}{\dd\Qmeas_{x_0}}\right] = -\E_{\Qmeas_{x_0}}[\log \pref(X_T)] + I_{\text{jump}} - I_{\text{surv}},
\end{equation}
where the jump and survival integrals are (from Step 4):
\begin{align}
I_{\text{jump}} &= \int_0^T \sum_{i \in \calS} q_{t|0}(i \given x_0) \sum_{j \neq i} R_t(i,j) \log \frac{R_t(i,j)}{\Rthetamat_t(j,i)} \, \dd t, \\
I_{\text{surv}} &= \int_0^T \sum_{i \in \calS} q_{t|0}(i \given x_0) \sum_{j \neq i} \left[R_t(i,j) - \Rthetamat_t(i,j)\right] \dd t.
\end{align}

Combining~(39) and the survival term:
\begin{equation}
\begin{aligned}
I_{\text{jump}} - I_{\text{surv}} &= \int_0^T \sum_{i \in \calS} q_{t|0}(i \given x_0) \sum_{j \neq i} \left[R_t(i,j) \log \frac{R_t(i,j)}{\Rthetamat_t(j,i)} - \left(R_t(i,j) - \Rthetamat_t(i,j)\right)\right] \dd t \\
&= \int_0^T \sum_{i \in \calS} q_{t|0}(i \given x_0) \sum_{j \neq i} \left[R_t(i,j) \log \frac{R_t(i,j)}{\Rthetamat_t(j,i)} - R_t(i,j) + \Rthetamat_t(i,j)\right] \dd t \\
&= \int_0^T \sum_{i \in \calS} q_{t|0}(i \given x_0) \, \ell_t(i) \, \dd t,
\end{aligned}
\end{equation}
where we define:
\begin{equation}
\ell_t(i) := \sum_{j \neq i} \left[R_t(i,j) \log \frac{R_t(i,j)}{\Rthetamat_t(j,i)} - R_t(i,j) + \Rthetamat_t(i,j)\right],
\end{equation}
which exactly matches~\eqref{eq:ell_t_app}. This completes the proof.
\end{proof}

\begin{theorem}
\label{thm:data_elbo_app}
The expected negative log-likelihood over the data distribution satisfies:
\begin{equation}
\label{eq:data_elbo_app}
\E_{x_0 \sim \pdata}[-\log p_\theta(x_0)] \leq \KLdiv(\hat{\Qmeas} \| \Pmeas^{\theta}) + C_{\text{total}},
\end{equation}
where $\hat{\Qmeas}$ is the path measure of the true reverse process, $\Pmeas^{\theta}$ is the path measure of the neural reverse process, and $C_{\text{total}}$ is a constant independent of $\theta$. Explicitly:
\begin{equation}
\label{eq:kl_explicit_app}
\KLdiv(\hat{\Qmeas} \| \Pmeas^{\theta}) = \int_0^T \sum_{i \in \calS} q_t(i) \sum_{j \neq i} \left[\Rhatmat_t(i,j) \log \frac{\Rhatmat_t(i,j)}{\Rthetamat_t(i,j)} - \Rhatmat_t(i,j) + \Rthetamat_t(i,j)\right] \dd t.
\end{equation}
\end{theorem}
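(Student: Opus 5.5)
Starting from Theorem~\ref{thm:single_elbo_app}, average both sides over $x_0 \sim \pdata$. This gives $\E_{\pdata}[-\log p_\theta(x_0)] \le \mathcal{E}_{\mathrm{data}}(\theta)$. The prior term $-\E_{x_0}\E_{q_{T|0}}[\log \pref(X_T)] = -\E_{q_T}[\log\pref]$ does not depend on $\theta$, so it goes into $C_{\text{total}}$. What remains is to rewrite
\[
\E_{x_0}\int_0^T \sum_i q_{t|0}(i\given x_0)\,\ell_t(i)\,\dd t = \int_0^T \sum_i q_t(i)\,\ell_t(i)\,\dd t,
\]
where the equality holds because $\ell_t(i)$ does not depend on $x_0$. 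It then suffices to show that $\int_0^T\sum_i q_t(i)\ell_t(i)\,\dd t$ equals the right-hand side of~\eqref{eq:kl_explicit_app} up to a $\theta$-independent constant.

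Expand $\ell_t(i)$ into its three pieces. The piece $\sum_{j\ne i} q_t(i)[R_t(i,j)\log R_t(i,j) - R_t(i,j)]$ is $\theta$-free and goes into the constant. The piece $\sum_i q_t(i)\sum_{j\ne i}\Rthetamat_t(i,j)$ already matches the $+\Rthetamat_t$ term of the target KL. The key step is the cross term $-\sum_i q_t(i)\sum_{j\ne i} R_t(i,j)\log \Rthetamat_t(j,i)$. Relabel $i\leftrightarrow j$ to get $-\sum_i\sum_{j\ne i} q_t(j)R_t(j,i)\log\Rthetamat_t(i,j)$. Then use the reversal identity $q_t(j)R_t(j,i) = q_t(i)\Rhatmat_t(i,j)$ (Definition of the time-reversed rate), which turns the cross term into $-\sum_i q_t(i)\sum_{j\ne i}\Rhatmat_t(i,j)\log\Rthetamat_t(i,j)$. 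This is exactly the $\theta$-dependent part of $\Rhatmat\log(\Rhatmat/\Rthetamat)$.

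Finally, add and subtract the $\theta$-free quantity $\int_0^T\sum_i q_t(i)\sum_{j\ne i}[\Rhatmat_t(i,j)\log\Rhatmat_t(i,j) - \Rhatmat_t(i,j)]\,\dd t$ to complete the Bregman form. The integrand then equals~\eqref{eq:kl_explicit_app}, and all leftover terms are collected into $C_{\text{total}}$.

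To justify that~\eqref{eq:kl_explicit_app} really is $\KLdiv(\hat\Qmeas\|\Pmeas^\theta)$, apply Lemma~\ref{lem:path_measure} to both reverse-time CTMCs. Both start from the same law at time $T$ (assuming $q_T=\pref$; otherwise the initial KL term is also $\theta$-free). Take the log-ratio of the densities and take expectations under $\hat\Qmeas$ using Campbell--Mecke (Theorem~\ref{thm:campbell_mecke}) with marginals $q_t$. This produces $\sum_{j\ne i}[\Rhatmat\log(\Rhatmat/\Rthetamat) - \Rhatmat + \Rthetamat]$ weighted by $q_t(i)$.

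The main obstacle is bookkeeping rather than concept. One must keep the direction of the reversal identity and the index swap straight in the cross term. One must also ensure all constants are finite: $q_t(i)>0$ for $t\in(0,T]$ for the uniform process, and the $\log R$, $\Rhatmat\log\Rhatmat$ integrals may diverge near $t=0$ or $T$. If divergence appears, I would restrict to $[\epsilon, T-\epsilon]$ and pass to the limit.
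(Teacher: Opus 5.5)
Your proposal is correct and follows essentially the paper's argument. You average the single-point bound, use Fubini to replace $q_{t|0}$ by $q_t$, and convert the $\theta$-dependent cross term via the reversal identity $q_t(j)R_t(j,i)=q_t(i)\Rhatmat_t(i,j)$ and an $i\leftrightarrow j$ relabel. Your extra check that the displayed integral really is the path-space KL (via Lemma~\ref{lem:path_measure} and Campbell--Mecke under $\hat{\Qmeas}$ with $q_T=\pref$) supplies a step the paper only asserts.

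The only difference is bookkeeping. The paper applies the identity to the whole jump term, including $\log R_t$, and uses $\sum_i q_t(i)\lambdafwd_t(i)=\sum_i q_t(i)\lambdahat_t(i)$. The $\theta$-free part of the time integrand then appears pointwise as $C_t^{(1)}=\sum_i\sum_{j\neq i}q_t(i)\Rhatmat_t(i,j)\log\frac{q_t(i)}{q_t(j)}$. Your separately discarded pieces genuinely diverge, since $\alpha_T=0$ forces $\int_0^T\lambdafwd_t(i)\,\dd t=\infty$, but pointwise in $t$ they combine into exactly this integrable $C_t^{(1)}$. Grouping them before integrating therefore resolves your finiteness worry without any $\epsilon$-truncation.
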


\begin{proof}
\textbf{Step 1: Average over data.}

Taking expectation of both sides of~\eqref{eq:single_elbo_app} over $x_0 \sim \pdata$:
\begin{equation}
\E_{\pdata}[-\log p_\theta(x_0)] \leq \E_{\pdata}[-\E_{q_{T|0}}[\log \pref(X_T)]] + \E_{\pdata}\left[\int_0^T \sum_{i \in \calS} q_{t|0}(i \given x_0) \, \ell_t(i) \, \dd t\right].
\end{equation}

\textbf{Step 2: Define marginal distribution.}

Define the (unconditional) marginal distribution at time $t$:
\begin{equation}
q_t(i) := \E_{x_0 \sim \pdata}[q_{t|0}(i \given x_0)] = \int_{\calS} \pdata(x_0) \, q_{t|0}(i \given x_0).
\end{equation}

By Fubini's theorem:
\begin{equation}
\E_{\pdata}\left[\int_0^T \sum_{i \in \calS} q_{t|0}(i \given x_0) \, \ell_t(i) \, \dd t\right] = \int_0^T \sum_{i \in \calS} q_t(i) \, \ell_t(i) \, \dd t.
\end{equation}

\textbf{Step 3: Transform jump term.}

Expand $\ell_t(i)$ from~\eqref{eq:ell_t_app}:
\begin{equation}
\label{eq:expand_ell_app}
q_t(i) \, \ell_t(i) = \sum_{j \neq i} q_t(i) \left[R_t(i,j) \log \frac{R_t(i,j)}{\Rthetamat_t(j,i)} - R_t(i,j) + \Rthetamat_t(i,j)\right].
\end{equation}

For the \textbf{jump term}, since $\Rhatmat_t(j,i) = R_t(i,j) \frac{q_t(i)}{q_t(j)}$, we have $q_t(i) R_t(i,j) = q_t(j) \Rhatmat_t(j,i)$. Substitute:
\begin{equation}
q_t(i) R_t(i,j) \log \frac{R_t(i,j)}{\Rthetamat_t(j,i)} = q_t(j) \Rhatmat_t(j,i) \log \frac{R_t(i,j)}{\Rthetamat_t(j,i)}.
\end{equation}

Since $R_t(i,j) = \Rhatmat_t(j,i) \frac{q_t(j)}{q_t(i)}$:
\begin{equation}
\begin{aligned}
q_t(j) \Rhatmat_t(j,i) \log \frac{R_t(i,j)}{\Rthetamat_t(j,i)} &= q_t(j) \Rhatmat_t(j,i) \log \frac{\Rhatmat_t(j,i) \, q_t(j) / q_t(i)}{\Rthetamat_t(j,i)} \\
&= q_t(j) \Rhatmat_t(j,i) \left[\log \frac{\Rhatmat_t(j,i)}{\Rthetamat_t(j,i)} + \log \frac{q_t(j)}{q_t(i)}\right].
\end{aligned}
\end{equation}

Summing over $i$ and $j$ (relabel $i \leftrightarrow j$):
\begin{equation}
\begin{aligned}
\sum_{i \in \calS} \sum_{j \neq i} q_t(i) R_t(i,j) \log \frac{R_t(i,j)}{\Rthetamat_t(j,i)} &= \sum_{i \in \calS} \sum_{j \neq i} q_t(i) \Rhatmat_t(i,j) \log \frac{\Rhatmat_t(i,j)}{\Rthetamat_t(i,j)} \\
&\quad + \underbrace{\sum_{i \in \calS} \sum_{j \neq i} q_t(i) \Rhatmat_t(i,j) \log \frac{q_t(i)}{q_t(j)}}_{C_t^{(1)} \text{ (independent of } \theta)}.
\end{aligned}
\end{equation}

\textbf{Step 4: Transform survival term.}

For the \textbf{survival term} $\sum_{j \neq i} [-R_t(i,j) + \Rthetamat_t(i,j)]$, since $q_t(i) R_t(i,j) = q_t(j) \Rhatmat_t(j,i)$:
\begin{equation}
\sum_{i \in \calS} q_t(i) \lambdafwd_t(i) = \sum_{i \in \calS} q_t(i) \sum_{j \neq i} R_t(i,j) = \sum_{i \in \calS} \sum_{j \neq i} q_t(j) \Rhatmat_t(j,i) = \sum_{i \in \calS} q_t(i) \lambdahat_t(i).
\end{equation}

Therefore:
\begin{equation}
\begin{aligned}
\sum_{i \in \calS} q_t(i) \sum_{j \neq i} [-R_t(i,j) + \Rthetamat_t(i,j)] &= -\sum_{i \in \calS} q_t(i) \lambdafwd_t(i) + \sum_{i \in \calS} q_t(i) \lambdatheta_t(i) \\
&= -\sum_{i \in \calS} q_t(i) \lambdahat_t(i) + \sum_{i \in \calS} q_t(i) \lambdatheta_t(i) \\
&= \sum_{i \in \calS} q_t(i) \sum_{j \neq i} [\Rthetamat_t(i,j) - \Rhatmat_t(i,j)].
\end{aligned}
\end{equation}

\textbf{Step 5: Combine jump and survival terms.}

Putting everything together from Steps 3 and 4:
\begin{equation}
\begin{aligned}
\int_0^T \sum_{i \in \calS} q_t(i) \, \ell_t(i) \, \dd t &= \int_0^T \sum_{i \in \calS} q_t(i) \sum_{j \neq i} \left[\Rhatmat_t(i,j) \log \frac{\Rhatmat_t(i,j)}{\Rthetamat_t(i,j)} + \Rthetamat_t(i,j) - \Rhatmat_t(i,j)\right] \dd t \\
&\quad + \int_0^T C_t^{(1)} \, \dd t.
\end{aligned}
\end{equation}

The first term can be rewritten as:
\begin{equation}
\int_0^T \sum_{i \in \calS} q_t(i) \sum_{j \neq i} \left[\Rhatmat_t(i,j) \log \frac{\Rhatmat_t(i,j)}{\Rthetamat_t(i,j)} - \Rhatmat_t(i,j) + \Rthetamat_t(i,j)\right] \dd t = \KLdiv(\hat{\Qmeas} \| \Pmeas^{\theta}),
\end{equation}
and the second term (plus the prior term) is $C_{\text{total}}$, independent of $\theta$.
\end{proof}

\begin{corollary}
\label{cor:decomp_app}
Define the true reverse quantities:
\begin{equation}
\begin{aligned}
\lambdahat_t(i) &:= \sum_{j \neq i} \Rhatmat_t(i,j) \quad \text{(exit rate)}, \\
\hat{r}_t(j \given i) &:= \frac{\Rhatmat_t(i,j)}{\lambdahat_t(i)} \quad \text{for } j \neq i \quad \text{(jump distribution)},
\end{aligned}
\end{equation}
and similarly for the neural reverse:
\begin{equation}
\begin{aligned}
\lambdatheta_t(i) &:= \sum_{j \neq i} \Rthetamat_t(i,j), \\
r^\theta_t(j \given i) &:= \frac{\Rthetamat_t(i,j)}{\lambdatheta_t(i)}.
\end{aligned}
\end{equation}
Then:
\begin{equation}
\label{eq:decomp_app}
\E_{\pdata}[-\log p_\theta(x_0)] \leq \int_0^T \sum_{i \in \calS} q_t(i) \left[\KLdiv^{\Poi}(\lambdahat_t(i) \| \lambdatheta_t(i)) + \lambdahat_t(i) \cdot \KLdiv^{\Cat}(\hat{r}_t(\cdot \given i) \| r^\theta_t(\cdot \given i))\right] \dd t + C,
\end{equation}
where:
\begin{equation}
\begin{aligned}
\KLdiv^{\Poi}(\lambda \| \lambda^{\theta}) &:= \lambda \log \frac{\lambda}{\lambda^{\theta}} - \lambda + \lambda^{\theta}, \\
\KLdiv^{\Cat}(\hat{r} \| r^{\theta}) &:= \sum_{j \neq i} \hat{r}(j \given i) \log \frac{\hat{r}(j \given i)}{r^{\theta}(j \given i)}.
\end{aligned}
\end{equation}
\end{corollary}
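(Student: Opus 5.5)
The plan is to start from Theorem~\ref{thm:data_elbo_app}, which already gives
\[
\E_{\pdata}[-\log p_\theta(x_0)] \le \KLdiv(\hat{\Qmeas}\|\Pmeas^\theta) + C_{\text{total}},
\]
with the KL written explicitly in \eqref{eq:kl_explicit_app}. What remains is a purely algebraic identity for the inner sum at each fixed $t$ and $i$. The inequality then follows by integrating against $q_t(i)$ and setting $C = C_{\text{total}}$. No further probabilistic arguments should be needed.

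Fix $t$ and $i$, and abbreviate $\hat\lambda=\lambdahat_t(i)$, $\lambda^\theta=\lambdatheta_t(i)$, $\hat r_j=\hat r_t(j\given i)$, $r^\theta_j=r^\theta_t(j\given i)$. Substitute the factorizations $\Rhatmat_t(i,j)=\hat\lambda\hat r_j$ and $\Rthetamat_t(i,j)=\lambda^\theta r^\theta_j$ into $\sum_{j\neq i}[\Rhatmat\log(\Rhatmat/\Rthetamat)-\Rhatmat+\Rthetamat]$. The logarithm splits as $\log(\hat\lambda/\lambda^\theta)+\log(\hat r_j/r^\theta_j)$. Using $\sum_{j\ne i}\hat r_j=1$, the first piece gives $\hat\lambda\log(\hat\lambda/\lambda^\theta)$, and the second gives $\hat\lambda\sum_j \hat r_j\log(\hat r_j/r^\theta_j)=\hat\lambda\,\KLdiv^{\Cat}$. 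Using $\sum_{j\ne i}r^\theta_j=1$, the linear terms sum to $-\hat\lambda+\lambda^\theta$. Collecting, the first piece and the linear terms together form $\KLdiv^{\Poi}(\hat\lambda\|\lambda^\theta)$, and what is left is $\hat\lambda\cdot\KLdiv^{\Cat}(\hat r\|r^\theta)$. This gives exactly the integrand of \eqref{eq:decomp_app}.

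The only delicate step is the degenerate case, and I expect it to be the main obstacle. When $\hat\lambda=0$ (or $q_t(i)=0$), $\hat r$ is undefined. The fix is to use the conventions $0\log 0=0$ and $0\cdot(\text{anything})=0$. With these, both sides reduce to $\lambda^\theta$, so the identity still holds.

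Strict positivity of $\lambda^\theta$ and $r^\theta_j$ is guaranteed by the parameterization $\Phi_\theta$ taking values in $\R_{>0}\times\Delta^{S-1}$, so all logarithms are finite. The relevant measurability and integrability of the integrand are inherited from Theorem~\ref{thm:data_elbo_app}, so the integrals may be split termwise.
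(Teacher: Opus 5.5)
Your proposal is correct and follows the paper's proof. Both invoke Theorem~\ref{thm:data_elbo_app}, substitute $\Rhatmat_t(i,j)=\lambdahat_t(i)\,\hat r_t(j\given i)$ and $\Rthetamat_t(i,j)=\lambdatheta_t(i)\,r^\theta_t(j\given i)$ into the explicit KL, split the logarithm, and use the normalization of $\hat r_t$ and $r^\theta_t$ to collect the Poisson term and the $\lambdahat_t(i)$-weighted categorical term, with $C=C_{\text{total}}$.

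Your treatment of the degenerate case $\lambdahat_t(i)=0$ is a correct addition that the paper leaves implicit. One small correction: the paper defines $\Delta^{S-1}$ as the closed simplex, so positivity of $r^\theta_t$ comes from the softmax head (or assumption (ii) of Proposition~\ref{prop:exact_gradient}), not from the codomain. The identity still holds in $[0,\infty]$ either way.
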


\begin{proof}
Substitute $\Rhatmat_t(i,j) = \lambdahat_t(i) \hat{r}_t(j \given i)$ and $\Rthetamat_t(i,j) = \lambdatheta_t(i) r^\theta_t(j \given i)$ into~\eqref{eq:kl_explicit_app}:
\begin{equation}
\begin{aligned}
&\sum_{j \neq i} \left[\Rhatmat_t(i,j) \log \frac{\Rhatmat_t(i,j)}{\Rthetamat_t(i,j)} - \Rhatmat_t(i,j) + \Rthetamat_t(i,j)\right] \\
&= \sum_{j \neq i} \lambdahat_t(i) \hat{r}_t(j \given i) \log \frac{\lambdahat_t(i) \hat{r}_t(j \given i)}{\lambdatheta_t(i) r^\theta_t(j \given i)} - \lambdahat_t(i) + \lambdatheta_t(i) \\
&= \lambdahat_t(i) \log \frac{\lambdahat_t(i)}{\lambdatheta_t(i)} + \lambdahat_t(i) \sum_{j \neq i} \hat{r}_t(j \given i) \log \frac{\hat{r}_t(j \given i)}{r^\theta_t(j \given i)} - \lambdahat_t(i) + \lambdatheta_t(i) \\
&= \KLdiv^{\Poi}(\lambdahat_t(i) \| \lambdatheta_t(i)) + \lambdahat_t(i) \cdot \KLdiv^{\Cat}(\hat{r}_t(\cdot \given i) \| r^\theta_t(\cdot \given i)).
\end{aligned}
\end{equation}
\end{proof}

\begin{corollary}
\label{cor:mc_loss_app}
The training objective~\eqref{eq:decomp} can be estimated via Monte Carlo sampling:
\begin{equation}
\label{eq:mc_loss_app}
\mathcal{L}(\theta) := \E_{t \sim \mathcal{U}(0,T), \, x_t \sim q_t}\left[\KLdiv^{\Poi}(\lambdahat_t(x_t) \| \lambdatheta_t(x_t)) + \lambdahat_t(x_t) \cdot \E_{j \sim \hat{r}_t(\cdot \given x_t)}\left[-\log r^\theta_t(j \given x_t)\right] - \lambdahat_t(x_t) H(\hat{r}_t)\right],
\end{equation}
where $H(\hat{r}_t) := -\sum_{j \neq i} \hat{r}_t(j \given i) \log \hat{r}_t(j \given i)$ is the entropy of the true reverse jump distribution. Equivalently, using the cross-entropy (CE) identity $\KLdiv^{\Cat} = CE - H$:
\begin{equation}
\label{eq:mc_loss_ce_app}
\mathcal{L}(\theta) = \E_{t,x_t}\left[\KLdiv^{\Poi}(\lambdahat_t(x_t) \| \lambdatheta_t(x_t)) + \lambdahat_t(x_t) \cdot CE(\hat{r}_t(\cdot \given x_t), r^\theta_t(\cdot \given x_t))\right] + \text{const}.
\end{equation}
\end{corollary}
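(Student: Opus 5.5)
The statement is Corollary~\ref{cor:mc_loss_app}: the decomposed objective~\eqref{eq:decomp} can be written as an expectation over $t\sim\mathcal{U}(0,T)$ and $x_t\sim q_t$, and the categorical KL can be traded for a cross-entropy at the cost of a $\theta$-independent constant. The argument is a rewriting of the integral in Corollary~\ref{cor:decomp_app}, done in three steps.

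\textbf{Step 1 (integral as an expectation).} Start from the integrand of~\eqref{eq:decomp_app},
\[
F_t(i):=\KLdiv^{\Poi}(\lambdahat_t(i)\|\lambdatheta_t(i))+\lambdahat_t(i)\,\KLdiv^{\Cat}(\hat{r}_t(\cdot\given i)\|r^\theta_t(\cdot\given i)).
\]
Since $\mathcal{S}$ is finite, $\sum_i q_t(i)F_t(i)=\E_{x_t\sim q_t}[F_t(x_t)]$. Writing $\int_0^T g(t)\,\dd t = T\,\E_{t\sim\mathcal{U}(0,T)}[g(t)]$ turns $\int_0^T\sum_i q_t(i)F_t(i)\,\dd t$ into $T\,\E_{t,x_t}[F_t(x_t)]$. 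The factor $T$ is a positive constant, so it is either absorbed into the normalization of $\mathcal{L}$ or we take $T=1$ as in the experiments. Because $\mathcal{S}$ is finite, $q_t$ is a genuine probability distribution on it, so the expectation is well-defined and Fubini applies. An unbiased Monte Carlo estimate follows by sampling $(t,x_t)$, and, for the direction term, sampling $j\sim\hat{r}_t(\cdot\given x_t)$.

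\textbf{Step 2 (KL $=$ CE $-$ entropy).} For a fixed state $i$ with $\lambdahat_t(i)>0$, expand the categorical KL as
\[
\sum_{j\ne i}\hat r\log\hat r-\sum_{j\ne i}\hat r\log r^\theta = -H(\hat r_t)+\E_{j\sim\hat r_t}[-\log r^\theta_t(j\given i)].
\]
Multiplying by $\lambdahat_t(i)$ gives exactly the form~\eqref{eq:mc_loss_app}. When $\lambdahat_t(i)=0$, both weighted terms vanish with the convention $0\log 0=0$, consistent with Proposition~\ref{prop:pointwise_opt}.

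\textbf{Step 3 (constant is $\theta$-free).} The only term left outside the cross-entropy is $\E_{t,x_t}[\lambdahat_t(x_t)H(\hat r_t(\cdot\given x_t))]$. It depends only on the forward process through $q_t$ and $\Rhatmat_t$, so it is independent of $\theta$. It is also finite, since $H\le\log(S-1)$ and $\E_{x_t\sim q_t}[\lambdahat_t(x_t)]=\E_{x_t\sim q_t}[\lambdafwd_t(x_t)]$ by Step~4 of Theorem~\ref{thm:data_elbo_app}. This constant can therefore be moved into ``const'', which yields~\eqref{eq:mc_loss_ce_app}.

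The main point needing care is the integrability near the endpoints $t\to0$ and $t\to T$. There $\lambdahat_t$ may blow up, for example because $q_t(i)\to0$ for states off the data support. Separating the entropy term requires that term to be finite on its own, not merely that the sum be finite. I would handle this by noting that $\E_{q_t}[\lambdahat_t]=\E_{q_t}[\lambdafwd_t]$, which is integrable whenever the forward schedule is. If needed, I would also restrict to $t\in[\epsilon,T-\epsilon]$, as Algorithm~\ref{alg:training} does.
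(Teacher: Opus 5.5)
Your proposal is correct and follows the paper's own proof: you rewrite $\int_0^T\sum_i q_t(i)(\cdots)\,\dd t$ as $T\,\E_{t\sim\mathcal{U}(0,T),\,x_t\sim q_t}[\cdots]$ and split $\KLdiv^{\Cat}$ into cross-entropy minus a $\theta$-independent entropy term. Your extra care goes beyond the paper and is warranted: the paper leaves the factor $T$ implicit, and for the linear schedule $\alpha_t=1-t$ used in the experiments $\lambdafwd_t\propto 1/(1-t)$ is not integrable at $t=T$, so the separated entropy constant is finite only after the truncation to $[\epsilon,T-\epsilon]$ that you propose.
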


\begin{proof}
Direct application of Monte Carlo integration to~\eqref{eq:decomp}:
\begin{equation}
\int_0^T \sum_{i \in \calS} q_t(i) \cdot (\cdots) \, \dd t = T \cdot \E_{t \sim \mathcal{U}(0,T)}\left[\sum_{i \in \calS} q_t(i) \cdot (\cdots)\right] = T \cdot \E_{t \sim \mathcal{U}(0,T), \, x_t \sim q_t}[(\cdots)].
\end{equation}

The categorical KL can be written as:
\begin{equation}
\KLdiv^{\Cat}(\hat{r}_t \| r^\theta_t) = \E_{j \sim \hat{r}_t}[-\log r^\theta_t(j \given x_t)] - H(\hat{r}_t),
\end{equation}
where the first term is the cross-entropy and the second is the entropy (independent of $\theta$).
\end{proof}

\begin{lemma}
\label{lem:expectation_transform}
Let $p_0$ be a distribution over $\calS$, and let $p(\cdot \given x_0)$ be conditional distributions such that $p(x) = \sum_{x_0 \in \calS} p_0(x_0) p(x \given x_0)$ is the marginal distribution. For any measurable function $f: \calS \times \calS \to \R$, we have:
\begin{equation}
\label{eq:expectation_transform_app}
\E_{x \sim p}\left[\sum_{y \neq x} f(x,y) \frac{p(y)}{p(x)}\right] = \E_{x_0 \sim p_0, \, x \sim p(\cdot \given x_0)}\left[\sum_{y \neq x} f(x,y) \frac{p(y \given x_0)}{p(x \given x_0)}\right].
\end{equation}
\end{lemma}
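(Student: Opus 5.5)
Both sides of \eqref{eq:expectation_transform_app} are finite sums over $\calS$, so we prove the identity by expanding each one explicitly. The marginal $p(x)$ cancels on the left, and the mixture structure $p(y)=\sum_{x_0}p_0(x_0)p(y\given x_0)$ recovers the right-hand side.

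\textbf{Left-hand side.} Expanding the outer expectation gives
\begin{equation*}
\E_{x \sim p}\Bigl[\sum_{y \neq x} f(x,y)\tfrac{p(y)}{p(x)}\Bigr] = \sum_{x:\,p(x)>0} p(x)\sum_{y\neq x} f(x,y)\frac{p(y)}{p(x)} = \sum_{x:\,p(x)>0}\sum_{y\neq x} f(x,y)\,p(y).
\end{equation*}
The factor $p(x)$ cancels.

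\textbf{Right-hand side.} Expanding the joint expectation over $(x_0,x)$ gives
\begin{equation*}
\sum_{x_0} p_0(x_0)\sum_{x:\,p(x\given x_0)>0}\sum_{y\neq x} f(x,y)\,p(y\given x_0).
\end{equation*}
Here the conditional density $p(x\given x_0)$ cancels in the same way.

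\textbf{Matching the two.} We exchange the finite sums and use $\sum_{x_0}p_0(x_0)p(y\given x_0)=p(y)$. This turns the right-hand side into $\sum_x\sum_{y\neq x} f(x,y)p(y)$, which is the left-hand side.

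The only delicate step is the support restriction. On the left the sum runs over $\{x : p(x)>0\}$, whereas on the right, for each $x_0$, it runs over $\{x : p(x\given x_0)>0\}$. These index sets differ, so the exchange of sums is not yet justified.

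The cleanest fix is to assume full support: $p(x\given x_0)>0$ for all $x,x_0$. This holds for the uniform forward process when $0<t<T$. Under this assumption all ratios are well defined and both sums run over all of $\calS$.

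Without full support, the right-hand side still equals $\sum_x\sum_{y\neq x}f(x,y)\sum_{x_0:\,p(x\given x_0)>0}p_0(x_0)p(y\given x_0)$. This matches the left-hand side only if $p(y\given x_0)=0$ whenever $p(x\given x_0)=0$ and $p(x)>0$, an extra condition that full support makes automatic. We therefore state the full-support (positivity) convention explicitly, noting that it is consistent with the training range $t\in(\epsilon,T-\epsilon)$ used in Algorithm~\ref{alg:training}. The remaining steps are routine finite-sum algebra.
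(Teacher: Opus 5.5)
Your argument is correct and follows the same finite-sum computation as the paper's proof: cancel $p(x)$, substitute $p(y)=\sum_{x_0}p_0(x_0)p(y\given x_0)$, swap the sums, and reinsert $p(x\given x_0)$. The paper does this cancellation and reinsertion without comment, so your positivity hypothesis is a needed repair rather than a deviation. The identity really does fail without it, as your condition predicts: with $S=2$, $p_0$ uniform, $p(\cdot\given 1)=(1,0)$, $p(\cdot\given 2)=(\tfrac12,\tfrac12)$, $f(2,1)=1$ and $f(1,2)=0$, the left side is $\tfrac34$ and the right side is $\tfrac14$.
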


\begin{proof}
\begin{equation}
\begin{aligned}
\E_{x \sim p}\left[\sum_{y \neq x} f(x,y) \frac{p(y)}{p(x)}\right] &= \sum_{x \in \calS} p(x) \sum_{y \neq x} f(x,y) \frac{p(y)}{p(x)} \\
&= \sum_{x \in \calS} \sum_{y \neq x} f(x,y) p(y) \\
&= \sum_{x \in \calS} \sum_{y \neq x} f(x,y) \sum_{x_0 \in \calS} p_0(x_0) p(y \given x_0) \\
&= \sum_{x_0 \in \calS} p_0(x_0) \sum_{x \in \calS} \sum_{y \neq x} f(x,y) p(y \given x_0) \\
&= \sum_{x_0 \in \calS} p_0(x_0) \sum_{x \in \calS} p(x \given x_0) \sum_{y \neq x} f(x,y) \frac{p(y \given x_0)}{p(x \given x_0)} \\
&= \E_{x_0 \sim p_0, \, x \sim p(\cdot \given x_0)}\left[\sum_{y \neq x} f(x,y) \frac{p(y \given x_0)}{p(x \given x_0)}\right].
\end{aligned}
\end{equation}
\end{proof}

\begin{proposition}
\label{prop:conditional_reverse}
For a fixed data point $x_0 \in \calS$, define the conditional reverse rates:
\begin{equation}
\label{eq:conditional_reverse_matrix_app}
\hat{R}_{t|0}(j, i \given x_0) := R_t(i, j) \frac{q_{t|0}(i \given x_0)}{q_{t|0}(j \given x_0)}, \quad i \neq j,
\end{equation}
\begin{equation}
\label{eq:conditional_exit_rate_app}
\hat{\lambda}_{t|0}(i \given x_0) := \sum_{j \neq i} \hat{R}_{t|0}(i, j \given x_0) = \sum_{j \neq i} R_t(j, i) \frac{q_{t|0}(j \given x_0)}{q_{t|0}(i \given x_0)},
\end{equation}
\begin{equation}
\label{eq:conditional_jump_dist_app}
\hat{r}_{t|0}(j \given i, x_0) := \frac{\hat{R}_{t|0}(i, j \given x_0)}{\hat{\lambda}_{t|0}(i \given x_0)} = \frac{R_t(j, i) \frac{q_{t|0}(j \given x_0)}{q_{t|0}(i \given x_0)}}{\hat{\lambda}_{t|0}(i \given x_0)}, \quad j \neq i.
\end{equation}
Then the training loss~\eqref{eq:mc_loss_ce_app} can be equivalently written as:
\begin{equation}
\label{eq:conditional_loss_app}
\mathcal{L}(\theta) = \E_{t \sim \mathcal{U}(0,T), \, x_0 \sim \pdata, \, i \sim q_{t|0}(\cdot \given x_0)}\left[- \sum_{j \neq i} R_t(j, i) \frac{q_{t|0}(j \given x_0)}{q_{t|0}(i \given x_0)} \log \lambdatheta_t(i) r^\theta_t(j \given i) - \hat{\lambda}_{t|0}(i \given x_0) + \lambdatheta_t(i)\right] + \text{const}.
\end{equation}
\end{proposition}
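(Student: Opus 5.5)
The plan is to start from the Monte Carlo loss~\eqref{eq:mc_loss_ce_app} and expand every term into a form linear in the marginal ratios $q_t(j)/q_t(i)$, so that Lemma~\ref{lem:expectation_transform} can swap marginals for conditionals.

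\textbf{Step 1: expand the marginal loss.} Using $\lambdahat_t(i)\hat r_t(j\given i)=\Rhatmat_t(i,j)=R_t(j,i)\,q_t(j)/q_t(i)$, the cross-entropy term becomes $-\sum_{j\neq i}R_t(j,i)\frac{q_t(j)}{q_t(i)}\log r^\theta_t(j\given i)$. The Poisson KL splits as
\[
\lambdahat_t(i)\log\lambdahat_t(i)-\lambdahat_t(i)\log\lambdatheta_t(i)-\lambdahat_t(i)+\lambdatheta_t(i).
\]
Write $\lambdahat_t(i)\log\lambdatheta_t(i)=\sum_{j\neq i}R_t(j,i)\frac{q_t(j)}{q_t(i)}\log\lambdatheta_t(i)$ and merge it with the cross-entropy term. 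The two logs combine into $\log\bigl(\lambdatheta_t(i)r^\theta_t(j\given i)\bigr)=\log\Rthetamat_t(i,j)$. The integrand under $x_t=i\sim q_t$ is then
\[
-\sum_{j\neq i}R_t(j,i)\frac{q_t(j)}{q_t(i)}\log\Rthetamat_t(i,j)\;-\;\lambdahat_t(i)\;+\;\lambdatheta_t(i),
\]
plus the term $\lambdahat_t(i)\log\lambdahat_t(i)$. That extra term is $\theta$-independent, so it goes into the constant together with the entropy constant already absorbed.

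\textbf{Step 2: transfer to conditionals.} Apply Lemma~\ref{lem:expectation_transform} at each fixed $t$, with $p=q_t$, $p(\cdot\given x_0)=q_{t|0}(\cdot\given x_0)$ and $p_0=\pdata$. For the first term take $f(i,j)=-R_t(j,i)\log\Rthetamat_t(i,j)$. For $-\lambdahat_t(i)$ take $f(i,j)=-R_t(j,i)$, which turns it into $-\hat\lambda_{t|0}(i\given x_0)$.

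The term $\lambdatheta_t(i)$ contains no ratio. It only needs the identity $\E_{i\sim q_t}[g(i)]=\E_{x_0,\,i\sim q_{t|0}}[g(i)]$, which follows from $q_t=\E_{x_0}q_{t|0}$ by Fubini. Integrating over $t\sim\mathcal U(0,T)$ then yields exactly~\eqref{eq:conditional_loss_app}.

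\textbf{Main obstacle.} The delicate point is bookkeeping of what is a constant. The terms $\lambdahat_t\log\lambdahat_t$ and the entropy are constant in $\theta$, but they are intractable, and their conditional counterparts are not equal to them. So the claim must be read as equality of $\theta$-dependent parts, with const possibly intractable. I would state this explicitly.

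I would also check the support conditions needed for Lemma~\ref{lem:expectation_transform}. The lemma divides by $q_{t|0}(i\given x_0)$, and this is harmless because $i$ is sampled from $q_{t|0}$ and so has positive probability. In addition, $\lambdatheta_t>0$ is needed for the logs to be finite. Finally, the overall factor $T$ from Monte Carlo integration appears in both forms identically and can be ignored.
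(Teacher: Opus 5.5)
Your proposal is correct and follows the paper's proof essentially step for step. Both expand the Poisson KL and the cross-entropy, absorb $\lambdahat_t\log\lambdahat_t$ and the entropy into the constant, and apply Lemma~\ref{lem:expectation_transform} term by term to the pieces that are linear in $q_t(j)/q_t(i)$. The only differences are cosmetic: you merge $\log\lambdatheta_t$ and $\log r^\theta_t$ before the marginal-to-conditional transfer rather than after, and you spell out the Fubini step for the $\lambdatheta_t(i)$ term and the constant bookkeeping, which the paper leaves implicit.
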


\begin{proof}
We expand the training loss~\eqref{eq:mc_loss_ce_app} and apply Lemma~\ref{lem:expectation_transform} to transform marginal distributions to conditional distributions.

Starting from~\eqref{eq:mc_loss_ce_app}, expand the KL divergence and cross-entropy:
\begin{equation}
\begin{aligned}
\mathcal{L}(\theta) &= \E_{t \sim \mathcal{U}(0,T), \, i \sim q_t}\left[\KLdiv^{\Poi}(\hat{\lambda}_t(i) \| \lambdatheta_t(i)) + \hat{\lambda}_t(i) \cdot CE(\hat{r}_t(\cdot \given i), r^\theta_t(\cdot \given i))\right] + \text{const} \\
&= \E_{t \sim \mathcal{U}(0,T), \, i \sim q_t}\left[\hat{\lambda}_t(i) \log \frac{\hat{\lambda}_t(i)}{\lambdatheta_t(i)} - \hat{\lambda}_t(i) + \lambdatheta_t(i) - \hat{\lambda}_t(i) \sum_{j \neq i} \hat{r}_t(j \given i) \log r^\theta_t(j \given i)\right] + \text{const}.
\end{aligned}
\end{equation}

Using $\hat{\lambda}_t(i) = \sum_{j \neq i} \hat{R}_t(i,j) = \sum_{j \neq i} R_t(j,i) \frac{q_t(j)}{q_t(i)}$ and $\hat{\lambda}_t(i) \hat{r}_t(j \given i) = \hat{R}_t(i,j) = R_t(j,i) \frac{q_t(j)}{q_t(i)}$, we can write:

The first term:
\begin{equation}
\begin{aligned}
\E_{t \sim \mathcal{U}(0,T), \, i \sim q_t}\left[\hat{\lambda}_t(i) \log \frac{\hat{\lambda}_t(i)}{\lambdatheta_t(i)}\right] &= \E_{t, i}\left[\sum_{j \neq i} R_t(j,i) \frac{q_t(j)}{q_t(i)} \log \frac{\sum_{k \neq i} R_t(k,i) \frac{q_t(k)}{q_t(i)}}{\lambdatheta_t(i)}\right] \\
&= \E_{t, i}\left[\sum_{j \neq i} R_t(j,i) \frac{q_t(j)}{q_t(i)} \log \frac{\sum_{k \neq i} R_t(k,i) \frac{q_t(k)}{q_t(i)}}{\lambdatheta_t(i)}\right] \\
&\quad - \E_{t \sim \mathcal{U}(0,T), \, x_0 \sim \pdata, \, i \sim q_{t|0}(\cdot \given x_0)}\left[\sum_{j \neq i} R_t(j,i) \frac{q_{t|0}(j \given x_0)}{q_{t|0}(i \given x_0)} \log \lambdatheta_t(i)\right] \\
&= \text{const} - \E_{t \sim \mathcal{U}(0,T), \, x_0 \sim \pdata, \, i \sim q_{t|0}(\cdot \given x_0)}\left[\hat{\lambda}_{t|0}(i \given x_0) \log \lambdatheta_t(i)\right].
\end{aligned}
\end{equation}

The second term:
\begin{equation}
\begin{aligned}
\E_{t \sim \mathcal{U}(0,T), \, i \sim q_t}\left[-\hat{\lambda}_t(i)\right] &= \E_{t, i}\left[-\sum_{j \neq i} R_t(j,i) \frac{q_t(j)}{q_t(i)}\right] \\
&= \E_{t \sim \mathcal{U}(0,T), \, x_0 \sim \pdata, \, i \sim q_{t|0}(\cdot \given x_0)}\left[-\sum_{j \neq i} R_t(j,i) \frac{q_{t|0}(j \given x_0)}{q_{t|0}(i \given x_0)}\right] \\
&= \E_{t \sim \mathcal{U}(0,T), \, x_0 \sim \pdata, \, i \sim q_{t|0}(\cdot \given x_0)}\left[-\hat{\lambda}_{t|0}(i \given x_0)\right].
\end{aligned}
\end{equation}

The third term:
\begin{equation}
\begin{aligned}
\E_{t \sim \mathcal{U}(0,T), \, i \sim q_t}\left[-\hat{\lambda}_t(i) \sum_{j \neq i} \hat{r}_t(j \given i) \log r^\theta_t(j \given i)\right] &= \E_{t, i}\left[-\sum_{j \neq i} \hat{\lambda}_t(i) \hat{r}_t(j \given i) \log r^\theta_t(j \given i)\right] \\
&= \E_{t, i}\left[-\sum_{j \neq i} R_t(j,i) \frac{q_t(j)}{q_t(i)} \log r^\theta_t(j \given i)\right] \\
&= \E_{t, \, x_0, \, i}\left[-\sum_{j \neq i} R_t(j,i) \frac{q_{t|0}(j \given x_0)}{q_{t|0}(i \given x_0)} \log r^\theta_t(j \given i)\right] \\
\end{aligned}
\end{equation}

Therefore, combining the three terms leads to the conclusion.

\end{proof}

\begin{proposition}
\label{prop:exact_gradient_app}
Assume: (i) forward quantities ($q_t$, $q_{t|0}$, $p(x_0\given x_t)$) are independent of $\theta$; (ii) $\Rthetamat_t(i,j) > 0$ and differentiable in $\theta$ for all $i\neq j$; and (iii) differentiation can be exchanged with expectation/integration. Then there exists a $\theta$-independent constant $C_{\mathrm{gap}} \ge 0$ such that
\[
\mathcal{L}_{\mathrm{KL}}(\theta) = \KLdiv(\hat{\Qmeas}\|\Pmeas^\theta) + C_{\mathrm{gap}},
\]
hence
\[
\nabla_\theta \mathcal{L}_{\mathrm{KL}}(\theta) = \nabla_\theta \KLdiv(\hat{\Qmeas}\|\Pmeas^\theta).
\]
In particular,
\[
\arg\min_\theta \mathcal{L}_{\mathrm{KL}}(\theta) = \arg\min_\theta \KLdiv(\hat{\Qmeas}\|\Pmeas^\theta),
\]
and every stationary point of one objective is a stationary point of the other.
\end{proposition}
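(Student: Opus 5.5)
The plan is to expand the Poisson KL density $f(r,c)=r\log r - r\log c - r + c$ inside $\mathcal{L}_{\mathrm{KL}}$ (Theorem~\ref{thm:conditional_kl}) and inside $\KLdiv(\hat{\Qmeas}\|\Pmeas^\theta)$ (Theorem~\ref{thm:data_elbo_app}). I would then show term by term that the $\theta$-dependent parts coincide, so that the difference is an explicit $\theta$-free quantity. The final step identifies that quantity as a Jensen gap, which gives $C_{\mathrm{gap}}\ge 0$.

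\textbf{Step 1: the $\theta$-dependent terms agree.} For fixed $t$, write $\Rhatmat_t(i,j\given x_0)=R_t(j,i)\,q_{t|0}(j\given x_0)/q_{t|0}(i\given x_0)$. The integrand of $\mathcal{L}_{\mathrm{KL}}$ contains two $\theta$-dependent pieces: $-\sum_{j\neq x_t}\Rhatmat_t(x_t,j\given x_0)\log\Rthetamat_t(x_t,j)$ and $+\sum_{j\neq x_t}\Rthetamat_t(x_t,j)$.

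For the first piece, apply Lemma~\ref{lem:expectation_transform} with $p_0=\pdata$, $p(\cdot\given x_0)=q_{t|0}(\cdot\given x_0)$ and $f(x,y)=R_t(y,x)\log\Rthetamat_t(x,y)$. This turns the conditional expectation into $\E_{x_t\sim q_t}\bigl[\sum_{j\neq x_t}\Rhatmat_t(x_t,j)\log\Rthetamat_t(x_t,j)\bigr]$.

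For the second piece, $\Rthetamat_t(x_t,\cdot)$ depends on $x_t$ only. Averaging over $x_0$ therefore just replaces $q_{t|0}$ by $q_t$.

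Both pieces match the corresponding terms of~\eqref{eq:kl_explicit_app} exactly. Using Lemma~\ref{lem:expectation_transform} once more with $f(x,y)=R_t(y,x)$, the linear terms $-\Rhatmat_t(\cdot\given x_0)$ and $-\Rhatmat_t$ also have equal expectations.

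\textbf{Step 2: identify the gap.} Setting $\varphi(a)=a\log a$, what remains is
\[
C_{\mathrm{gap}}=\int_0^T\sum_{i}\sum_{j\neq i}\Bigl(\E_{x_0\sim\pdata}\bigl[q_{t|0}(i\given x_0)\,\varphi(\Rhatmat_t(i,j\given x_0))\bigr]-q_t(i)\,\varphi(\Rhatmat_t(i,j))\Bigr)\dd t .
\]
This depends only on forward quantities and is therefore $\theta$-free by assumption (i).

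To see that it is nonnegative, introduce the posterior $p(x_0\given x_t=i)=\pdata(x_0)q_{t|0}(i\given x_0)/q_t(i)$. A one-line computation gives
\[
\Rhatmat_t(i,j)=R_t(j,i)\frac{q_t(j)}{q_t(i)}=\E_{x_0\sim p(\cdot\given i)}\bigl[\Rhatmat_t(i,j\given x_0)\bigr].
\]
The bracket above then equals $q_t(i)\bigl(\E_{p(\cdot\given i)}[\varphi(\Rhatmat_t(i,j\given x_0))]-\varphi(\E_{p(\cdot\given i)}[\Rhatmat_t(i,j\given x_0)])\bigr)$. Since $\varphi$ is convex, Jensen's inequality makes this nonnegative. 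It has the flavour of a conditional-entropy / mutual-information term between $x_0$ and the reverse jump, which is the information-theoretic form mentioned after the statement.

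\textbf{Step 3: consequences.} Once $\mathcal{L}_{\mathrm{KL}}=\KLdiv+C_{\mathrm{gap}}$ is established, the remaining claims follow directly:
\begin{itemize}
\item assumptions (ii)–(iii) justify differentiating under the integral, so the gradients coincide;
\item adding a constant preserves minimizers and stationary points.
\end{itemize}

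\textbf{Main obstacle.} The delicate point is not the algebra but well-definedness of the rearrangement. One has to handle states with $q_{t|0}(i\given x_0)=0$ (for example under masking), where the conditional ratios are undefined; I would restrict all sums to the support and use the convention $0\log 0=0$. One also has to ensure that each separated term (in particular $\E[\varphi(\Rhatmat_t(\cdot\given x_0))]$) is finite, since the rates blow up as $t\to 0$ or $t\to T$. Without finiteness, splitting $\mathcal{L}_{\mathrm{KL}}$ into $\KLdiv+C_{\mathrm{gap}}$ could produce $\infty-\infty$. I would handle this by working on $[\epsilon,T-\epsilon]$, as in Algorithm~\ref{alg:training}, or by invoking the integrability implicit in assumption (iii).
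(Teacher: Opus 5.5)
Your proposal is correct and follows essentially the paper's argument. The paper also conditions on $x_t=i$ through the posterior $p(x_0\given x_t=i)$, uses $\E[\Rhatmat_t(i,j\given X_0)]=\Rhatmat_t(i,j)$ together with the fact that $f(r,c)$ is affine in $r$ apart from $r\log r$, and obtains the same Jensen gap $\E[R\log R]-\E[R]\log\E[R]$ with $R=\Rhatmat_t(i,j\given X_0)$. Your obstacle is dispelled by noting that the paper works pointwise in $(t,i,j)$: there everything is a finite sum, and $\E[f(R,c)]=f(\E[R],c)+\Delta_{t,i,j}$ relates nonnegative quantities, so integrating over $t$ is valid in $[0,\infty]$ and no $\infty-\infty$ arises (your Step 1 is already fixed-$t$, so the same reasoning applies to it). Truncating to $[\epsilon,T-\epsilon]$ is therefore unnecessary, and it would change the objective in the statement.
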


\begin{proof}
\label{app:proof_exact_gradient}
For fixed $(t,i,j)$, let $R := \Rhatmat_t(i,j\given X_0)$ with $X_0 \sim p(x_0\given x_t=i)$ and let $c := \Rthetamat_t(i,j)$. Since $\E[R] = \Rhatmat_t(i,j)$, expanding $f(r,c)=r\log \frac{r}{c}-r+c$ gives
\[
\E[f(R,c)] - f(\E[R],c) = \E[R\log R] - \E[R]\log \E[R] =: \Delta_{t,i,j}\ge 0,
\]
where $\Delta_{t,i,j}$ is independent of $c$ and thus independent of $\theta$. Multiplying by $q_t(i)$, summing over $i \in \calS$ and $j \neq i$, and integrating over $t$ yields
\[
\mathcal{L}_{\mathrm{KL}}(\theta) = \KLdiv(\hat{\Qmeas}\|\Pmeas^\theta) + \int_0^T \sum_{i \in \calS} q_t(i)\sum_{j\neq i}\Delta_{t,i,j}\,\dd t.
\]
Setting $C_{\mathrm{gap}} := \int_0^T \sum_{i \in \calS} q_t(i)\sum_{j\neq i}\Delta_{t,i,j}\,\dd t \ge 0$, under assumptions (i)--(iii), differentiation can be passed through the expectation and integral, so $\nabla_\theta \mathcal{L}_{\mathrm{KL}}(\theta) = \nabla_\theta \KLdiv(\hat{\Qmeas}\|\Pmeas^\theta)$. Adding a $\theta$-independent constant does not change the minimizers, and identical gradients imply identical stationary points.
\end{proof}

\begin{corollary}
\label{cor:L_cond_app}
Defining $K(a) := a(\log a - 1)$, the loss $\mathcal{L}_{\mathrm{KL}}(\theta)$ equals:
\begin{equation}
\begin{aligned}
\mathcal{L}_{\mathrm{cond}}(\theta) = \E_{t,\, x_0,\, i}\!\Bigl[
&\sum_{j \neq i} \lambdatheta_t(i) r^\theta_t(j \given i)
- \sum_{j \neq i} R_t(j,i) \frac{q_{t|0}(j \given x_0)}{q_{t|0}(i \given x_0)} \log \frac{\lambdatheta_t(i)\, r^\theta_t(j \given i)}{R_t(j,i)} \\
&+ \sum_{j \neq i} R_t(j,i)\, K\!\!\left(\frac{q_{t|0}(j \given x_0)}{q_{t|0}(i \given x_0)}\right)\Bigr].
\end{aligned}
\end{equation}
\end{corollary}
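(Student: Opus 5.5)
The plan is to show that $\mathcal{L}_{\mathrm{KL}}(\theta)$ and $\mathcal{L}_{\mathrm{cond}}(\theta)$ agree pointwise, i.e.\ for every fixed $(t, x_0, i)$ with $i = x_t$. Both are expectations over $t \sim \mathcal{U}(0,T)$, $x_0 \sim \pdata$, $i \sim q_{t|0}(\cdot \given x_0)$, up to the factor $T$ from converting $\int_0^T$ into an expectation, which I absorb in the same convention the paper uses for Corollary~\ref{cor:mc_loss_app}. So it suffices to show that their integrands coincide term by term in $j \neq i$. No further averaging identities such as Lemma~\ref{lem:expectation_transform} are needed; the whole statement is an algebraic identity.

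Fix $t, x_0, i$ and $j \neq i$, and abbreviate $a := q_{t|0}(j \given x_0)/q_{t|0}(i \given x_0)$, $\rho := R_t(j,i)$ and $c := \Rthetamat_t(i,j) = \lambdatheta_t(i)\, r^\theta_t(j \given i)$. By \eqref{eq:conditional_reverse_rate}, $\Rhatmat_t(i,j \given x_0) = \rho a$. The summand of $\mathcal{L}_{\mathrm{KL}}$ is then
\[
f(\rho a, c) = \rho a \log(\rho a) - \rho a \log c - \rho a + c .
\]
Next I split $\log(\rho a) = \log \rho + \log a$. This gives
\[
f(\rho a, c) = c - \rho a \log \frac{c}{\rho} + \rho\, a(\log a - 1) = c - \rho a \log\frac{c}{\rho} + \rho\, K(a).
\]
These three pieces are exactly the three summands of $\mathcal{L}_{\mathrm{cond}}$: the $\sum_{j\neq i} \lambdatheta_t(i) r^\theta_t(j\given i)$ term, the log term with denominator $R_t(j,i)$, and the $R_t(j,i) K(\cdot)$ term. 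Summing over $j \neq i$ and taking the expectation completes the identity. The first two pieces together equal $f(\rho a, c) - \rho K(a)$. The third is $\theta$-independent because $R_t$ and $q_{t|0}$ do not depend on $\theta$, as in assumption (i) of Proposition~\ref{prop:exact_gradient_app}.

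The only delicate step is handling degenerate entries. Where $\rho = 0$, the term $\rho a \log(c/\rho)$ must be read as $0$, consistent with $0\log 0 = 0$ in $f$. Similarly, $K(0) = 0$ covers the case $q_{t|0}(j \given x_0) = 0$. Finally, $q_{t|0}(i \given x_0) > 0$ holds almost surely because $i$ is sampled from that distribution. I would state these conventions explicitly and then note that every expression is finite given $c > 0$ from assumption (ii).
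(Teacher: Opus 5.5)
Your proposal is correct and is essentially the paper's proof. Both fix $(t,x_0,i,j)$, write $\Rhatmat_t(i,j\given x_0)=R_t(j,i)\,a$ and $\Rthetamat_t(i,j)=\lambdatheta_t(i)\,r^\theta_t(j\given i)$, split $\log(\rho a)=\log\rho+\log a$ inside $f$, and then sum over $j\neq i$ and take expectations. Your extra care is correct and covers points the paper glosses over: the factor $T$ between $\int_0^T$ and $\E_{t\sim\mathcal{U}(0,T)}$, the $0\log 0$ conventions (which really arise in the masked case, where $R_t(j,i)=0$ for $i\neq m$), and your observation that the first two terms equal $f-R_t(j,i)K(a)$ rather than $f$ itself.
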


\begin{proof}
For fixed $(t,x_0,i,j)$, write
\[
\hat{R}_{t|0}(i,j\given x_0) = R_t(j,i)\,\frac{q_{t|0}(j\given x_0)}{q_{t|0}(i\given x_0)}
\quad\text{and}\quad
\Rthetamat_t(i,j) = \lambdatheta_t(i)\, r^\theta_t(j\given i).
\]
Expanding the Poisson KL density gives
\begin{equation}
\begin{aligned}
f\!\left(\hat{R}_{t|0}(i,j\given x_0), \Rthetamat_t(i,j)\right)
&= \hat{R}_{t|0}(i,j\given x_0)\log \frac{\hat{R}_{t|0}(i,j\given x_0)}{\Rthetamat_t(i,j)} - \hat{R}_{t|0}(i,j\given x_0) + \Rthetamat_t(i,j) \\
&= \Rthetamat_t(i,j) - \hat{R}_{t|0}(i,j\given x_0)\log \frac{\Rthetamat_t(i,j)}{R_t(j,i)} \\
&\quad + R_t(j,i)\,K\!\left(\frac{q_{t|0}(j\given x_0)}{q_{t|0}(i\given x_0)}\right).
\end{aligned}
\end{equation}
Summing over $j \neq i$ and taking expectation over $(t,x_0,i)$ yields $\mathcal{L}_{\mathrm{cond}}(\theta)$. Since $\mathcal{L}_{\mathrm{KL}}(\theta)$ is defined as the expectation of the same pairwise density $f$, this proves $\mathcal{L}_{\mathrm{cond}}(\theta)=\mathcal{L}_{\mathrm{KL}}(\theta)$.
\end{proof}

\label{app:proof_mdlm}
\begin{corollary}[Equivalence to MDLM, restated]
If the forward process is the masked (absorbing) process on $\calS = \mathcal{V} \cup \{m\}$, then for any valid noise schedule $\alpha_t$, the loss~\eqref{eq:L_cond} reduces to the MDLM cross-entropy loss~\eqref{eq:mdlm_equiv}.
\end{corollary}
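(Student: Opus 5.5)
The plan is a direct case computation of the integrand of \eqref{eq:L_cond} for the absorbing forward process. Write $\sigma_t := -\alpha_t'/\alpha_t \ge 0$, so that $R_t(i,m)=\sigma_t$ for $i\in\mathcal{V}$ and every other off-diagonal rate is zero. The forward conditionals are $q_{t|0}(x_0\given x_0)=\alpha_t$ and $q_{t|0}(m\given x_0)=1-\alpha_t$, with all other states having probability zero. This follows from the closed form \eqref{eq:cumulative_transition} with $\pi=e_m$, or directly by solving the Kolmogorov equation with survival probability $\alpha_t$. So $x_t$ takes only two values, $x_0$ or $m$, and I split into these two cases. Throughout I use the convention $0\log 0=0$, so $K(0)=0$.

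\emph{Case $x_t=i=x_0\neq m$.} The conditional reverse rate $R_t(j,i)\,q_{t|0}(j\given x_0)/q_{t|0}(i\given x_0)$ requires $R_t(j,i)>0$, which forces $i=m$. Hence every conditional reverse rate vanishes, $\hat{\lambda}_{t|0}(i\given x_0)=0$, and all $K$ terms are $K(0)=0$. The integrand reduces to $\lambdatheta_t(i)$. By Proposition~\ref{prop:pointwise_opt}, the zero-exit-rate case has unique minimizer $\lambda=0$. So, as in MDLM's carry-over unmasking, we take $\lambdatheta_t(i)=0$ on unmasked states, and this case contributes nothing. This is where the indicator $\indicator_{\{x_t=m\}}$ comes from.

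\emph{Case $x_t=m$.} For $j\in\mathcal{V}$ we have $R_t(j,m)=\sigma_t$ and $q_{t|0}(j\given x_0)/q_{t|0}(m\given x_0)=\frac{\alpha_t}{1-\alpha_t}\indicator_{\{j=x_0\}}$. Therefore
\[
\hat{R}_t(m,j\given x_0)=\hat{\lambda}\,\indicator_{\{j=x_0\}},\qquad \hat{\lambda}:=-\frac{\alpha_t'}{1-\alpha_t}.
\]
Substituting into the three sums of \eqref{eq:L_cond} gives
\[
\lambdatheta_t(m)\;-\;\hat{\lambda}\bigl[\log\lambdatheta_t(m)+\log r^\theta_t(x_0\given m)-\log\sigma_t\bigr]\;+\;\sigma_t K\!\left(\tfrac{\alpha_t}{1-\alpha_t}\right).
\]
Using $\sigma_t\cdot\frac{\alpha_t}{1-\alpha_t}=\hat{\lambda}$, the $K$ term equals $\hat{\lambda}\bigl(\log\frac{\alpha_t}{1-\alpha_t}-1\bigr)$, and $\log\sigma_t+\log\frac{\alpha_t}{1-\alpha_t}=\log\hat{\lambda}$. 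So the expression becomes
\[
\mathrm{KL}^{\mathrm{Poi}}\bigl(\hat{\lambda}\,\|\,\lambdatheta_t(m)\bigr)-\hat{\lambda}\log r^\theta_t(x_0\given m),
\]
which is Corollary~\ref{cor:decomp} instantiated with a one-hot target jump distribution.

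The key step is the choice of exit rate. Setting $\lambdatheta_t(m)=\hat{\lambda}=-\alpha_t'/(1-\alpha_t)$ is the unique minimizer of the Poisson part. Since $\hat{\lambda}$ depends only on the schedule, the Poisson KL vanishes identically; I will check explicitly that it cancels to $0$ rather than to a nonzero constant, which is what makes the claimed identity hold \emph{exactly}. What remains is $\frac{\alpha_t'}{1-\alpha_t}\log r^\theta_t(x_0\given m)$. Identifying $r^\theta_t(\cdot\given m)$ with MDLM's $x_\theta$ gives $\log r^\theta_t(x_0\given m)=\boldsymbol{x}_0^\top\log x_\theta$, and taking expectations over $t,x_0,x_t$ yields \eqref{eq:mdlm_equiv}.

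The main obstacle is bookkeeping rather than analysis. First, the Poisson KL and $K$ terms must cancel to exactly zero, not just to a $\theta$-independent constant. Second, the degenerate unmasked case has to be handled, either by building $\lambdatheta_t=0$ into the parameterization or by invoking Proposition~\ref{prop:pointwise_opt}, and I will state that convention explicitly.
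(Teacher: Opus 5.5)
Your proposal is correct and is essentially the paper's own proof. Both split on $x_t\in\{x_0,m\}$; the unmasked case vanishes because $\lambdatheta_t(i)=0$ there. The masked case reduces to $\frac{\alpha_t'}{1-\alpha_t}\log r^\theta_t(x_0\given m)$ once $\lambdatheta_t(m)=\gamma_t=-\alpha_t'/(1-\alpha_t)$ and $r^\theta_t(\cdot\given m)=x_\theta$.

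The only difference is bookkeeping. The paper builds this MDLM parameterization in from the start and checks that the $\theta$-independent $\log R_t$ and $K$ terms cancel the constant $-\gamma_t\log\gamma_t+\gamma_t$. You instead keep the exit head free and collect the same constants into $\KLdiv^{\Poi}\bigl(\gamma_t\,\|\,\lambdatheta_t(m)\bigr)$, which vanishes under that choice.
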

\begin{proof}
We work directly with the full loss $\mathcal{L}_{\mathrm{cond}}(\theta)$ in~\eqref{eq:L_cond}, including its $\theta$-independent term. For the absorbing process,
\[
q_{t|0}(\cdot \given x_0) = \alpha_t \boldsymbol{x}_0 + (1-\alpha_t)\boldsymbol{e}_m,
\qquad
R_t(i,j) = \rho_t(\boldsymbol{e}_j^\top\boldsymbol{e}_m)\indicator_{\{i\neq m\}},
\qquad
\rho_t := -\frac{\alpha_t'}{\alpha_t}.
\]
Define the reverse coefficient
\[
\gamma_t := \rho_t\frac{\alpha_t}{1-\alpha_t} = -\frac{\alpha_t'}{1-\alpha_t} > 0.
\]
The $\theta$-dependent part of the integrand decomposes as
\[
\underbrace{-\sum_{j \neq i}\Rhatmat_t(i,j\given x_0)\log \Rthetamat_t(i,j)}_{\text{Term \textcircled{1}}} \;+\; \underbrace{\sum_{j \neq i}\Rthetamat_t(i,j)}_{\text{Term \textcircled{2}}} \;+\; \text{($\theta$-independent terms)},
\]
where $\Rhatmat_t(i,j\given x_0) = \gamma_t\,\boldsymbol{e}_j^\top(\boldsymbol{x}_0 - \boldsymbol{e}_m)\indicator_{\{i=m\}}$ and $\Rthetamat_t(i,j) = \gamma_t\,\boldsymbol{e}_j^\top(x_\theta - \boldsymbol{e}_m)\indicator_{\{i=m\}}$.

When $i \neq m$, both rates vanish, so the integrand is zero. For $i = m$, the sum runs over $j \neq m$, and orthogonality $\boldsymbol{e}_j^\top \boldsymbol{e}_m = 0$ simplifies the rates to $\Rhatmat_t(m,j\given x_0) = \gamma_t\,\boldsymbol{e}_j^\top \boldsymbol{x}_0$ and $\Rthetamat_t(m,j) = \gamma_t\,\boldsymbol{e}_j^\top x_\theta$.

Term \textcircled{1}:
\begin{align*}
\text{\textcircled{1}} &= -\sum_{j \neq m}\gamma_t(\boldsymbol{e}_j^\top \boldsymbol{x}_0)\log\bigl(\gamma_t\,\boldsymbol{e}_j^\top x_\theta\bigr)
= -\gamma_t\log\gamma_t\underbrace{\sum_{j\neq m}(\boldsymbol{e}_j^\top \boldsymbol{x}_0)}_{=\,1} - \gamma_t\sum_{j\neq m}(\boldsymbol{e}_j^\top \boldsymbol{x}_0)\log(\boldsymbol{e}_j^\top x_\theta) \\
&= -\gamma_t\log\gamma_t - \gamma_t\,\boldsymbol{x}_0^\top\log x_\theta,
\end{align*}
where we used that $x_0$ is one-hot on $\mathcal{V}$ so $\sum_{j\neq m}\boldsymbol{e}_j^\top \boldsymbol{x}_0 = 1$.

Term \textcircled{2}:
\[
\text{\textcircled{2}} = \sum_{j\neq m}\gamma_t\,\boldsymbol{e}_j^\top x_\theta = \gamma_t\underbrace{\sum_{j\neq m}\boldsymbol{e}_j^\top x_\theta}_{=\,1} = \gamma_t,
\]
using the normalization of $x_\theta$ on $\mathcal{V}$.

Combining: setting $C_t := -\gamma_t\log\gamma_t + \gamma_t$ (independent of $\theta$):
\[
\text{\textcircled{1}} + \text{\textcircled{2}} = -\gamma_t\,\boldsymbol{x}_0^\top\log x_\theta + C_t = \frac{\alpha_t'}{1-\alpha_t}\,\boldsymbol{x}_0^\top\log x_\theta + C_t.
\]
It remains to evaluate the $\theta$-independent terms in the full expression~\eqref{eq:L_cond}. For $i=m$, only the clean token contributes, so
\[
\sum_{j\neq m}\Rhatmat_t(m,j\given x_0)\log R_t(j,m)
+ \sum_{j\neq m}R_t(j,m)K\!\left(\frac{q_{t|0}(j\given x_0)}{q_{t|0}(m\given x_0)}\right)
= \gamma_t\log\rho_t + \rho_t K\!\left(\frac{\alpha_t}{1-\alpha_t}\right).
\]
Using $K(a)=a(\log a-1)$ and $\gamma_t=\rho_t\alpha_t/(1-\alpha_t)$, this becomes
\[
\gamma_t\log\rho_t + \gamma_t\left(\log\frac{\alpha_t}{1-\alpha_t}-1\right)
= \gamma_t\log\gamma_t - \gamma_t
= -C_t.
\]
Thus the $\theta$-independent term in~\eqref{eq:L_cond} exactly cancels the constant $C_t$ produced by Terms \textcircled{1} and \textcircled{2}. Taking the expectation over $(t,x_0,x_t)$ therefore yields~\eqref{eq:mdlm_equiv} with no additive constant.
\end{proof}

\begin{proposition}[Pointwise optimality, restated]
For fixed $t$ and $i$, define $g_{t,i}(\lambda, r) := \KLdiv^{\Poi}\!\bigl(\lambdahat_t(i) \| \lambda\bigr) + \lambdahat_t(i)\,\KLdiv^{\Cat}\!\bigl(\hat{r}_t(\cdot \given i) \| r\bigr)$.
If $\lambdahat_t(i) > 0$, then $g_{t,i}(\lambda, r) \ge 0$ with equality if and only if $\lambda = \lambdahat_t(i)$ and $r = \hat{r}_t(\cdot \given i)$. If $\lambdahat_t(i) = 0$, the unique minimizer in $\lambda$ is $\lambda = 0$ and the choice of $r$ is irrelevant.
\end{proposition}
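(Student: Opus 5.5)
The plan is to treat the two summands of $g_{t,i}$ separately, show each is nonnegative with its own equality condition, and then combine them. Throughout, fix $t$ and $i$ and abbreviate $\hat\lambda := \lambdahat_t(i)$ and $\hat r := \hat r_t(\cdot\given i)$. The admissible arguments are $\lambda>0$, or $\lambda\ge 0$ when we discuss the degenerate case, together with $r$ in the simplex over $j\neq i$.

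For the Poisson part, I will view $h(\lambda):=\KLdiv^{\Poi}(\hat\lambda\|\lambda)=\hat\lambda\log\frac{\hat\lambda}{\lambda}-\hat\lambda+\lambda$ as a function of $\lambda$.
\begin{itemize}
\item When $\hat\lambda>0$, the derivative is $h'(\lambda)=1-\hat\lambda/\lambda$ and $h''(\lambda)=\hat\lambda/\lambda^2>0$. So $h$ is strictly convex on $(0,\infty)$ with unique stationary point $\lambda=\hat\lambda$, where $h(\hat\lambda)=0$. Hence $h\ge 0$, with equality iff $\lambda=\hat\lambda$.
\item Alternatively, write $h=\hat\lambda\,\phi(\lambda/\hat\lambda)$ with $\phi(u)=u-1-\log u\ge 0$. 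This is the elementary inequality $\log u\le u-1$, with equality iff $u=1$.
\end{itemize}

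For the categorical part, Gibbs' inequality gives $\KLdiv^{\Cat}(\hat r\|r)\ge 0$, with equality iff $r=\hat r$. This follows from Jensen applied to $-\log$, or again from $\log u\le u-1$ applied to $u=r(j)/\hat r(j)$ on the support of $\hat r$. Two points need care here:
\begin{itemize}
\item Gibbs only forces $r=\hat r$ on the support of $\hat r$. Full equality of $r$ and $\hat r$ then follows because both sum to one.
\item If $r(j)=0$ somewhere that $\hat r(j)>0$, the KL is $+\infty$, which is consistent with the inequality.
\end{itemize}
Multiplying by $\hat\lambda>0$ preserves both nonnegativity and the equality condition.

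Combining the two cases:
\begin{itemize}
\item \textbf{Case $\hat\lambda>0$.} The sum of two nonnegative terms vanishes iff each vanishes. So $g_{t,i}\ge 0$, with equality iff $\lambda=\hat\lambda$ and $r=\hat r$.
\item \textbf{Case $\hat\lambda=0$.} The second term is multiplied by zero, so it drops out. Under the convention $0\log 0=0$, which is also needed to make $\hat r$ meaningful, this holds regardless of how $\hat r$ is defined. Then $g_{t,i}(\lambda,r)=\lambda$, which is minimized uniquely at $\lambda=0$ over $\lambda\ge 0$, independent of $r$.
\end{itemize}

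The main obstacle is bookkeeping in these boundary conventions rather than any real difficulty. Specifically:
\begin{itemize}
\item I must state the $0\log 0=0$ convention explicitly.
\item In the degenerate case, the minimum is attained only in the closure $\lambda\ge 0$, since $\Phi_\theta$ outputs $\R_{>0}$. I will note that on $(0,\infty)$ the infimum $0$ is approached but not attained.
\end{itemize}
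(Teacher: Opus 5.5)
Your proposal is correct and takes essentially the same approach as the paper. Like the paper, you use a first- and second-derivative analysis of the Poisson KL (unique minimizer $\lambda=\hat\lambda$ with value $0$), Gibbs' inequality for the categorical term, and the observation that for $\hat\lambda=0$ the objective collapses to $\lambda$. Beyond that, your added care (strict convexity on $(0,\infty)$ to make the minimizer global, the support-plus-normalization argument for equality in Gibbs, and the remark that $\lambda=0$ lies outside the range $\R_{>0}$ of $\Phi_\theta$) tightens points the paper leaves implicit.
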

\begin{proof}
Since both terms are non-negative, $g_{t,i}(\lambda, r) \ge 0$. For the Poisson KL, $\KLdiv^{\Poi}(\hat{\lambda} \| \lambda) = \hat{\lambda} \log \frac{\hat{\lambda}}{\lambda} - \hat{\lambda} + \lambda$. Taking the derivative with respect to $\lambda$ and setting it to zero: $-\hat{\lambda}/\lambda + 1 = 0$, giving $\lambda^* = \hat{\lambda}$. The second derivative $\hat{\lambda}/\lambda^2 > 0$ confirms this is a strict minimum, and $\KLdiv^{\Poi}(\hat{\lambda} \| \hat{\lambda}) = 0$. For the categorical KL, $\KLdiv^{\Cat}(\hat{r} \| r) \ge 0$ by Gibbs' inequality, with equality if and only if $r = \hat{r}$. When $\lambdahat_t(i) = 0$, the Poisson KL reduces to $\lambda \ge 0$ with minimum at $\lambda = 0$, and the categorical term vanishes since it is weighted by $\lambdahat_t(i) = 0$.
\end{proof}

\section{Additional Experimental Details}
\label{app:exp_details}

\textbf{Datasets.} We evaluate on two datasets: (1) TinyStories~\citep{eldan2023tinystories}, a corpus of simple short stories for language modeling; (2) OpenWebText (OWT)~\citep{radford2019language}, a large-scale web text corpus. Both are tokenized with the GPT-2 tokenizer (vocabulary size 50257) and truncated or padded to a maximum sequence length of 512.

\textbf{Architecture.} Neural CTMC uses a 12-layer DiT-style Transformer with hidden dimension 768, 12 attention heads, and a time-conditioning dimension of 128. The total parameter count is approximately 163M.

\begin{figure}[H]
\begin{minipage}[t]{0.48\textwidth}
\begin{algorithm}[H]
\caption{$\tau$-Leaping}
\label{alg:tau_leaping}
\begin{algorithmic}[1]
\STATE Set $\tau = \frac{T}{N}$ and sample $x_T \sim \mathrm{Uniform}(\calS)^L$
\FOR{$n = N$ \textbf{downto} $1$}
  \STATE $t \leftarrow n\tau$
  \STATE $\{(\lambda^\theta_{t,\ell},\, r^\theta_{t,\ell})\}_{\ell=1}^L \leftarrow \Phi_\theta(x_t, t)$
  \STATE Set $x_{t-\tau} \leftarrow x_t$
  \FOR{$\ell = 1$ \textbf{to} $L$}
    \STATE Sample $\Delta_\ell \sim \mathrm{Exp}(\lambda^\theta_{t,\ell})$
    \IF{$\Delta_\ell < \tau$}
      \STATE $x_{t-\tau}^{(\ell)} \sim r^\theta_{t,\ell}(\cdot \given x_t^{(\ell)})$
    \ENDIF
  \ENDFOR
\ENDFOR
\STATE \textbf{Return} $x_0$
\end{algorithmic}
\end{algorithm}
\end{minipage}
\hfill
\begin{minipage}[t]{0.48\textwidth}
\begin{algorithm}[H]
\caption{Euler}
\label{alg:euler}
\begin{algorithmic}[1]
\STATE Set $\tau = \frac{T}{N}$ and sample $x_T \sim \mathrm{Uniform}(\calS)$
\FOR{$n = N$ \textbf{downto} $1$}
  \STATE $t \leftarrow n\tau$
  \STATE $\bigl(\lambda^\theta_t(x_t),\, r^\theta_t(\cdot\given x_t)\bigr) \leftarrow \Phi_\theta(x_t, t)$
  \STATE Set $p_j \leftarrow \lambda^\theta_t(x_t)\cdot r^\theta_t(j\given x_t)\cdot\tau$ for $j \neq x_t$
  \STATE Set $p_{x_t} \leftarrow 1-\sum_{j\neq x_t}p_j$
  \STATE $x_{t-\tau} \sim \mathrm{Cat}\!\bigl(\{p_j\}_{j\in\calS}\bigr)$
\ENDFOR
\STATE \textbf{Return} $x_0$
\end{algorithmic}
\end{algorithm}
\end{minipage}
\end{figure}

\textbf{Baselines.} For TinyStories, we train MDLM and GIDD from their official source code using the same architecture and training budget as Neural CTMC. For OpenWebText, we use the official open-source released checkpoints and configurations of SEDD (682B training tokens), MDLM (262B), and GIDD (262B). The OWT comparison aligns backbone architecture, parameter scale, training-token budget for the equal-budget baselines, tokenization, sampling-step grid, number of generated samples, and Gemma2-9B scoring.

\textbf{Evaluation.} All methods are evaluated using the same protocol: we generate 1024 unconditional samples per configuration and score them with a pretrained \textbf{Gemma2-9B} model to obtain generative perplexity (PPL). This measures sample quality under a common external language model, rather than the ELBO-based perplexity reported in some prior work. For TinyStories, we report PPL over training epochs with 50 sampling steps. For OpenWebText, we sweep the number of sampling steps over $\{16, 32, 64, 128\}$ at the final checkpoint.

\textbf{Sampling.} We evaluate two sampling schemes: Euler (Algorithm~\ref{alg:euler}) and $\tau$-leaping (Algorithm~\ref{alg:tau_leaping}). The number of sampling steps is 50 for TinyStories training curves and varies from 16 to 128 for the OpenWebText step-sweep experiment.

\textbf{Scaling law.} To assess whether the benefits of Neural CTMC persist as compute increases, we conduct a scaling-law study on OpenWebText along three axes: model size, training data, and total training FLOPs. We sweep model sizes $\{\text{tiny (80.1M)}, \text{small (169.7M)}, \text{base (424.5M)}\}$ and token budgets $\{100\text{M}, 300\text{M}, 1\text{B}\}$, with three random seeds per configuration; the metric is the best validation loss. As shown in Figure~\ref{fig:scaling_law}, the validation loss decreases monotonically with both parameter count and training tokens, and a clear compute-optimal frontier emerges: the best run, \texttt{base} at the 1B-token budget, attains a validation loss of $4.150$. The fitted power-law exponents on the compute frontier ($\alpha \approx 0.16$, $R^2 \approx 0.93$) indicate that Neural CTMC follows scaling behavior consistent with mature autoregressive and diffusion-based language models, suggesting that the proposed parameterization scales gracefully and that further gains can be expected at larger compute budgets.

\begin{figure}[H]
    \centering
    \includegraphics[width=\linewidth]{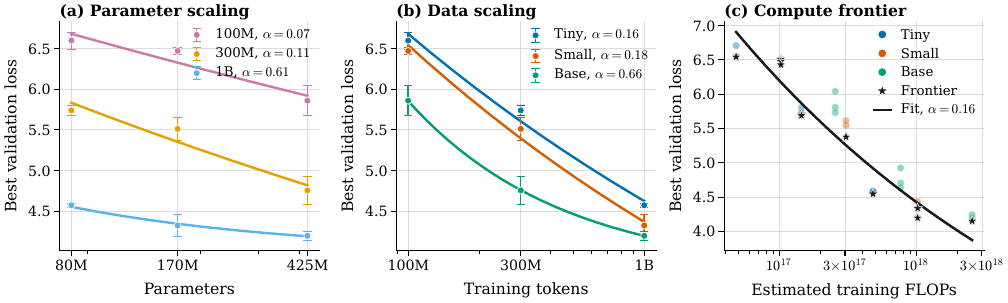}
    \caption{Scaling-law results for Neural CTMC on OpenWebText. Validation loss decreases with model size, training tokens, and total training FLOPs, exhibiting a clear power-law trend on the compute-optimal frontier.}
    \label{fig:scaling_law}
\end{figure}

\section{Self-correction}
\label{app:self_correction}

We additionally consider a lightweight self-correction step that can be applied to a partially denoised sequence during sampling. For sequence data, write $x_t=(x_t^{(1)},\ldots,x_t^{(L)})\in\calS^L$. The trained model $\Phi_\theta$ is evaluated on the full sequence and produces the Neural CTMC outputs from~\eqref{eq:neural_param} at each position. The step below uses these outputs to recover an auxiliary clean-token distribution and then updates only the most confident proposed correction.

\paragraph{Recovering a clean-token distribution.}
For the uniform forward process in~\eqref{eq:cumulative_transition} with $\pi=\frac{1}{S}\mathbf{1}$, the noised distribution associated with a clean-token distribution $p_0$ has the form $q=\alpha_t p_0+\beta_t\pi$. At position $\ell$, let $i=x_t^{(\ell)}$ and let $\tilde q^\theta_{t,\ell}$ denote the model-implied noised distribution recovered from $\Phi_\theta(x_t,t)$. Using the reverse-rate factorization,
\begin{equation}
\lambdatheta_t(i)
    \approx \frac{1}{S\alpha_t}\frac{1-\tilde q^\theta_{t,\ell}(i)}
    {\tilde q^\theta_{t,\ell}(i)}, \qquad
\lambdatheta_t(i) r^\theta_t(j\given i)
    \approx \frac{1}{S\alpha_t}\frac{\tilde q^\theta_{t,\ell}(j)}
    {\tilde q^\theta_{t,\ell}(i)}, \quad j\neq i.
\end{equation}
Thus we form
\begin{equation}
\tilde q^\theta_{t,\ell}(i)
    = \frac{1}{1+S\alpha_t\lambdatheta_t(i)}, \qquad
\tilde q^\theta_{t,\ell}(j)
    = \bigl(1-\tilde q^\theta_{t,\ell}(i)\bigr)r^\theta_t(j\given i), \quad j\neq i,
\end{equation}
and map back to a clean-token distribution by
\begin{equation}
\hat p^\theta_{0,t,\ell}
    = \operatorname{Normalize}\!\left(\left[
    \frac{\tilde q^\theta_{t,\ell}-\beta_t\pi}{\alpha_t}
    \right]_+\right),
\end{equation}
where $[\cdot]_+$ denotes elementwise clipping at zero. For temperature $\tau_{\mathrm{sc}}>0$, define
\begin{equation}
\operatorname{Temp}(p;\tau_{\mathrm{sc}})_j
    := \frac{p_j^{1/\tau_{\mathrm{sc}}}}{\sum_{a\in\calS}p_a^{1/\tau_{\mathrm{sc}}}}.
\end{equation}

\begin{algorithm}[H]
\caption{Self-correction}
\label{alg:self_correction}
\begin{algorithmic}[1]
\REQUIRE Partially denoised sequence $x_t=(x_t^{(1)},\ldots,x_t^{(L)})$; trained Neural CTMC $\Phi_\theta$; noise level $t$; temperature $\tau_{\mathrm{sc}}$; maximum updates $K$
\ENSURE Corrected sequence $x_t$
\FOR{$k=1$ \textbf{to} $K$}
  \FOR{$\ell=1$ \textbf{to} $L$}
    \STATE Let $i \leftarrow x_t^{(\ell)}$ and compute $\bigl(\lambdatheta_t(i),\, r^\theta_t(\cdot\given i)\bigr)$ from $\Phi_\theta(x_t,t)$ at position $\ell$
    \STATE Recover $\hat p^\theta_{0,t,\ell}$ from $\bigl(\lambdatheta_t(i),\, r^\theta_t(\cdot\given i)\bigr)$ using the equations above
    \STATE Sample $\tilde x_0^{(\ell)} \sim \Cat\!\left(\operatorname{Temp}\!\left(\hat p^\theta_{0,t,\ell};\tau_{\mathrm{sc}}\right)\right)$
  \ENDFOR
  \STATE $\mathcal{I} \leftarrow \{\ell \in \{1,\ldots,L\}: \tilde x_0^{(\ell)} \neq x_t^{(\ell)}\}$
  \IF{$\mathcal{I}=\varnothing$}
    \STATE \textbf{break}
  \ENDIF
  \STATE $\ell^\star \leftarrow \arg\max_{\ell\in\mathcal{I}} \hat p^\theta_{0,t,\ell}\!\left(\tilde x_0^{(\ell)}\right)$
  \STATE $x_t^{(\ell^\star)} \leftarrow \tilde x_0^{(\ell^\star)}$
\ENDFOR
\STATE \textbf{Return} $x_t$
\end{algorithmic}
\end{algorithm}

\paragraph{Quality evaluation.}
We evaluate the same post-sampling setting on OpenWebText using a common-sample protocol. We first generate $32$ raw samples with the Neural CTMC sampler and then apply each self-correction method to exactly the same raw texts. For GIDD, we use the official released checkpoints with $p_u\in\{0.0,0.1,0.2\}$ and the self-correction temperatures used in the GIDD study: $\tau_{\mathrm{sc}}=0.1$ for $p_u=0.0,0.1$ and $\tau_{\mathrm{sc}}=0.5$ for $p_u=0.2$. For Neural CTMC, we evaluate $\tau_{\mathrm{sc}}\in\{0.1,0.5\}$ with the same update rule in Algorithm~\ref{alg:self_correction}. Outputs are judged by GPT-5.2 on a $1$--$10$ scale along five text-quality axes. Table~\ref{tab:self_correction_quality} shows that Neural CTMC self-correction is competitive with GIDD self-correction, with the two strongest variants splitting the per-axis wins. Neural CTMC at $\tau_{\mathrm{sc}}=0.1$ attains the best Grammaticality ($4.40$, $+12.4\%$, $>2\sigma$), Factuality ($3.09$, $+6.2\%$), and Creativity ($5.63$, $+8.0\%$, $>2\sigma$) among all variants, while the hybrid GIDD checkpoint with $p_u=0.2$ and $\tau_{\mathrm{sc}}=0.5$ achieves the best Clarity ($4.00$, $+5.8\%$, $>2\sigma$) and Writing style ($3.88$, $+8.4\%$, $>2\sigma$). The two methods are essentially tied on the five-axis average ($4.14$ for Neural CTMC at $\tau_{\mathrm{sc}}=0.1$ vs.\ $4.13$ for GIDD at $p_u=0.2$), and we note that GIDD ($p_u=0.2$) shows $>2\sigma$ improvements on four of the five axes versus two for Neural CTMC ($\tau_{\mathrm{sc}}=0.1$), so the comparison is best read as complementary strengths rather than a uniform win.

\begin{table}[H]
\centering
\scriptsize
\setlength{\tabcolsep}{3pt}
\resizebox{\linewidth}{!}{%
\begin{tabular}{lccccc}
\toprule
Model & Clarity & Grammaticality & Factuality & Writing style & Creativity \\
\midrule
Original sample & 3.78 & 3.91 & 2.91 & 3.58 & 5.21 \\
\midrule
GIDD self-corr. ($p_u=0.0, \tau_{\mathrm{sc}}=0.1$) & 3.19 {\color{red!70!black}(-15.7\%)*} & 3.01 {\color{red!70!black}(-23.2\%)**} & 3.03 {\color{green!45!black}(+4.3\%)} & 3.08 {\color{red!70!black}(-13.9\%)*} & 4.57 {\color{red!70!black}(-12.3\%)*} \\
\midrule
GIDD self-corr. ($p_u=0.1, \tau_{\mathrm{sc}}=0.1$) & 3.81 {\color{green!45!black}(+0.6\%)} & 4.00 {\color{green!45!black}(+2.2\%)} & 3.08 {\color{green!45!black}(+5.7\%)} & 3.63 {\color{green!45!black}(+1.5\%)} & 5.19 {\color{red!70!black}(-0.4\%)} \\
\midrule
GIDD self-corr. ($p_u=0.2, \tau_{\mathrm{sc}}=0.5$) & \textbf{4.00} {\color{green!45!black}(+5.8\%)*} & 4.25 {\color{green!45!black}(+8.6\%)*} & 2.98 {\color{green!45!black}(+2.5\%)} & \textbf{3.88} {\color{green!45!black}(+8.4\%)*} & 5.54 {\color{green!45!black}(+6.2\%)*} \\
\midrule
Neural CTMC self-corr. ($\tau_{\mathrm{sc}}=0.1$) & 3.81 {\color{green!45!black}(+0.6\%)} & \textbf{4.40} {\color{green!45!black}(+12.4\%)*} & \textbf{3.09} {\color{green!45!black}(+6.2\%)} & 3.77 {\color{green!45!black}(+5.4\%)} & \textbf{5.63} {\color{green!45!black}(+8.0\%)*} \\
\midrule
Neural CTMC self-corr. ($\tau_{\mathrm{sc}}=0.5$) & 3.82 {\color{green!45!black}(+1.1\%)} & 4.06 {\color{green!45!black}(+3.8\%)} & 3.01 {\color{green!45!black}(+3.4\%)} & 3.74 {\color{green!45!black}(+4.5\%)} & 5.57 {\color{green!45!black}(+6.8\%)*} \\
\bottomrule
\end{tabular}%
}
\caption{Quality ratings after correcting the same raw OpenWebText samples. Percent changes are relative to the original samples; * denotes $>2\sigma$ and ** denotes $>5\sigma$ paired difference across seeds.}
\label{tab:self_correction_quality}
\end{table}

\paragraph{Examples.}
To illustrate what the step changes in practice, we apply self-correction to an OpenWebText sample generated by the Euler sampler with $128$ denoising steps. We hold the base sample and correction seed fixed, and vary only the update budget $K$. Table~\ref{tab:self_correction_examples} highlights the edited tokens before and after correction. Small budgets tend to make local grammatical or lexical repairs, while larger budgets allow more aggressive changes; this matches the interpretation of $K$ as a controllable post-sampling correction strength.

\begingroup
\scriptsize
\setlength{\tabcolsep}{3pt}
\renewcommand{\arraystretch}{1.08}
\setlength{\LTcapwidth}{\linewidth}
\begin{longtable}{@{}>{\raggedright\arraybackslash}p{0.49\linewidth}|>{\raggedright\arraybackslash}p{0.49\linewidth}@{}}
\caption{Self-correction examples for update budgets $K\in\{4,8,16,24\}$. The base sample is generated using Euler sampling with $128$ steps.}
\label{tab:self_correction_examples}\\
\hline
\rowcolor{headergray}\multicolumn{2}{c}{\textit{Example 1: self-correction max\_updates=4}}\\
\hline
\textbf{Original sample} & \textbf{After self-correction}\\
\hline
\endfirsthead
\caption[]{Self-correction examples (continued).}\\
\hline
\textbf{Original sample} & \textbf{After self-correction}\\
\hline
\endhead
\hline
\multicolumn{2}{r}{\textit{Continued on next page}}\\
\endfoot
\hline
\endlastfoot
Martin Samuel, who changed his face because she changed his face, is drawn to profound insight by writer John Lewis. & Martin Samuel, who changed his face because she changed his face, is drawn to profound insight by writer John Lewis.\\
\hline
His first experience with cannabis in 2008 and 2009, and later known as Boone Half the Sole, \oldword{came} at the age of 5, when he returned from high school and ingested a pipe. & His first experience with cannabis in 2008 and 2009, and later known as Boone Half the Sole, \newword{was} at the age of 5, when he returned from high school and ingested a pipe.\\
\hline
The research program lists marijuana as a federal harm reduction drug, and now it has \oldword{been} legal at a high recreational level, and it has surged in all US states. & The research program lists marijuana as a federal harm reduction drug, and now it has \newword{become} legal at a high recreational level, and it has surged in all US states.\\
\hline
Samuel is celebrating his return from high school and proposes society deserves to stop using cannabis as an excuse, and the exciting and welcome changes, during brief years of life, that are largely reflected in a drug. & Samuel is celebrating his return from high school and proposes society deserves to stop using cannabis as an excuse, and the exciting and welcome changes, during brief years of life, that are largely reflected in a drug.\\
\hline
Recently President Obama said that cannabis was a ``vulnerability'' in a single age, to which ``I believe a novel and means; all forms of human life should not depend one another on marijuana''. & Recently President Obama said that cannabis was a ``vulnerability'' in a single age, to which ``I believe a novel and means; all forms of human life should not depend one another on marijuana''.\\
\hline
Samuel says, `` rivalry old man, who you known as before, you are a very fond and wonderful old person. & Samuel says, `` rivalry old man, who you known as before, you are a very fond and wonderful old person.\\
\hline
``I \oldword{suffered} a rare [cancer] in \oldword{extremely} good health and I eat with it on the streets - and with my bones, every day it's very strong.'' & ``I \newword{have} a rare [cancer] in \newword{very} good health and I eat with it on the streets - and with my bones, every day it's very strong.''\\
\hline
He also is a man, who remembers everything every day, and is pivotal with keeping his life ``just a completely ordinary life''. & He also is a man, who remembers everything every day, and is pivotal with keeping his life ``just a completely ordinary life''.\\
\hline
\rowcolor{headergray}\multicolumn{2}{c}{\textit{Example 2: self-correction max\_updates=8}}\\
\hline
\textbf{Original sample} & \textbf{After self-correction}\\
\hline
Martin Samuel, who changed his face because she changed his face, is drawn to profound insight by writer John Lewis. & Martin Samuel, who changed his face because she changed his face, is drawn to profound insight by writer John Lewis.\\
\hline
His first experience with cannabis in 2008 and 2009, and later known as Boone Half the Sole, \oldword{came} at the age of 5, when he returned from high school and ingested a pipe. & His first experience with cannabis in 2008 and 2009, and later known as Boone Half the Sole, \newword{was} at the age of 5, when he returned from high school and ingested a pipe.\\
\hline
The research program lists marijuana as a federal harm reduction drug, and now it has \oldword{been} legal at a high recreational level, and it has surged in all US states. & The research program lists marijuana as a federal harm reduction drug, and now it has \newword{become} legal at a high recreational level, and it has surged in all US states.\\
\hline
Samuel is celebrating his return from high school and proposes society deserves to stop using cannabis as an excuse, \oldword{and} the exciting and welcome changes, during brief years of life, that are largely reflected in \oldword{a} drug. & Samuel is celebrating his return from high school and proposes society deserves to stop using cannabis as an excuse, \newword{for} the exciting and welcome changes, during brief years of life, that are largely reflected in \newword{the} drug.\\
\hline
Recently President Obama said that cannabis was a ``vulnerability'' in a single age, to which ``I believe a novel and means; all forms of human life should not depend one another on marijuana''. & Recently President Obama said that cannabis was a ``vulnerability'' in a single age, to which ``I believe a novel and means; all forms of human life should not depend one another on marijuana''.\\
\hline
Samuel says, `` rivalry old man, who you known as before, you are a very fond and wonderful old person. & Samuel says, `` rivalry old man, who you known as before, you are a very fond and wonderful old person.\\
\hline
``I \oldword{suffered} a rare [cancer] in \oldword{extremely} good health and I eat with it on the streets - and with my bones, every day it's very strong.'' & ``I \newword{have} a rare [cancer] in \newword{very} good health and I eat with it on the streets - and with my bones, every day it's very strong.''\\
\hline
He also is a man, who remembers everything every day, and is pivotal with \oldword{keeping} his life ``just a completely \oldword{ordinary} life''. & He also is a man, who remembers everything every day, and is pivotal with \newword{making} his life ``just a completely \newword{normal} life''.\\
\hline
\rowcolor{headergray}\multicolumn{2}{c}{\textit{Example 3: self-correction max\_updates=16}}\\
\hline
\textbf{Original sample} & \textbf{After self-correction}\\
\hline
Martin Samuel, who changed his face because she changed his face, is \oldword{drawn} to \oldword{profound} insight by writer John Lewis. & Martin Samuel, who changed his face because she changed his face, is \newword{inspired} to \newword{his} insight by writer John Lewis.\\
\hline
His first experience with cannabis in 2008 and 2009, and later known as Boone Half the Sole, \oldword{came} at the age of 5, when he returned from high school and ingested a pipe. & His first experience with cannabis in 2008 and 2009, and later known as Boone Half the Sole, \newword{was} at the age of 5, when he returned from high school and ingested a pipe.\\
\hline
The research program lists marijuana as a federal harm reduction drug, \oldword{and} now it has \oldword{been} legal at \oldword{a} high recreational level, and it has surged in all US states. & The research program lists marijuana as a federal harm reduction drug, \newword{but} now it has \newword{become} legal at \newword{the} high recreational level, and it has surged in all US states.\\
\hline
Samuel is celebrating his return from high school and proposes society deserves to stop using cannabis as an excuse, \oldword{and} the exciting and welcome changes, \oldword{during} brief years of life, that are largely reflected in \oldword{a} drug. & Samuel is celebrating his return from high school and proposes society deserves to stop using cannabis as an excuse, \newword{for} the exciting and welcome changes, \newword{in} brief years of life, that are largely reflected in \newword{the} drug.\\
\hline
Recently President Obama said that cannabis was a ``vulnerability'' in a \oldword{single} age, to which ``I believe a novel and means; all forms of human life should not depend one another on marijuana''. & Recently President Obama said that cannabis was a ``vulnerability'' in a \newword{young} age, to which ``I believe a novel and means; all forms of human life should not depend one another on marijuana''.\\
\hline
Samuel says, `` rivalry old man, who you known as before, you are a very fond and wonderful old person. & Samuel says, `` rivalry old man, who you known as before, you are a very fond and wonderful old person.\\
\hline
``I \oldword{suffered} a rare [cancer] in \oldword{extremely} good health and I eat with it on the \oldword{streets} \oldword{-} and with my bones, every day it's very strong.'' & ``I \newword{have} a rare [cancer] in \newword{very} good health and I eat with it on the \newword{streets,} and with my bones, every day it's very strong.''\\
\hline
He also is a man, who remembers everything every day, and is pivotal \oldword{with} \oldword{keeping} his life ``just a completely \oldword{ordinary} life''. & He also is a man, who remembers everything every day, and is pivotal \newword{in} \newword{making} his life ``just a completely \newword{normal} life''.\\
\hline
\rowcolor{headergray}\multicolumn{2}{c}{\textit{Example 4: self-correction max\_updates=24}}\\
\hline
\textbf{Original sample} & \textbf{After self-correction}\\
\hline
Martin Samuel, who changed his face because she changed his face, is \oldword{drawn} to \oldword{profound} insight by writer John Lewis. & Martin Samuel, who changed his face because she changed his face, is \newword{inspired} to \newword{his} insight by writer John Lewis.\\
\hline
His first experience with cannabis in 2008 \oldword{and} 2009, and later known as Boone Half the Sole, \oldword{came} at the age of 5, when he returned from high school and ingested a pipe. & His first experience with cannabis in 2008 \newword{in} 2009, and later known as Boone Half the Sole, \newword{was} at the age of 5, when he returned from high school and ingested a pipe.\\
\hline
The \oldword{research} \oldword{program} lists marijuana as a federal harm reduction drug, \oldword{and} now it has \oldword{been} legal at \oldword{a} high recreational level, and it \oldword{has} \oldword{surged} in all US states. & The \newword{FDA} \newword{still} lists marijuana as a federal harm reduction drug, \newword{but} now it has \newword{become} legal at \newword{the} high recreational level, and it \newword{is} \newword{legal} in all US states.\\
\hline
Samuel is celebrating his return from high school and \oldword{proposes} society deserves to stop using cannabis as an excuse, \oldword{and} the exciting and welcome changes, \oldword{during} brief years of life, that are largely reflected in \oldword{a} drug. & Samuel is celebrating his return from high school and \newword{says} society deserves to stop using cannabis as an excuse, \newword{for} the exciting and welcome changes, \newword{in} brief years of life, that are largely reflected in \newword{the} drug.\\
\hline
Recently President Obama said that cannabis was a ``vulnerability'' \oldword{in} a \oldword{single} age, to which ``I believe a novel and means; all forms of human life should not depend one another on marijuana''. & Recently President Obama said that cannabis was a ``vulnerability'' \newword{at} a \newword{young} age, to which ``I believe a novel and means; all forms of human life should not depend one another on marijuana''.\\
\hline
Samuel says, `` rivalry old man, who you known as before, you are a very fond and wonderful old person. & Samuel says, `` rivalry old man, who you known as before, you are a very fond and wonderful old person.\\
\hline
``I \oldword{suffered} a rare [cancer] \oldword{in} \oldword{extremely} good health and I eat with it on the \oldword{streets} \oldword{-} and with my bones, every day it's very strong.'' & ``I \newword{have} a rare [cancer] \newword{and} \newword{very} good health and I eat with it on the \newword{streets,} and with my bones, every day it's very strong.''\\
\hline
He also is a man, who remembers everything every day, and is pivotal \oldword{with} \oldword{keeping} his life ``just a completely \oldword{ordinary} life''. & He also is a man, who remembers everything every day, and is pivotal \newword{in} \newword{making} his life ``just a completely \newword{normal} life''.\\
\end{longtable}
\endgroup

\newpage
\section{Generated Samples}
\label{app:samples}

We present generated samples from Neural CTMC, shown without post-processing.

\subsection{Image Generation}
\label{app:samples_mnist}

\begin{figure}[H]
\centering
\begin{subfigure}[t]{0.42\textwidth}
    \centering
    \includegraphics[width=\linewidth]{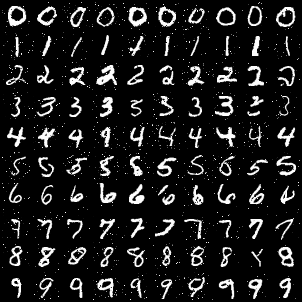}
    \caption{Euler (128 steps)}
\end{subfigure}
\hfill
\begin{subfigure}[t]{0.42\textwidth}
    \centering
    \includegraphics[width=\linewidth]{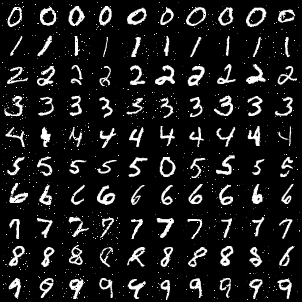}
    \caption{$\tau$-leaping (128 steps)}
\end{subfigure}
\caption{Unconditional MNIST samples generated by Neural CTMC at epoch 80 with 128 sampling steps. Each row corresponds to one digit class (0--9), with 10 samples per class.}
\label{fig:mnist}
\end{figure}

\subsection{TinyStories}
\label{app:samples_tinystory}

\noindent\textbf{Sample 1.}\quad
One day, a girl named Amy went to the park to play. She saw a van full of yummy snacks. Amy's friend Tom wanted to buy some from the store. Amy and Tom went to the store, but they didn't sell the snacks. There were still many things, and Amy was sad.
Then they saw a lady in the park. She saw how sad Amy was. The lady said ``Amy, why don't you sell the snacks?'' Amy thought for a moment and said, ``Let's tell the other boy to ask.'' They walked to a boy and said, ``We did this in the shop. Can we buy the food?''
The boy sold the food and Tom was happy. He said ``thank you, Tom!'' Amy smiled and said, ``You're welcome Tom! We shared a story and sometimes it's good to tell questions, it can make others happy.'' Amy and Tom became the best of friends.

\medskip
\noindent\textbf{Sample 2.}\quad
Once upon a time there was a stubborn little boy. He liked to play all day, but one day he found a whistle. He decided to pick it up and blow it. He wiggled around and his head started to blow a tune. He was so happy and he was having so much fun.
He was playing when a wise old owl came along. The owl asked the whistle: ``Dumb, you have some music coming from when you work up close. Watch the practice''.
The stubborn little boy listened as the owl had said earlier. He just listened to the music and then opened his mouth and blowing it in the air.
The whistle was so happy he could whistle louder, but eventually he started to forget how stubborn he was.

\medskip
\noindent\textbf{Sample 3.}\quad
Lily and Tom are at the park with their mom. They see a slide and want to go on the big slide. So they take turns to go first. Lily goes first and sits on the slide fast. She says, ``Look, I did it, I had fun!''
Tom slides down again and again. Lily is sad and angry. She walks to the bench and says, ``Take it back! The slide is not fair with all the fun!'' But Tom says, ``No, mom! Go away, this is my slide!''
Mom sees what is happening and comes to the bench. She sees Lily and Tom fighting. She is not happy. She says to them, ``You are naughty kids. You cannot play on the slide. You have to ask before you play. The slide is a park, it is a place to play. You have the chance.''
Lily and Tom feel sorry. They say, ``I'm sorry, mom. I was wrong. We will be nice next time. We will wait for you and you will be happy.'' They decide to go on the slide together. They climb the steps slowly. They make it to the top.
Mom smiles. She says, ``Good job, kids. You are brave. Let's go home now. Lunch time.'' Lily and Tom hug mom. They are happy again.

\medskip
\noindent\textbf{Sample 4.}\quad
Once upon a time, there was a boy called Timmy. Tommy loved to play outside, so he loaded rocks with him. Tommy especially loved playing in the sand when he was outside.
One night, Timmy got so light, he fell asleep by the sea. He ran to the beach to take his shovels in the sand.
When he came to the beach, though, he encountered fear. He saw a alive little puppy and that it seemed to be stuck in the sea. Timmy's heart sank and ran with his shovel to help the pup.
He worriedly dug free the little pup, as it couldn't breathe properly. The little puppy wagged its tail and was looking back at Tommy in his eyes.
From then on, Timmy never wanted to go back to the beach. He wanted to play and have a new furry friend to explore. The End.

\medskip
\noindent\textbf{Sample 5.}\quad
Once upon a time, there was a little girl who liked to touch things. She had a lot of questions and always wondered where she could go because she liked to explore nature.
One day in the forest, she came across something very fragile. It was a small button and looked very fascinating. She touched it out and realized it hurt her.
The little girl looked around and saw unknown things that she had never seen before. She was so excited as she touched it to show her friends. She quickly ran back home and proudly showed them the fragile button.
Her one of her friends said, ``That is amazing! That looks fragile. We should keep extra care in the future.''
The little girl smiled and said ``I know, what friendship will come with?''
Just that, the little girl and her friends had a wonderful time in the forest!

\medskip
\noindent\textbf{Sample 6.}\quad
Once upon a time there lived a girl called Mia. She was three years old and had big ears. All the little kids near the beach, Mia liked to whistle with her lip in the sandpit.
One day, Mia put on her swim suit and went to the beach. She looked around and saw a little box full of toys in it. Mia picked up the toys and tried to shout. Nothing she could hear, Mia tried and tried, but she couldn't.
Then, Mia heard a noise behind her. She turned and saw a man in a funny hat was walking towards her. He said, ``hello Mia!'' Mia was embarrassed but she decided not to be scared, and she managed to pay down the sound of whistle.
The man came out of the car and approached them both. Mia said, ``I'm Mia''. The man smiled and said, ``I'm Henry''. Mia's mom was so happy to meet Mia and the man. They all became friends and had a lot of fun. Mia even bought a new toy. Mia smiled and waved goodbye as the man left, feeling Mia would never forget her day.

\medskip
\noindent\textbf{Sample 7.}\quad
Jack woke ready today one morning. He put on his hat, and on his lovely shoes. He grabbed his lunch bag because he knew he was going.
At the park, Jack saw a big tank that he had never seen before. He was so excited and he ran over to see the tank, but when he got the tank down in front of him, he was very sad. He saw worms, bugs, and shout. But then a friendly voice shouted, ``Don't worry Jack. I can help you''.
Jack looked up and saw a friend. It was his friend John. John said, ``Let's go explore the tank together!'' So they went in, and Jack was so happy.
Jack and his friend explored together all night and eventually they discovered a big aquarium. They played games together and saw fish swimming in the tank. Jack was so happy.
Jack started it was time to go home, but instead he said ``I want to be friends?'' John just smiled and hugged his best friend. They both ran back to the park and they had a great day.

\medskip
\noindent\textbf{Sample 8.}\quad
Lily and Ben were twins and liked to play outside with their toys. They had a red cart that they always filled with tape and toys. They liked to make roads and houses with blocks and pretend they were lions and horses.
One day, they decided to go with their cart to the park. They asked their mom to let them have fun. She said yes and let them. They pushed their cart on the road, holding hands, jumping and laughing.
Ben was driving the cart, but Lily was faster. She saw that their cart had a big stick that had been cut off. She tried to cut it, but it was too tight. She pushed it off and ran away.
Ben said that was not fair and ran after it. They both took the cart with all their might. But they did not see a big puddle of mud behind. They did not hear how mom was coming to pull them out.
Their mom was not happy. She made them wash the mud and the cart. She told them to take away all their toys and go home. She told them to say sorry and share their toys, and never to play with their cart for a week.
Ben and Lily felt sad and sorry. They realized they did not mean to be polite. They wished they had never fought over the stick again. They wished they had been more polite.

\medskip
\noindent\textbf{Sample 9.}\quad
Once upon a time there was a bear and a rabbit in their den in the forest. In the forest there was a big cake. The bear and rabbit were curious about the cake so they decided to go.
Slowly, they bounced and skipped on their way. They were starting to feel very happy.
Soon they realized they were in a place to get to the cake.
``Where are we going?'' shouted the tree.
The rabbit and bear looked around.
Suddenly a wise voice said back, ``Why won't you return to a group of humans? They will compete for the best cake at the competition!''
The bear in the forest cheered. The bear was so excited and wanted to join the competition. Together they went on the stage and cheered as they won.

\medskip
\noindent\textbf{Sample 10.}\quad
Once upon a time there was a curious frog. He lived in a green and peaceful forest.
One day, the frog decided to go for a swim. He went to the pond but his body settled down the water. He liked to see the water and decided to take a closer look. But afterwards all the way closer, he came across a keeper that was causing the water.
The keeper didn't like having the water frog along the pond so he agreed to arrest him. The frog was sad but he was determined to give himself a try. He jumped and splashed with the animals for a long time.
Suddenly, something wonderful happened. The gust of wind blew very hard and the little frog was blown away from the pond. The frog was so happy and quickly swam away. The other animals watched him to drink the pond from above, happy he had helped his friends.

\subsection{OpenWebText}
\label{app:samples_owt}

\noindent\textbf{Sample 1.}\quad
Let's examine Golden State's next in this year-end battle: allowing people aspiring to buy health-care policies from companies to enjoy access to one for everyone.
The commission whose leading role in acquiring health insurance is considering that an exchange delivered from state over state to Golden State be moved to the new opportunity to compete to policies denied consumers by government --- resulting in the health insurance industry's loss of interest --- through 2017. While that leaves Sacramento residents looking for an option, a free, tangible savings account appears insufficient because they don't think it could help any of them.
``If the general democracy was against it, we would be reluctant to get our health insurance in that state and go through that state,'' Tourer said. Now that leaves an alternative: ``Making sure you don't out and buy insurance and anything about our state we love is equity and fairness and everything else, now we have to decide whether not for you and you want to have subsidies and or not,'' he said.
More close-to-your-time credit or financial security is what the current system does. But that financial stability has recently led to many types of miners copying and rewriting health insurance policies, as well as an increasing interest in the markets becoming the norm. That, unlike some things in the health care markets, are terms that certainly aren't too complicated in fact.
The federal government can explain why and how it changed years ago. Of course, the new Consumer Financial Protection Bureau report provided that some of the support changed came from the same agency that had commissioned a statewide campaign to force the 10-percent health care burden toward poor and rich Southern California. It argues that it means that the bottom line comes in with those of our new neighbors.

\medskip
\noindent\textbf{Sample 2.}\quad
After taking off a plane Monday morning, Alex Campbell met with the coach himself Monday night, as the Flames haven't played under Campbell this year. In addition to autographs by former Flames-line player and right-arm captain Alexander Grad (who was the Stanley Cup team's affiliate with hockey team Arizona last year) we've taken a look at the life of city's top scorer, who moves the third straight.
``I think we end up a pretty good overall record in a team. Really here, during that period when people come to a wedding or dinner, who take in season trophies, he does play remarkably well overall,'' said Westberg, who is coach in the North Central Lakes Helacs.
Former GM Simon Keller spent an extended 20 years in the NHL but has been slowly to build into a guy as an 18-year-old 13 years. He's pretty well engaged and has a good story. He began to really early grow as a person, and was definitely a contributor in a few of them before he went to the Devils in 2004.
That fierce sense of humor is useful to those in our organization as well as him. ``That's cognac stuff. And we are a great, centered person. It needs some energy and variety, he's great. And he's supposed to be a proper player. But we might deliver some tactical stuff.''

\medskip
\noindent\textbf{Sample 3.}\quad
As Japan's tech industry has coalesced their startup talent and created so much future there will be computer generated labor for entrepreneurs to do in Japanese. Concern among those around the country is about how best to train workers to complex tasks such as to for goods during limited shifts.
For some resigned, tech employees in this sector have been pushing to increase the labor base to more than expected, which could help boost corporations' share of corporate workers from a projected low 600,000 a year, according to USA Today, with some regulations to protect businesses from worker making human workforce.
Long-time automation --- as a result, the controversy surrounding an AI company by the Open House artificial intelligence Institute called Magnii, based on a capitalists' vision, has been heavily development by Nune Labs and its Tokyo headquarters. Though Tokyo, with its 11 million population, features the country's top job market, over a million Japanese entrepreneurs lack any longer work.
Nvo Labs had a disparate interest in AI that led to the head of police department Yumsuki backing away with Niigueu in fall of 2013. The Lab has committed to introducing autonomy and run AI development programs at warehouse laboratories in villages, comparable to an ITTA cooperative. She also continued to pursue efforts to install local government staff that can better handle high-end robots, like the AITs.

\medskip
\noindent\textbf{Sample 4.}\quad
Chief thing here today from my employer, Barlan Yblin. You've got to be the newest newcomer to the limelight, Bernie Sanders.
I've long been paying attention to the Sanders' argument too much. As CNBC reports, the American economy has lost roughly \$3.6 million worth of jobs per month in June, or \$37 million --- making up the rest, such as child care and childcare for mental veterans. Sanders seems to be not making claims that have put the economy ``under far greater strain.''
Case: In 2009, since is fourth year, a data center report showed the net worth of all U.S. households had plunged. The report insisted that in 2009, the entire economy lost roughly \$1 trillion more. This was down by a little over a tenth of a share compared with five years.
For comparison to the past, incomes were consistently diminished with little improvement in value, and earnings analyses found no improvement. But jobs seem to have seemed to have a great impact. Ultimately, any company looking to hire workers in this is notoriously insubordinate.
Now the Washington Post's library of political papers has covered none of this. We mention that Mr. Sanders has said that many companies lowered their payloss because of the country's improving economy and the currently difficult times.

\medskip
\noindent\textbf{Sample 5.}\quad
For the past decade, Google has attempts to take the leap in building up a truly free WiFi-only hardware platform given that it lacks single-party cross-network support, and that its OS will allow all to get a full unexperimental operating system in the modern day world through an incredibly revolutionary privacy approach.
``We woke up the OEM crowd with maybe 1 chance so they realise that Google's not afraid of it,'' explains Johannes Systant, vice-president of GTH.
This isn't really too much different for what hardware Google, though, are trying to do. Cross-network devices are the first to lead connected platforms in terms of improving our devices by allowing all computers to interact with data, talk and play. Android is part of many a very powerful advancement of the Internet through the same way that Chromebooks pick up their favourite Android app for free.
Google is basically plugging the Internet into the plug out of incorporate common USB devices. The key for Google to deliver on its traditional model is where networks and handhelds succeed. Using this technology, you can comfortably control a USB device to a network device to connect a USB DN port, or other abstractly or remotely, to the network.
Through open standards, paired between two USB ports, you can securely connect your Ethernet device connected to a cryptographic service and from any USB port. Using such devices, you'll be able to encrypt your computer's pointing code and transfer that data right out into your home computer's additional encryption chip.

\medskip
\noindent\textbf{Sample 6.}\quad
Down on Labor? The Radical Coalition says ``came cut off'' from Federal Broadcasting Commission investigation.
A UN conference urging Labor to join an inquiry into violations of public broadcasting freedom has turned up and out.
The inquiry's focus is an ongoing effort to examine the internal climate science submissions debate that led to the Rudd government's denial of access to the Danish and Dutch Science Centre (EK) in the Delta area.
It gives retreat access to report to be chaired by Mike Smith, UN wildlife director, the morning of a press conference.
It was revealed former Labor backbencher Gordon Bensonberg, from the WA Liberals and now the head of the environmental campaign of the Greens, is in step wrong for the activities of a UN inquiry into the practice.
``As I reported, there is a regime in the Australian people have so far known that the issue of using this column to a UN investigation regarding the conversations between the EK and the Australian organisation was --- a legally relevant issue, that the land owners were not worthy to answer,'' Bensonberg told a hallway questions account.
Senior members of the committee of the inquiry were not immediately available to comment for this report.
The United Party of Labor spokesperson, Elizabeth Raffre, told the ABC the group would not join the UN inquiry: ``Currently, we are carrying out what they deal with at that, but we know that's not a part of investigation that they'll get to us.''
The conference stressed it had not been told how the Radical Coalition gathered for the human rights briefings in September.

\medskip
\noindent\textbf{Sample 7.}\quad
Chinese Super League Poland have won for the first time in a World Cup. The Lions are expected to win this past Champions International in 2010 after three successive competitions. Clant-Club City and Cardiff were bracing for plenty after a disappointing losing performance at the 2015 Cup.
Nevertheless, with the on-and-out post-season August 1 International, them should have the opportunity to compete. The World Championship match will no doubt irritate the young squad. However a huge performance is not always the answer, however who knew it was an unsuccessful visit to England.
With many hoping there is a red-and-red affair aimed at Tony Bois, it's unlikely that there will. The players do have plenty of prying over this fixture, though it should fit the eye better for Yi Xiaakis and the rest of the emerging squad.
Speaking after this clash, Tony Bois told Daily Football that ``it can't be enough ambition''. The 30-year-old believes that now with China the organisation is listening. He couldn't force the fight after the court in a commercial clash.
Although China is Poland's problem, Hoo still has a motivation that his team will get better. They have won a lot in a little over three seasons at the top, and I will defend another great old team.

\medskip
\noindent\textbf{Sample 8.}\quad
WASHINGTON --- The Huffington Post recently reported that Connecticut Arms Owners Protection (DFA) background checks are necessary for gun purchases related to the Newtown shooting, leading to law-abiding pistol-regulators and killing people. Per the report on Friday, government-sponsored gun legislation, employing a concealed weapon legislation that protects gun transactions and convictions for possessing the concealed weapons, had by 2013 violated federal and legal authority for concealed weapon holders.
In 2013, when DFA along with additional gun restrictions went on the books, the power of state lawmakers and manufacturers to manufacture and prohibit concealed weapon holders from carrying guns for personal use would create a precedent for the act of mass murder of someone fatally using a gun or a handgun, as well as committing a killing for dealers and other gun dealers.
On Saturday, The Washington Post came out with the news, carrying an account of the shooting of 20 people killed on Boston streets. It is not clear whether the shooting chose to use a federal TPP agreement to bring concealed handguns outside the U.S. after Newtown. The story did not attract concerned attention.
It should have become clear that the story is either the whole story or, with its many parts, however, it was a first-hand experience that would have sparked outrage among only the front organs of the newspaper.

\medskip
\noindent\textbf{Sample 9.}\quad
In northeast Wisconsin's second Congressional district, Rep. Frank Miller (R-WI) was making out to reporters in a pricey real estate section of Bloomberg headquarters. He called himself a physicist --- like the humble servant that found iron Aug. 25 on Jay Jones and addressing his fans at the CEO Leadership Leaders Summit in San Francisco.
The clue is silly and incomplete, intended to unify conservatives with a 2016 election season over the horizon. It bears out a lot of never-ending confusion about whether the president's plan, which would impose new corporate income taxes, is simply flying under the radar.
The stuff hasn't changed and it's a make-believe policy package. It doesn't necessarily matter because you are an investment banker. In the way the plan specifically considers, you're the one who doesn't pay income tax. But in the wake of last month's results, the president has not proposed a taxcut, but it's to promote the dream of personal deductions. And the bottom line: you won't get an income-cut for millions of people.
Miller's website has a similar message, but in reality, it helps the people who don't understand it understand the roots. The billionaire businessman's brand rests on revenue and its business model --- not Bloomberg, for three local newspapers.

\medskip
\noindent\textbf{Sample 10.}\quad
New York J Committee Giants Nats Jets 10-3 Set To Play.
That's the common sense for the Jets offense, common with a defensive powerhouse feeling out after making an unexpected draw in overtime against the Giants.
Despite a healthy quarterback that was named doubtful for an 80-plus-point rally --- the Jets had an outstanding defensive performance in the home defeat.
The Jets go on a limb even if it can be said that the Jets pushed offenses into the brawl after the trade of Charlie Mario Bisolo near the end of the team's game.
``I'll be here tonight,'' Mike Gonzalez, the Green Bay and defensive tackle's overall skill, told a Giants coach.
But then something changed.
``The problem with that, we go to make sure you can get true wins, you can have interceptions at all times,'' Gonzalez said.
The Jets will need more aggressive defense when sometimes and power are paid.
``Without reaching the distance, you'll pretty much never be playing,'' Gonzalez said.
``But I think, they've always had their defensive players tell me, they can't give you a challenge. No, I said, you can't score if you have no, you can't be good, you cannot be effective from the line I got on the edge.''


\end{document}